\documentclass[sigconf,nonacm,pbalance]{acmart}
\usepackage{placeins}
\usepackage{pdflscape}

\usepackage{iftex}
\RequireXeTeX

\usepackage{amsmath}
\usepackage{amssymb}
\usepackage{array}
\usepackage{colortbl}
\microtypesetup{expansion=false}
\usepackage{etoolbox}
\AtBeginEnvironment{tabular}{\addfontfeatures{Numbers={Lining,Monospaced}}}
\AtBeginEnvironment{tabular*}{\addfontfeatures{Numbers={Lining,Monospaced}}}
\makeatletter
\apptocmd{\normalsize}{%
  \advance\baselineskip 0pt plus 0.6pt\relax
  \abovedisplayskip=5pt plus 1pt minus 3pt\relax
  \belowdisplayskip=5pt plus 1pt minus 3pt\relax
  \abovedisplayshortskip=1pt plus 1pt minus 1pt\relax
  \belowdisplayshortskip=3pt plus 1pt minus 2pt\relax}{}{}
\makeatother
\title[]{ScienceClaw: Benchmarking Continual Self-Evolution of
  AI-for-Science Agents Across the Natural and Social Sciences}

\author{Mingda Zhang}
\affiliation{%
  \institution{The Chinese University of Hong Kong, Shenzhen}
  \city{Shenzhen}
  \country{China}}
\email{mingdazhang@ieee.org}

\author{Wenjin Liu}
\affiliation{%
  \institution{Nanyang Technological University}
  \city{Singapore}
  \country{Singapore}}
\email{wenjinliu23@outlook.com}

\author{Tiesunlong Shen}
\authornote{Corresponding authors.}
\affiliation{%
  \institution{Centre for Biomedical Data Science, Duke-NUS Medical School}
  \city{Singapore}
  \country{Singapore}}
\email{tiesunlong@nus.edu.sg}

\author{Zikai Xiao}
\affiliation{%
  \institution{Zhejiang University}
  \city{Hangzhou}
  \country{China}}
\email{zikai@zju.edu.cn}

\author{Zhenghong Lin}
\affiliation{%
  \institution{Nanyang Technological University}
  \city{Singapore}
  \country{Singapore}}
\email{hongzhenglin970323@gmail.com}

\author{Qing Xu}
\affiliation{%
  \institution{Jurisprudence Research Association, China Law Society}
  \city{Beijing}
  \country{China}}
\email{20180090@ynu.edu.cn}

\author{Erik Cambria}
\affiliation{%
  \institution{Nanyang Technological University}
  \city{Singapore}
  \country{Singapore}}
\email{cambria@ntu.edu.sg}

\author{Xiaoying Tang}
\affiliation{%
  \institution{The Chinese University of Hong Kong, Shenzhen}
  \city{Shenzhen}
  \country{China}}
\email{tangxiaoying@cuhk.edu.cn}

\author{Haoran Luo}
\authornotemark[1]
\affiliation{%
  \institution{Nanyang Technological University}
  \city{Singapore}
  \country{Singapore}}
\email{haoran.luo@ntu.edu.sg}

\renewcommand{\shortauthors}{Zhang et al.}

\begin{document}

\begin{abstract}
Large language model agents are accelerating scientific automation, yet
verified executions rarely become persistent program-level improvements, and
existing evaluations do not examine this process across sequential tasks in
both the natural and social sciences. We formalize ScienceClaw as
fixed-parameter program self-evolution that unifies task solving, scientific
verification, and program updates. ScienceClaw-Eval spans 23 disciplines and
measures scientific correctness, evolutionary gain, retention, cross-dataset
transfer, and evolution cost through sequential streams and independent reset
evaluation. Our framework repairs executable workflows through multi-turn
interaction, converts re-execution-verified failure--success trajectories into
linked Skill and Operator candidates, and retains an update only when
source-task replay reproduces the repair and independent scientific tasks
improve. Code is available at \url{https://github.com/beita6969/ScienceClaw}.
\end{abstract}

\ccsdesc[500]{Computing methodologies~Intelligent agents}
\ccsdesc[300]{Applied computing~Physical sciences and engineering}
\ccsdesc[300]{Applied computing~Law, social and behavioral sciences}

\keywords{AI for Science, agent self-evolution, multi-turn interactive
  scientific workflows, Skill evolution, Operator evolution}

\maketitle

\begin{figure}[t]
  \vspace*{16.8pt}
  \centering
  \includegraphics[width=\columnwidth]{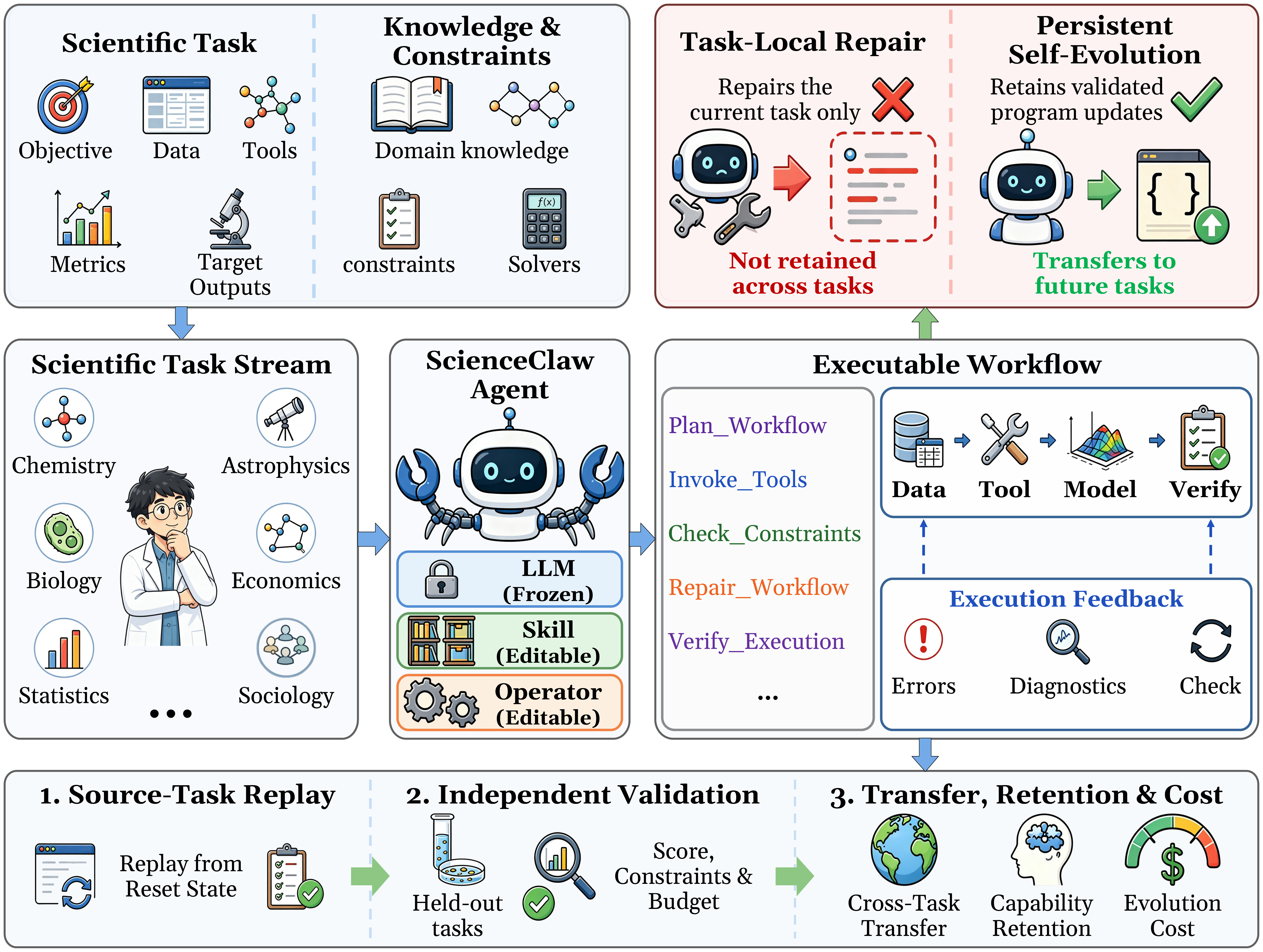}
  \caption{Task formulation of verifiable program-level self-evolution for
  AI-for-Science agents.}
  \Description{Scientific tasks and domain constraints form sequential task
  streams. ScienceClaw uses a fixed foundation model with editable Skills and
  typed Operators to construct executable workflows, and evaluates repairs by
  source replay, independent validation, transfer, retention, and cost.}
  \label{fig:task-formulation}
\end{figure}

\begin{figure*}[t]
  \centering
  \includegraphics[width=\textwidth]{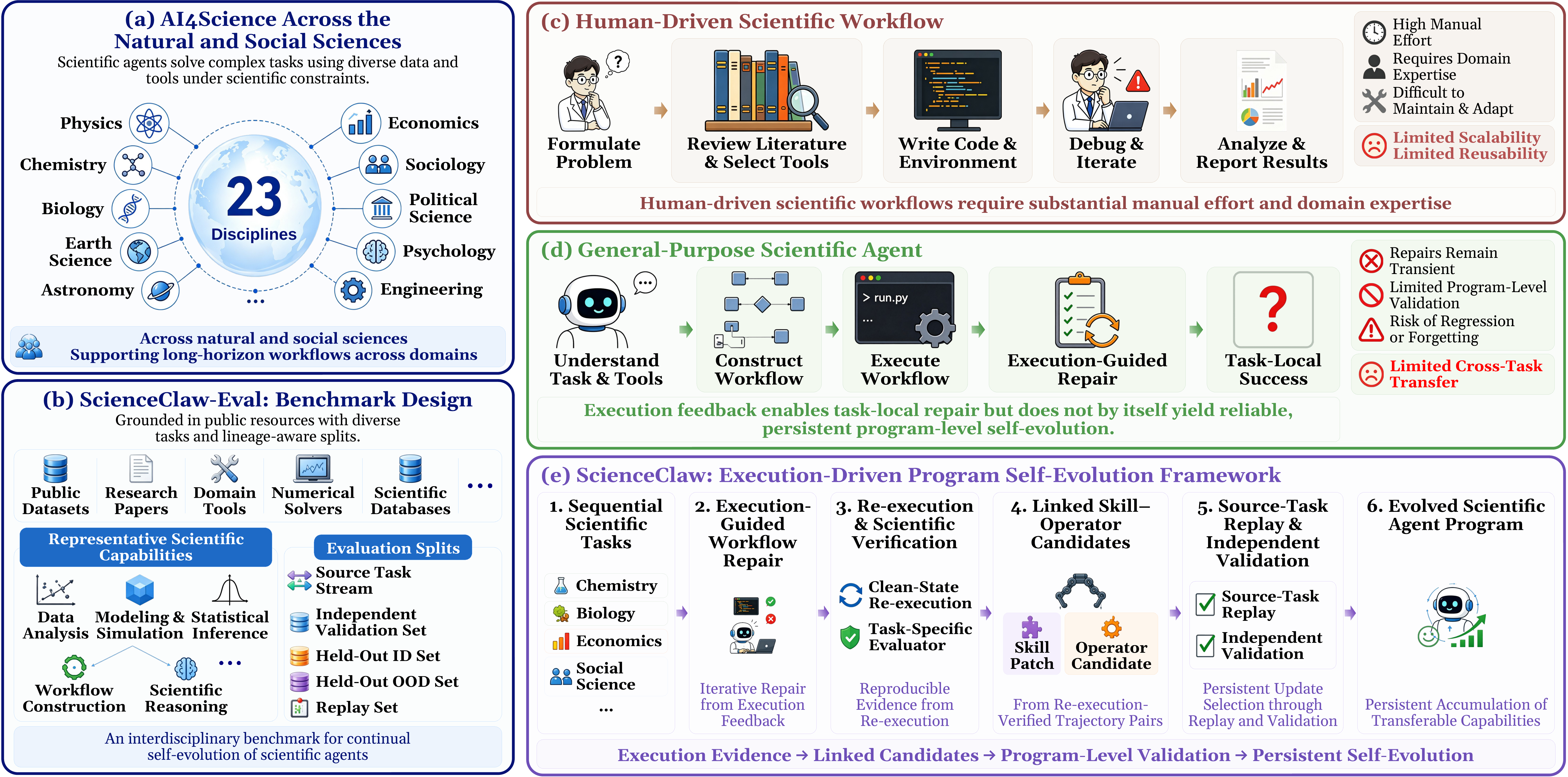}
  \caption{Overview of ScienceClaw. ScienceClaw-Eval spans 23 disciplines
  across the natural and social sciences, while ScienceClaw turns verified
  execution evidence into persistent Skill--Operator program updates.}
  \Description{Five panels show the 23-discipline AI-for-Science scope,
  ScienceClaw-Eval benchmark design, limitations of human-driven scientific
  workflows and general-purpose scientific agents, and the ScienceClaw
  framework for persistent program self-evolution.}
  \label{fig:scienceclaw-overview}
\end{figure*}

\section{Introduction}
\label{sec:introduction}

Large language model agents are increasingly automating scientific problem solving~\cite{bran2023chemcrow,lu2024ai}. As illustrated in Figure~\ref{fig:task-formulation}, they connect
natural-language reasoning with data, domain tools, and executable computation
to construct end-to-end workflows~\cite{yao2022react,wang2024executable}. AI4Science tasks stress this capability
through heterogeneous dependencies, long-horizon interactions, and stringent
constraints, yet also expose runtime states, numerical diagnostics, and
constraint checks that verify intermediate decisions. Scientific agents
therefore offer a flexible cross-domain interface beyond fixed scripts and
rules. Despite this promise, scientific agents remain limited in
continual self-evolution, since repairs rarely persist.

Training-based methods improve workflow orchestration and error repair through
supervised fine-tuning, reinforcement learning, or separate task optimization,
but acquire most improvements before
deployment~\cite{shen2026sciagentgym,zhang2026flowsteer,zhao2026a2flow,agrawal2026gepa}. Training-free methods instead update tools, Skills, control
logic, or agent harnesses from execution trajectories
~\cite{lu2026beyond,yang2026skillopt,ni2026trace2skill,ling2026pace,lin2026agentic,shinn2023reflexion,wang2023voyager,machine2025open}.
Scientific efforts, however, remain concentrated on individual tools or
Skills~\cite{lu2026beyond,team2026s1}, while program-level evolution is evaluated
mainly in general reasoning, coding, or system-operation environments. No
unified setting therefore couples in-task verification of executability and
scientific validity, linked Skill--Operator updates, and independent
program-level evaluation. Scientific-agent and sequential self-evolution benchmarks do not
cover constrained task streams spanning the natural and social
sciences~\cite{shen2026sciagentgym,liu2026benchmarking,xu2026researchclawbench,jiang2026sea,zheng2026seagym,alizadeh2026ai}. Thus, task- or component-level gains do not establish
reliable and transferable scientific self-evolution across disciplines and rounds.

This raises a central question: \textbf{How can verified scientific executions
drive persistent and transferable program-level self-evolution without
updating foundation-model parameters?} We answer it with ScienceClaw, a
verifiable AI4Science framework connecting multi-turn workflow orchestration
and program-level self-evolution. ScienceClaw represents the editable program
as Skills and typed Operators, repairs workflows through execution feedback,
and derives linked candidates from re-execution-verified failure--success
trajectories. A candidate persists only if source-task replay reproduces the
repair and independent scientific tasks improve.
Figure~\ref{fig:scienceclaw-overview} summarizes this process. In summary, our three main
contributions are as follows:

\begin{itemize}
  \item \textbf{Novel ScienceClaw Task.} We formalize program-level
  self-evolution over sequential AI4Science problems, turning verified
  executions into persistent and transferable updates without modifying
  foundation-model parameters.

  \item \textbf{ScienceClaw-Eval.} We construct a 23-discipline benchmark
  across the natural and social sciences with sequential task streams and
  independent reset evaluations.

  \item \textbf{ScienceClaw Framework.} We propose an execution-driven
  framework coupling multi-turn workflow repair with independently evaluated
  Skill--Operator updates, allowing scientific capabilities to accumulate
  across tasks and disciplines without model training.
\end{itemize}

\begin{figure*}[t]
  \centering
  \includegraphics[width=\textwidth]{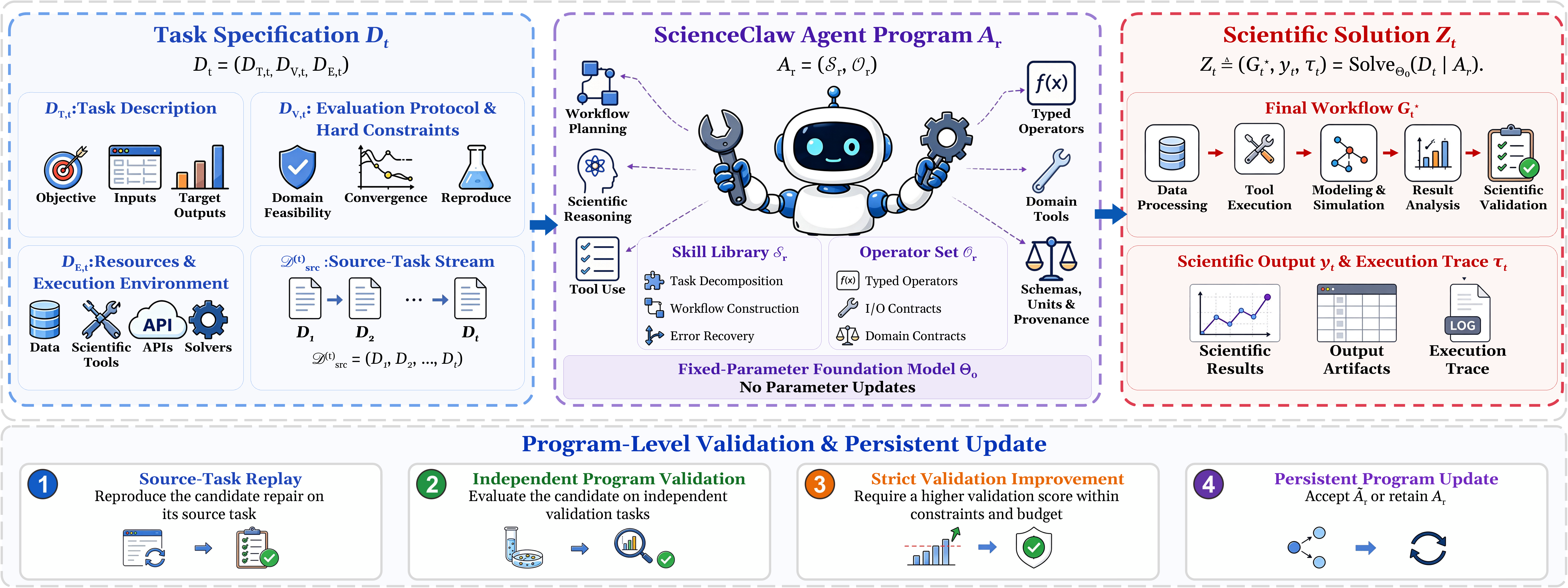}
  \caption{Definition and overview of the ScienceClaw task. Given task
  specification $D_t$ and agent program $A_r$, ScienceClaw produces scientific
  solution $Z_t$ and retains a candidate update only after source-task replay
  and independent program validation.}
  \Description{A source-task stream supplies structured scientific-task
  specifications to a fixed-parameter agent with editable Skills and typed
  Operators. The agent constructs scientific solution $Z_t$, and a candidate
  program is retained only after source-task replay, independent validation,
  strict improvement, and a persistent-update decision.}
  \label{fig:task-overview}
\end{figure*}

\begin{figure*}[t]
  \centering
  \includegraphics[width=\textwidth]{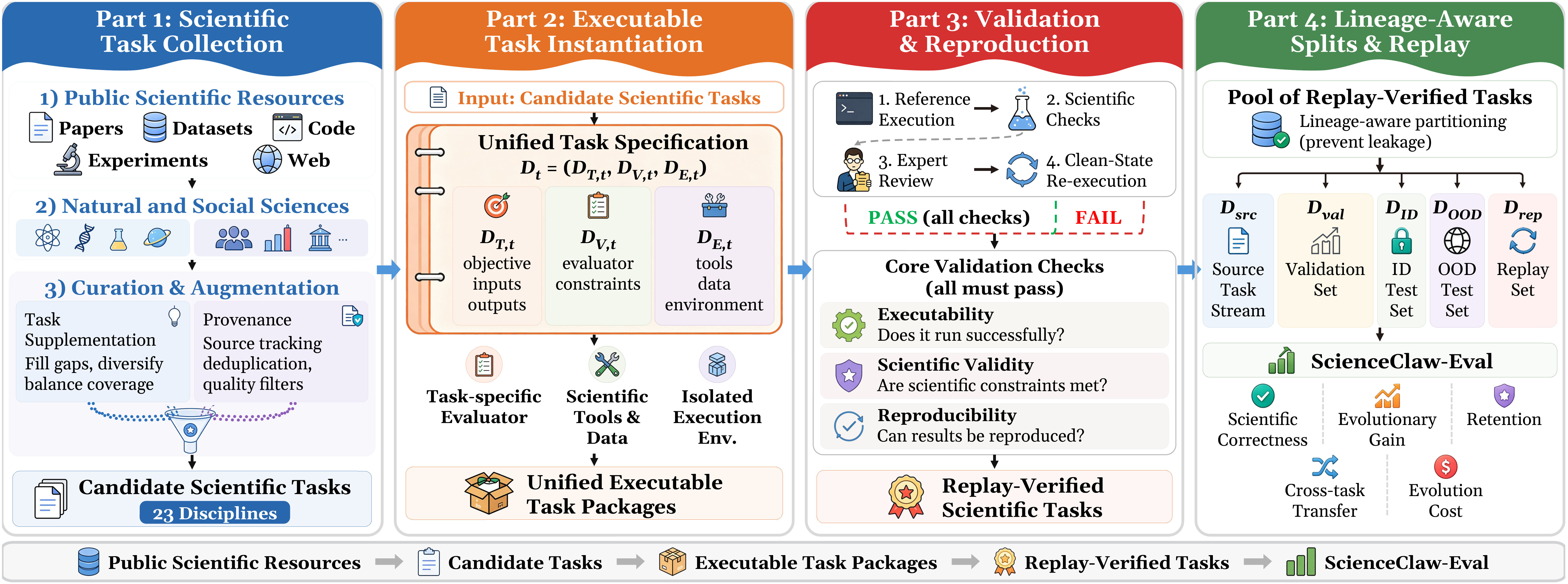}
  \caption{Construction of ScienceClaw-Eval, from scientific-task collection
  and executable instantiation to scientific validation, reproducibility
  checking, and lineage-aware evaluation splits.}
  \Description{A four-stage pipeline collects natural- and social-science
  tasks, packages each task with an evaluator and isolated environment,
  verifies executability, scientific validity, and reproducibility, and creates
  source, validation, in-distribution, out-of-distribution, and replay splits.}
  \label{fig:benchmark-construction}
\end{figure*}

\section{Related Work}
\label{sec:related-work}

\textbf{Scientific Agents and Evaluation.} General-purpose LLM agents continue
to exhibit limited reliability and generalization in specialized scientific
tasks, motivating domain-specific tools, interactive execution environments,
and scientific-agent benchmarks~\cite{shen2026sciagentgym,alizadeh2026ai,tian2024scicode,chen2025scienceagentbench}.
Evaluation has progressed from static question answering to executable
workflows that connect scientific data, domain software, and code execution
~\cite{shen2026sciagentgym,lupidi2026airs}. Interactive environments assess multi-step tool use and error recovery~\cite{wang2022scienceworld}, while more recent benchmarks extend
evaluation to the end-to-end research lifecycle, including idea generation,
experiment analysis, and iterative refinement using real papers and data
~\cite{shen2026sciagentgym,liu2026benchmarking,lupidi2026airs,xu2026researchclawbench,bragg2026astabench}. Discipline-specific benchmarks further examine the
computational reproducibility of social-science findings and the physical
fidelity of scientific programs~\cite{alizadeh2026ai,zhang2026orbit}.
ScienceClaw-Eval instead assesses whether agents accumulate and
retain capabilities across tasks.

\textbf{Workflow Orchestration and Agent Self-Evolution.} Early LLM agents
mainly relied on hand-crafted prompts, static toolsets, or predefined workflows,
with tool interfaces and control flow typically fixed before deployment
~\cite{yao2022react,hong2024metagpt,wu2023autogen}. Automated workflow
optimization subsequently introduced execution-guided search and progressive
graph editing, while later methods abstract reusable Operators or distill
Skills from execution trajectories
~\cite{zhang2026flowsteer,zhao2026a2flow,yang2026skillopt,ni2026trace2skill,ding2026atomic}. More recent work evolves tools, Skills, control logic, and agent
harnesses without updating foundation-model parameters
~\cite{lu2026beyond,team2026s1,ling2026pace,lin2026agentic,shinn2023reflexion,wang2023voyager,machine2025open}. In parallel, sequential
evaluation frameworks measure evolutionary gain, cross-task transfer,
retention, and forgetting~\cite{jiang2026sea,zheng2026seagym,lopez2017gradient}. Although
scientific agents begin to evolve tools or Skills, program-level
self-evolution has not been evaluated on executable task streams
spanning the natural and social sciences, the gap ScienceClaw-Eval targets.

\section{Task: ScienceClaw}
\label{sec:task}

ScienceClaw formalizes continual program-level self-evolution over sequential
AI4Science problems while keeping foundation-model parameters fixed. As shown
in Figure~\ref{fig:task-overview}, verified evidence from long-horizon tool
interaction is converted from transient task context into persistent program
updates.

Let $D_t=(D_{T,t},D_{V,t},D_{E,t})$ denote task $t$. Here, $D_{T,t}$
specifies its objective, inputs, initial conditions, and target outputs;
$D_{V,t}$ defines the evaluation protocol and hard constraints, including unit
consistency, domain feasibility, convergence, and reproducibility; and
$D_{E,t}$ provides data, scientific tools, solvers, simulators, and the
execution environment. At round $r$, the editable program is
$A_r=(\mathcal S_r,\mathcal O_r)$, where $\mathcal S_r$ contains Skills for
decomposition, workflow construction, and recovery, and $\mathcal O_r$
contains typed executable Operators with explicit input, output, and domain
contracts. Conditioned on $D_t$ and $A_r$, a fixed-parameter foundation model
constructs and executes
\begin{equation}
  Z_t \triangleq
  \left(G_t^{\star},y_t,\tau_t\right)
  =\operatorname{Solve}_{\Theta_0}\!\left(D_t\mid A_r\right).
  \label{eq:task-solving}
\end{equation}
The scientific solution $Z_t$ comprises the final workflow $G_t^{\star}$,
replayed from a reset task state, its output $y_t$, and execution trace
$\tau_t$. A task-specific evaluator checks both $y_t$ and $\tau_t$ against
$D_{V,t}$, making validity depend on the result and its computation.

Let $\mathcal E_r$ be the verified evidence accumulated through round $r$ and
$\mathcal P(\mathcal E_r)$ its induced candidate programs. Feasibility and
selection are defined by
\begin{equation}
\begin{aligned}
\mathcal F_r
  &=\left\{A\in\mathcal P(\mathcal E_r)\;\middle|\;
    \begin{gathered}
      R_{\mathrm{src}}(A)=1,\quad
      \mathbf H_{\mathrm{val}}(A)=\mathbf 1,\\
      \mathbf C_{\mathrm{val}}(A)\preceq\mathbf B
    \end{gathered}\right\},\\
\widetilde A_r
  &\in\underset{A\in\mathcal F_r}{\arg\max}\;Q_{\mathrm{val}}(A),
    \qquad \mathcal F_r\neq\varnothing.
\end{aligned}
  \label{eq:candidate-selection}
\end{equation}
The persistent program update is then
\begin{equation}
\begin{aligned}
A_{r+1}
  &=\begin{cases}
    \widetilde A_r,
      & \mathcal F_r\neq\varnothing\ \land\
        Q_{\mathrm{val}}(\widetilde A_r)>Q_{\mathrm{val}}(A_r),\\
    A_r, & \text{otherwise},
  \end{cases}\\
\Theta_{r+1}&=\Theta_r=\Theta_0.
\end{aligned}
  \label{eq:program-update}
\end{equation}
Here, $\mathcal F_r$ is the feasible set; $R_{\mathrm{src}}(A)=1$ requires
source-task reproduction of every new repair; and $Q_{\mathrm{val}}(A)$ is the
aggregate scientific score on independent validation tasks.
$\mathbf H_{\mathrm{val}}(A)$ collects non-negotiable execution and scientific
integrity checks, with $\mathbf 1$ the all-ones vector. Unsolved tasks lower
$Q_{\mathrm{val}}$, while hard-constraint violations make
$\mathbf H_{\mathrm{val}}(A)\ne\mathbf 1$.
$\mathbf C_{\mathrm{val}}(A)$ and $\mathbf B$ are the resource-cost vector and
component-wise budget; $\preceq$ denotes component-wise comparison.
Equation~(\ref{eq:program-update}) keeps $\Theta_0$ fixed and retains the
incumbent unless a feasible candidate strictly improves validation performance.

\section{Benchmark: ScienceClaw-Eval}
\label{sec:benchmark}

ScienceClaw-Eval benchmarks continual scientific-agent self-evolution with 23
research tasks across 23 natural- and social-science disciplines. It combines
sequential task processing with independent reset evaluation. Systems share the
orchestration model, initial program $A_0$, source order, tools, and budget. We
describe its construction, protocol, and metrics (Appendix~\ref{app:bench-spec}).

\subsection{ScienceClaw-Eval Construction}
\label{sec:benchmark-construction}

Figure~\ref{fig:benchmark-construction} shows four stages: task collection,
executable instantiation, validation, and evaluation partitioning. We collect
tasks with explicit objectives, accessible inputs, and verifiable outcomes from
public scientific resources, and supplement them to improve disciplinary
coverage, workflow diversity, and horizon length while preserving provenance.
Each task is instantiated as $D_t$ with an isolated environment and a
task-specific evaluator. Domain reviewers inspect evaluator logic, constraints,
and reference executions; reset replay then verifies executability, scientific
validity, and reproducibility. Provenance and construction lineage~\cite{wilkinson2016fair} determine
the source stream $\mathcal D_{\mathrm{src}}$, independent validation set
$\mathcal D_{\mathrm{val}}$, held-out ID set $\mathcal D_{\mathrm{ID}}$, and
same-discipline cross-dataset OOD set $\mathcal D_{\mathrm{OOD}}$~\cite{koh2021wilds}.
$\mathcal D_{\mathrm{rep}}$ contains frozen source-task instances encountered
before each evaluation point, measuring retention without supplying new
evolution evidence. Each task record fixes its source, partition rule,
evaluator, and statistics before experiments begin.

\subsection{ScienceClaw-Eval: Evaluation Design}
\label{sec:benchmark-evaluation}

Only source trajectories and evaluator feedback supply evolution evidence;
$\mathcal D_{\mathrm{val}}$ serves candidate selection. At each evaluation
point, $A_r$ is frozen, transient state is cleared, and the snapshot is
evaluated on $\mathcal D_{\mathrm{ID}}$, $\mathcal D_{\mathrm{OOD}}$, and
$\mathcal D_{\mathrm{rep}}$.

For instance $i$, the evaluator jointly examines output and trace.
$z_i(A_r)=1$ only when execution completes within budget, task-native metrics
meet their acceptance rule (Appendix~\ref{app:bench-evaluator}), and every scientific hard constraint holds;
otherwise $z_i(A_r)=0$. We macro-average discipline-level success:
\begin{equation}
  \operatorname{MacroSR}(A_r)
  =\frac{1}{23}\sum_{d=1}^{23}
    \frac{1}{|\mathcal I_d|}
    \sum_{i\in\mathcal I_d} z_i(A_r),
  \label{eq:macro-success-rate}
\end{equation}
where $\mathcal I_d$ contains instances in discipline $d$.
$Q_{\mathrm{val}}$ in Eq.~(\ref{eq:candidate-selection}) is MacroSR on
$\mathcal D_{\mathrm{val}}$. Changes from $A_0$ on
$\mathcal D_{\mathrm{ID}}$ and $\mathcal D_{\mathrm{OOD}}$ measure in-benchmark
gain and same-discipline cross-dataset transfer; $\mathcal D_{\mathrm{rep}}$
measures retention against the snapshot immediately after each source task.
We also report
natural- and social-science success, task-native metrics, multi-turn repair,
long-horizon completion, and execution-cost diagnostics. Evaluators,
partitions, and executable configurations remain frozen across methods.

\def\ScienceClawSectionOnly{1}
\ifdefined\ScienceClawSectionOnly
\let\ScienceClawEnd\relax
\else
\documentclass[sigconf,nonacm]{acmart}

\usepackage{amsmath}
\ifPDFTeX
  \usepackage{amssymb}
\fi
\microtypesetup{expansion=false}
\setlength{\emergencystretch}{1em}


\title[ScienceClaw]{ScienceClaw: Benchmarking Continual Self-Evolution of
  AI-for-Science Agents Across the Natural and Social Sciences}

\begin{document}

\begin{abstract}
This file contains the current English draft of the ScienceClaw methodology.
The complete abstract will be added after the benchmark and experiments are finalized.
\end{abstract}

\keywords{AI for Science, agent self-evolution, multi-turn interactive
  scientific workflows, Skill evolution, Operator evolution}

\maketitle

\setcounter{section}{4}
\setcounter{equation}{4}
\def\ScienceClawEnd{\end{document}}
\fi

\section{Methodology: ScienceClaw}
\label{sec:method}

Scientific workflow failures often surface only when heterogeneous data,
domain software, and numerical procedures execute. ScienceClaw turns
reproducible repairs into linked Skill--Operator updates through
execution-guided orchestration (Section~\ref{sec:workflow}) and program
self-evolution (Section~\ref{sec:self-evolution}), with foundation-model
parameters fixed. A repair persists only after it reproduces on its source task
and improves the complete program on independent tasks, rather than on a single execution trace
(Appendix~\ref{app:impl}).

\subsection{Execution-Guided Scientific Workflow Orchestration}
\label{sec:workflow}

\paragraph{Typed scientific workflow representation.}
ScienceClaw represents the workflow of task $t$ after interaction step $k$ as
a directed graph with port-level data dependencies:
\begin{equation}
\begin{aligned}
G_{t,k}
  &=\bigl(N_{t,k},E_{t,k},\psi_{t,k}\bigr),\\
\psi_{t,k}(v)
  &=\bigl(o_v,\eta_v\bigr),\quad v\in N_{t,k},\\
\bigl((u,p),(v,q)\bigr)\in E_{t,k}
  &\Longrightarrow
  \operatorname{Compat}_{D_t}\!\left(
    \Gamma_{u,p}^{\mathrm{out}},
    \Gamma_{v,q}^{\mathrm{in}}
  \right)=1.
\end{aligned}
\label{eq:typed-workflow}
\end{equation}
$N_{t,k}$ contains Operators and task tools; $\psi_{t,k}$ maps node $v$ to
executable $o_v$ and configuration $\eta_v$; and an edge links output port $p$
of $u$ to input port $q$ of $v$. Schema
$\Gamma=(\mathrm{type},\mathrm{shape},\mathrm{unit},\mathrm{provenance})$
records port metadata. $\operatorname{Compat}_{D_t}$ requires matching type and
shape and explicit, provenance-recorded unit conversion; the same schemas
define Operator boundaries below. Units are required when applicable, and
provenance records upstream data and the transformations applied along every edge.

\paragraph{Execution-guided orchestration.}
ScienceClaw constructs $G_{t,k}$ through atomic edits after tool execution
~\cite{zhang2026flowsteer,wang2024executable}. Let
$\mathcal R^{\mathcal S}_{t,r}=\operatorname{Retrieve}(D_t;\mathcal S_r)$ and
$\mathcal R^{\mathcal O}_{t,r}=\operatorname{Retrieve}(D_t;\mathcal O_r)$ be
retrieved Skills and Operators, $\mathcal T_t$ the tools in $D_{E,t}$, and
$\mathcal R_{t,r}=(\mathcal R^{\mathcal S}_{t,r},
\mathcal R^{\mathcal O}_{t,r}\cup\mathcal T_t)$ their execution context. Given
history $H_{t,k}$ and fixed $\Theta_0$, the policy samples
\begin{equation}
  (a_{t,k},\nu_{t,k})
  \sim\pi_{\Theta_0}\!\left(
    \cdot\mid D_t,G_{t,k},H_{t,k},\mathcal R_{t,r}
  \right).
\label{eq:edit-policy}
\end{equation}
Here, $\pi_{\Theta_0}$ is fixed; $a_{t,k}$ adds, removes, or modifies a node,
dependency, or configuration, and $\nu_{t,k}$ records contributing Skill and
Operator identifiers. Checkpoint $\chi_{t,k}$ stores completed outputs and
runtime state; execution proceeds as
\begin{equation}
\bigl(G_{t,k+1},\chi_{t,k+1},f_{t,k}\bigr)
  =\operatorname{Execute}_{D_{E,t}}
   \bigl(G_{t,k},\chi_{t,k},a_{t,k}\bigr).
\label{eq:workflow-transition}
\end{equation}
$H_{t,k+1}$ appends the edit and execution record. The executor checks schemas
and reruns only affected descendants; $f_{t,k}$ records runtime and numerical
diagnostics, $\nu_{t,k}$ attributes reproduced repairs, and checkpointing limits
long-horizon recomputation. Scientific validity is assessed by reset replay
below.

\paragraph{Reproducible execution evidence.}
Because checkpoints may retain transient state, candidate evidence is
regenerated in reset environment $D_{E,t}$ under the replication protocol and
evaluator in $D_{V,t}$:
\begin{equation}
\begin{aligned}
(y_{t,k},\tau_{t,k})
  &=\operatorname{Replay}_{D_{E,t}}(G_{t,k}),\\
(q_{t,k},\mathbf h_{t,k},\mathbf c_{t,k})
  &=\operatorname{Eval}_{D_{V,t}}(y_{t,k},\tau_{t,k}).
\end{aligned}
\label{eq:execution-evidence}
\end{equation}
With $D_t$ and $G_{t,k}$, these outputs define $e_{t,k}$: $y_{t,k}$ is the
scientific output; $\tau_{t,k}$, the executed path and diagnostics; $q_{t,k}$,
the task score; $\mathbf h_{t,k}$, hard-constraint indicators; and
$\mathbf c_{t,k}$, resource use. $\operatorname{Pass}_{t}(e_{t,k})=1$ iff
applicable convergence, physical or statistical validity, and reproducibility
requirements hold within budget. Replay alone does not authorize persistence, which is decided by the strict-improvement rule on independent validation in Section~\ref{sec:self-evolution}.

\subsection{Execution-Driven Program Self-Evolution}
\label{sec:self-evolution}

\paragraph{Evolution trigger and repair attribution.}
Evolution instance $i$ on $D_i=D_t$ ends at replay-verified success
$e_i^+=e_{t,k_i^+}$ with $\operatorname{Pass}_i(e_i^+)=1$. Its preceding
endpoint $e_i^-=e_{t,k_i^-}$ is the latest replay-verified failure, or
$\varnothing$ if none; the endpoints define the shortest repair:
\begin{equation}
\delta_i=
\begin{cases}
  \langle a_{t,k}\rangle_{k_i^-\leq k<k_i^+},
    & e_i^-\neq\varnothing,\\
  \varnothing, & e_i^-=\varnothing.
\end{cases}
\label{eq:repair-interval}
\end{equation}
Let $G_i^+=G_{t,k_i^+}$ and $\tau_i^+=\tau_{t,k_i^+}$. Edit types and graph
topology separate strategy changes from executable capability:
\begin{equation}
\begin{aligned}
\Delta_i^{\mathcal S}
  &=\Pi_{\mathrm{ctrl}}(\delta_i),\\
\Delta_i^{\mathcal O}
  &=\operatorname{CC}_{\Gamma}\!\left(
      G_i^+,\Pi_{\mathrm{exec}}
      (G_i^+,\delta_i,\tau_i^+;\mathcal O_r)
    \right).
\end{aligned}
\label{eq:repair-attribution}
\end{equation}
$\Pi_{\mathrm{ctrl}}$ selects topology, routing, and configuration edits that
preserve executable payloads;
$\Pi_{\mathrm{exec}}$ selects generated, repaired, or unregistered executable
nodes; and $\operatorname{CC}_{\Gamma}$ groups them into maximal connected
subgraphs with compatible boundaries. Thus a reproduced repair can yield linked
Skill and Operator candidates; without $e_i^-$, only the latter is formed. No
auxiliary LLM judge is required to rank or select candidates.

\paragraph{Linked Skill--Operator abstraction.}
Let $\nu_i^{\mathcal S}$ collect the Skill identifiers in $\delta_i$. For
$\Delta_i^{\mathcal S}\neq\varnothing$, its control edits define
\begin{equation}
\Delta\mathcal S_i
  =\operatorname{Patch}_{\Theta_0}
  \bigl(\mathcal S_r[\nu_i^{\mathcal S}];
        \Delta_i^{\mathcal S},e_i^-,e_i^+\bigr).
\label{eq:skill-candidate}
\end{equation}
where $\mathcal S_r[\nu_i^{\mathcal S}]$ selects attributed entries: an empty
identifier set creates a Skill, otherwise only selected instructions change;
$\Delta_i^{\mathcal S}=\varnothing$ yields no Skill candidate. Each executable
component $U\in\Delta_i^{\mathcal O}$ has
\begin{equation}
\widehat o_{i,U}
  =\bigl(G_i^+[U],\Gamma_{\partial U},\kappa_{i,U},\rho_{i,U}\bigr).
\label{eq:operator-candidate}
\end{equation}
Here, $G_i^+[U]$ is the executable subgraph; $\partial U=(\partial^-U,
\partial^+U)$ its external input and output ports; $\Gamma_{\partial U}$ their
schemas; $\kappa_{i,U}$ the preconditions, postconditions, and applicability
conditions derived from $D_i$ and the boundary slice of $\tau_i^+$; and
$\rho_{i,U}$ the component provenance. Boundary replay restores recorded inputs
at $\partial^-U$ and checks isolated outputs at $\partial^+U$ against
$\tau_i^+$ under $D_{V,i}$, including numerical tolerances and stochastic
replication.
$\widehat o_{i,U}$ enters $\Delta\mathcal O_i$ only if
$\operatorname{BReplay}_{D_{V,i}}(\widehat o_{i,U};\tau_i^+)=1$.
Bundle $B_i=(\Delta\mathcal S_i,\Delta\mathcal O_i)$ couples the strategy change
with its executable capability.

\ifdefined\ScienceClawSectionOnly
\begin{table*}[t]
\centering
\caption{IID and cross-dataset OOD results across 23 disciplines. Discipline and metric names are abbreviated. The best score in each discipline and split is in bold and the second best is underlined.}
\label{tab:scienceclaw-23discipline-iid-ood}
\fontsize{7.6}{8.9}\selectfont
\setlength{\tabcolsep}{0pt}
\renewcommand{\arraystretch}{1.0}
\begin{tabular*}{0.99\textwidth}{@{}>{\raggedright\arraybackslash}p{88pt}@{\hspace{6pt}}*{6}{>{\centering\arraybackslash}p{26pt}@{\extracolsep{0pt}\hspace{3pt}}>{\centering\arraybackslash}p{26pt}@{\extracolsep{\fill}}}>{\centering\arraybackslash}p{26pt}@{\extracolsep{0pt}\hspace{3pt}}>{\centering\arraybackslash}p{26pt}@{}}
\toprule
 & \multicolumn{2}{c}{Frozen ($A_0$)} & \multicolumn{2}{c}{SkillOpt} & \multicolumn{2}{c}{TTE} & \multicolumn{2}{c}{AHE} & \multicolumn{2}{c}{EvoMaster} & \multicolumn{2}{c}{EvoScientist} & \multicolumn{2}{c}{ScienceClaw} \\
\cmidrule(lr){2-3}\cmidrule(lr){4-5}\cmidrule(lr){6-7}\cmidrule(lr){8-9}\cmidrule(lr){10-11}\cmidrule(lr){12-13}\cmidrule(lr){14-15}
Discipline (metric) & IID & OOD & IID & OOD & IID & OOD & IID & OOD & IID & OOD & IID & OOD & IID & OOD \\
\midrule
\multicolumn{15}{@{}l}{\textbf{Higher is better ($\uparrow$)}} \\
Agricultural sci. (PQ+) & 69.4761 & 64.6829 & 69.9837 & 66.7649 & 73.5549 & 72.0626 & 73.1602 & 72.0490 & \underline{75.1756} & 73.1862 & 74.7019 & \underline{73.2184} & \textbf{76.9612} & \textbf{75.7629} \\
Biological sci. (Spearman) & 0.5057 & 0.3745 & 0.5358 & 0.3659 & 0.5599 & 0.3975 & 0.5642 & 0.3968 & 0.5674 & \underline{0.4103} & \underline{0.5683} & 0.4089 & \textbf{0.5816} & \textbf{0.4234} \\
Biomedical sci. (DSC) & 0.7816 & 0.7088 & 0.7794 & 0.7521 & 0.8169 & 0.7784 & 0.8214 & 0.7811 & 0.8348 & \underline{0.7877} & \underline{0.8387} & 0.7798 & \textbf{0.8556} & \textbf{0.8226} \\
Chemical sci. (ROC-AUC) & 0.6518 & 0.6205 & 0.6830 & 0.6518 & 0.7232 & 0.7054 & 0.7299 & 0.7143 & \underline{0.7344} & 0.7165 & 0.7299 & \underline{0.7188} & \textbf{0.7545} & \textbf{0.7478} \\
Creative arts (SDR (dB)) & 8.9208 & 8.0993 & 9.4971 & 8.3085 & 9.8649 & 9.0812 & 9.7900 & 9.0046 & \underline{9.9453} & 9.1835 & 9.9112 & \underline{9.2653} & \textbf{10.1867} & \textbf{9.5619} \\
Education (10-mask acc.) & 0.5965 & 0.6202 & 0.6603 & 0.6626 & 0.6561 & 0.6467 & 0.6616 & 0.6567 & \underline{0.6720} & \underline{0.6731} & 0.6579 & 0.6594 & \textbf{0.6878} & \textbf{0.7087} \\
Engineering (DCASE score) & 0.5709 & 0.4663 & 0.6138 & 0.4659 & 0.6243 & 0.4943 & 0.6317 & 0.4902 & \underline{0.6380} & 0.4992 & 0.6310 & \underline{0.5017} & \textbf{0.6535} & \textbf{0.5269} \\
Health sci. (Clin. utility) & 0.4971 & 0.6834 & 0.5170 & 0.7132 & 0.5278 & 0.7380 & 0.5234 & 0.7182 & \underline{0.5383} & 0.7417 & 0.5367 & \underline{0.7428} & \textbf{0.5514} & \textbf{0.7796} \\
Indigenous (chrF++) & 15.4451 & 13.7205 & 16.5641 & 14.8441 & 16.3493 & 14.1870 & 16.7301 & 14.8969 & \underline{16.9074} & \underline{15.0442} & 16.6453 & 14.7130 & \textbf{17.4351} & \textbf{15.9778} \\
Computing sci. (pass@1) & 0.7656 & 0.3750 & 0.7969 & 0.4063 & 0.8438 & 0.4219 & 0.8438 & 0.4219 & \underline{0.8594} & \underline{0.4375} & 0.8438 & \underline{0.4375} & \textbf{0.8750} & \textbf{0.4531} \\
Language \& culture (LAS) & 0.7288 & 0.5599 & 0.7623 & 0.5713 & 0.7786 & 0.6042 & 0.7761 & 0.6004 & \underline{0.7845} & \underline{0.6267} & 0.7759 & 0.5879 & \textbf{0.8138} & \textbf{0.6497} \\
Law (mAP) & 0.7513 & 0.6918 & 0.8158 & 0.7652 & 0.8074 & 0.7706 & 0.8211 & 0.7720 & \underline{0.8292} & \underline{0.7863} & 0.8170 & 0.7642 & \textbf{0.8554} & \textbf{0.8289} \\
Mathematics (Oracle acc.) & 0.8438 & 0.7344 & 0.9063 & 0.7969 & 0.9063 & 0.7656 & 0.9063 & 0.8125 & \underline{0.9219} & \underline{0.8438} & 0.9063 & 0.8125 & \textbf{0.9531} & \textbf{0.8750} \\
Philosophy (F1) & 0.4178 & 0.4166 & 0.4599 & 0.4428 & 0.4699 & 0.4846 & 0.4636 & 0.4868 & 0.4717 & \underline{0.4943} & \underline{0.4737} & 0.4925 & \textbf{0.4866} & \textbf{0.5138} \\
Psychology (Micro acc.) & 0.6088 & 0.5615 & 0.6364 & 0.6064 & 0.6269 & 0.6084 & 0.6355 & 0.6149 & 0.6467 & \underline{0.6343} & \underline{0.6476} & 0.6284 & \textbf{0.6618} & \textbf{0.6586} \\
\midrule
\multicolumn{15}{@{}l}{\textbf{Lower is better ($\downarrow$)}} \\
Built env. (NRMSE \%) & 51.0997 & 53.5009 & 47.7143 & 49.6083 & 46.7210 & 49.5570 & 46.7362 & 49.0341 & \underline{46.2726} & \underline{46.7640} & 47.2762 & 49.4801 & \textbf{45.0065} & \textbf{45.0970} \\
Commerce (MASE) & 1.6009 & 1.8740 & 1.5268 & 1.6867 & 1.5671 & 1.7502 & 1.5227 & 1.7066 & \underline{1.4984} & 1.6930 & 1.5281 & \underline{1.6846} & \textbf{1.4653} & \textbf{1.5986} \\
Earth sci. (RMSE (K)) & 1.0562 & 1.1364 & 1.0476 & 1.0379 & 1.0157 & \underline{1.0005} & 1.0094 & 1.0114 & 1.0010 & 1.0042 & \underline{1.0008} & 1.0079 & \textbf{0.9721} & \textbf{0.9580} \\
Economics (sMAPE (\%)) & 12.7394 & 18.1343 & 11.8721 & 16.8216 & 12.1577 & 16.9410 & 11.7390 & 16.2028 & \underline{11.6350} & \underline{16.1758} & 11.7621 & 16.7195 & \textbf{11.3320} & \textbf{15.5025} \\
Environmental sci. (CRPS) & 0.8104 & 0.6504 & 0.7786 & 0.6456 & 0.7517 & 0.6325 & 0.7565 & 0.6118 & \underline{0.7304} & \underline{0.5944} & 0.7326 & 0.5979 & \textbf{0.7164} & \textbf{0.5751} \\
History (cMER-micro) & 0.0185 & 0.0296 & 0.0173 & 0.0271 & 0.0177 & 0.0268 & \underline{0.0172} & 0.0264 & 0.0172 & \underline{0.0261} & 0.0172 & 0.0268 & \textbf{0.0166} & \textbf{0.0251} \\
Human society (nRMSE) & 0.0113 & 0.0213 & 0.0107 & 0.0201 & 0.0104 & 0.0192 & 0.0103 & 0.0191 & \underline{0.0103} & 0.0192 & 0.0104 & \underline{0.0191} & \textbf{0.0100} & \textbf{0.0181} \\
Physical sci. (MAE) & 36.6271 & 44.6757 & 35.1699 & 41.9077 & 33.4027 & 39.0907 & 33.5864 & 39.7923 & 33.0932 & \underline{38.2995} & \underline{32.8654} & 39.0137 & \textbf{32.1581} & \textbf{37.1306} \\
\bottomrule
\end{tabular*}
\end{table*}

\fi

\paragraph{Program-level validation and update.}
Atomic application
$\widetilde A_i=\operatorname{Apply}(A_r;B_i)$ assigns version identifiers
$\omega_i$ and requires
\begin{equation}
\begin{aligned}
(H_i^{\mathrm{src}},e_i^{\mathrm{src}},\tau_i^{\mathrm{src}})
  &=\operatorname{Run}_{D_i}(\widetilde A_i),\\
R_{\mathrm{src}}(\widetilde A_i)
  &=\operatorname{Pass}_{i}(e_i^{\mathrm{src}})
    \land \operatorname{Use}_{i}
    (\omega_i;H_i^{\mathrm{src}},\tau_i^{\mathrm{src}}).
\end{aligned}
\label{eq:candidate-validation}
\end{equation}
Here, $\operatorname{Run}_{D_i}$ returns reset-run history
$H_i^{\mathrm{src}}$, evidence $e_i^{\mathrm{src}}$, and trace
$\tau_i^{\mathrm{src}}$ as in Eq.~(\ref{eq:execution-evidence}), while
$\operatorname{Use}_{i}$ checks every $\omega_i$. Only
$R_{\mathrm{src}}=1$ candidates enter $\mathcal P(\mathcal E_r)$ and
$\mathcal D_{\mathrm{val}}$;
\ifdefined\ScienceClawSectionOnly
Equation~(\ref{eq:program-update})
\else
the task-level strict-improvement rule
\fi
then retains strict budget-feasible improvements with $\mathcal E_r$ and
foundation-model parameters fixed at $\Theta_0$ throughout every round (Appendix~\ref{app:theory}).

\ifdefined\ScienceClawSectionOnly\else
\bibliographystyle{ACM-Reference-Format}
\bibliography{../references}
\fi
\ScienceClawEnd

\section{Experimental Evaluation}
\label{sec:experiments}

\textbf{Setup.} All systems process the common 23-discipline source streams.
Evolving methods update their programs, whereas the frozen control remains at
$A_0$. We freeze the snapshot after each of seven rounds, clear transient
state, and evaluate it on the held-out same-discipline target
datasets in $\mathcal D_{\mathrm{OOD}}$; OOD results never generate or select
updates. Qwen3.5-27B orchestrates every task for all systems, GPT-5.4 performs the
evolution of ScienceClaw and every baseline, and each discipline
contributes 64 IID and 64 OOD instances. We compare ScienceClaw with
an evolution-disabled frozen agent and methods that evolve Skills, scientific
tools, agent harnesses, or persistent scientific-agent programs and memory:
SkillOpt~\cite{yang2026skillopt}, test-time tool evolution
(TTE)~\cite{lu2026beyond}, Agentic Harness Engineering
(AHE)~\cite{lin2026agentic}, EvoMaster~\cite{zhu2026evomaster}, and
EvoScientist~\cite{lyu2026evoscientist}. Each baseline runs its best released
configuration through fixed ScienceClaw-Eval adapters under the same models,
tools, source stream, validation data, and update budget (Appendix~\ref{app:impl-baselines}); adapters and initial
assets are fixed without access to ID/OOD instances or evaluators. Two
ScienceClaw variants complete the comparison: RuleEvo evolves by deterministic
trace projection alone, and Qwen-only evolution replaces GPT-5.4 with
Qwen. Three PhD reviewers per discipline and a natural- or social-science
expert audited each evaluator and its reference executions.
Table~\ref{tab:scienceclaw-23discipline-iid-ood} reports task-native scores,
compared only within each discipline because metrics differ in units and
direction; round-wise analyses use MacroSR on
$\mathcal D_{\mathrm{OOD}}$ (Eq.~(\ref{eq:macro-success-rate})) as the OOD
macro score.


\setlength{\dblfloatsep}{7pt plus 2pt minus 2pt}
\newlength{\fulltw}\setlength{\fulltw}{\textwidth}
\renewcommand{\topfraction}{0.95}\renewcommand{\dbltopfraction}{0.95}\setcounter{topnumber}{4}\setcounter{dbltopnumber}{3}


\begin{table*}[t]
\centering
\caption{Mean of IID and cross-dataset OOD results of the ablation variants across 23 disciplines. Discipline and metric names are abbreviated. The best score in each discipline is in bold and the second best is underlined.}
\label{tab:scienceclaw-ablation-iid}
\fontsize{7.6}{8.9}\selectfont
\setlength{\tabcolsep}{0pt}
\renewcommand{\arraystretch}{1.0}
\begin{tabular*}{0.99\textwidth}{@{}>{\raggedright\arraybackslash}p{88pt}@{\hspace{6pt}}*{13}{>{\centering\arraybackslash}p{26pt}@{\extracolsep{\fill}}}>{\centering\arraybackslash}p{26pt}@{}}
\toprule
Discipline (metric) & \multicolumn{1}{c}{\makebox[3.3em][c]{\fontsize{6.1}{6.9}\selectfont\shortstack[c]{Frozen}}} & \multicolumn{1}{c}{\makebox[3.3em][c]{\fontsize{6.1}{6.9}\selectfont\shortstack[c]{RuleEvo}}} & \multicolumn{1}{c}{\makebox[3.3em][c]{\fontsize{6.1}{6.9}\selectfont\shortstack[c]{Qwen-only}}} & \multicolumn{1}{c}{\makebox[3.3em][c]{\fontsize{6.1}{6.9}\selectfont\shortstack[c]{w/o Skill\\evol.}}} & \multicolumn{1}{c}{\makebox[3.3em][c]{\fontsize{6.1}{6.9}\selectfont\shortstack[c]{w/o Oper.\\evol.}}} & \multicolumn{1}{c}{\makebox[3.3em][c]{\fontsize{6.1}{6.9}\selectfont\shortstack[c]{w/o Work.\\evol.}}} & \multicolumn{1}{c}{\makebox[3.3em][c]{\fontsize{6.1}{6.9}\selectfont\shortstack[c]{w/o link.\\evol.}}} & \multicolumn{1}{c}{\makebox[3.3em][c]{\fontsize{6.1}{6.9}\selectfont\shortstack[c]{w/o src.\\replay}}} & \multicolumn{1}{c}{\makebox[3.3em][c]{\fontsize{6.1}{6.9}\selectfont\shortstack[c]{w/o sci.\\constr.}}} & \multicolumn{1}{c}{\makebox[3.3em][c]{\fontsize{6.1}{6.9}\selectfont\shortstack[c]{w/o ind.\\valid.}}} & \multicolumn{1}{c}{\makebox[3.3em][c]{\fontsize{6.1}{6.9}\selectfont\shortstack[c]{w/o cross-\\task mem.}}} & \multicolumn{1}{c}{\makebox[3.3em][c]{\fontsize{6.1}{6.9}\selectfont\shortstack[c]{w/o IID\\select.}}} & \multicolumn{1}{c}{\makebox[3.3em][c]{\fontsize{6.1}{6.9}\selectfont\shortstack[c]{w/o OOD\\transfer}}} & \multicolumn{1}{c}{\makebox[3.3em][c]{\fontsize{6.1}{6.9}\selectfont\shortstack[c]{ScienceClaw\\(full)}}} \\
\midrule
\multicolumn{15}{@{}l}{\textbf{Higher is better ($\uparrow$)}} \\
Agricultural sci. (PQ+) & 67.0795 & 71.5023 & 72.0207 & \underline{74.1946} & 71.3347 & 72.0963 & 72.5891 & 70.7277 & 69.4636 & 71.4427 & 73.5959 & 68.5843 & 72.1931 & \textbf{76.3621} \\
Biological sci. (Spearman) & 0.4401 & 0.4617 & 0.4692 & \underline{0.4897} & 0.4728 & 0.4871 & 0.4793 & 0.4841 & 0.4799 & 0.4819 & 0.4707 & 0.4482 & 0.4622 & \textbf{0.5025} \\
Biomedical sci. (DSC) & 0.7452 & 0.7724 & 0.7953 & \underline{0.8154} & 0.7929 & 0.8005 & 0.8016 & 0.7841 & 0.7726 & 0.7770 & 0.8054 & 0.7488 & 0.8072 & \textbf{0.8391} \\
Chemical sci. (ROC-AUC) & 0.6362 & 0.6875 & 0.7087 & 0.7232 & 0.7076 & \underline{0.7254} & 0.7176 & 0.6964 & 0.6908 & 0.7121 & 0.7232 & 0.6920 & 0.7098 & \textbf{0.7511} \\
Creative arts (SDR (dB)) & 8.5101 & 9.2885 & 9.1862 & \underline{9.5970} & 9.2316 & 9.4530 & 9.3943 & 9.4257 & 9.3184 & 9.5349 & 9.5109 & 9.0730 & 9.4250 & \textbf{9.8743} \\
Education (10-mask acc.) & 0.6083 & 0.6382 & 0.6579 & 0.6654 & 0.6761 & \underline{0.6766} & 0.6722 & 0.6728 & 0.6627 & 0.6690 & 0.6652 & 0.6254 & 0.6331 & \textbf{0.6982} \\
Engineering (DCASE) & 0.5186 & 0.4976 & 0.5114 & \underline{0.5703} & 0.5398 & 0.5568 & 0.5356 & 0.5368 & 0.5240 & 0.5371 & 0.5671 & 0.4736 & 0.5516 & \textbf{0.5902} \\
Health sci. (Clin. utility) & 0.5903 & 0.6240 & 0.6186 & \underline{0.6460} & 0.6163 & 0.6246 & 0.6380 & 0.6299 & 0.6034 & 0.6281 & 0.6428 & 0.6041 & 0.6358 & \textbf{0.6655} \\
Indigenous (chrF++) & 14.5828 & 15.4084 & 15.9032 & 15.5616 & 16.0139 & 16.0826 & 15.9824 & \underline{16.1154} & 15.8247 & 16.1141 & 15.7938 & 15.4115 & 15.6417 & \textbf{16.7064} \\
Computing sci. (pass@1) & 0.5703 & 0.5859 & 0.6016 & \underline{0.6484} & 0.6172 & 0.6328 & 0.6250 & 0.6172 & 0.6094 & 0.6250 & 0.6406 & 0.5625 & 0.6406 & \textbf{0.6641} \\
Language \& culture (LAS) & 0.6443 & 0.6441 & 0.6687 & \underline{0.7097} & 0.6830 & 0.6937 & 0.6852 & 0.7009 & 0.6905 & 0.7028 & 0.6956 & 0.6229 & 0.6632 & \textbf{0.7317} \\
Law (mAP) & 0.7215 & 0.7791 & 0.7863 & 0.7881 & \underline{0.8102} & 0.7967 & 0.8068 & 0.7959 & 0.7769 & 0.7813 & 0.7971 & 0.7743 & 0.7898 & \textbf{0.8422} \\
Mathematics (Oracle acc.) & 0.7891 & 0.8594 & 0.8516 & 0.8594 & 0.8750 & \underline{0.8906} & 0.8750 & 0.8594 & 0.8438 & 0.8438 & 0.8828 & 0.8359 & 0.8828 & \textbf{0.9141} \\
Philosophy (F1) & 0.4172 & 0.4581 & 0.4715 & 0.4672 & \underline{0.4786} & 0.4773 & 0.4781 & 0.4555 & 0.4553 & 0.4680 & 0.4666 & 0.4383 & 0.4454 & \textbf{0.5002} \\
Psychology (Micro acc.) & 0.5851 & 0.6177 & 0.6144 & 0.6194 & 0.6346 & \underline{0.6402} & 0.6301 & 0.6138 & 0.6092 & 0.6242 & 0.6298 & 0.6122 & 0.6209 & \textbf{0.6602} \\
\midrule
\multicolumn{15}{@{}l}{\textbf{Lower is better ($\downarrow$)}} \\
Built env. (NRMSE \%) & 52.3003 & 48.2400 & 48.0417 & 46.8999 & 47.4011 & \underline{46.3133} & 46.8281 & 47.1144 & 47.7366 & 46.9883 & 46.8659 & 48.7799 & 47.1814 & \textbf{45.0517} \\
Commerce (MASE) & 1.7375 & 1.6348 & 1.6234 & 1.6220 & \underline{1.5815} & 1.5819 & 1.6117 & 1.5874 & 1.6189 & 1.5968 & 1.6018 & 1.6564 & 1.6396 & \textbf{1.5320} \\
Earth sci. (RMSE (K)) & 1.0963 & 1.0506 & 1.0268 & 0.9952 & 1.0220 & \underline{0.9925} & 0.9982 & 1.0033 & 1.0035 & 0.9987 & 0.9978 & 1.0409 & 1.0221 & \textbf{0.9650} \\
Economics (sMAPE (\%)) & 15.4369 & 15.1645 & 14.9715 & 14.4654 & \underline{14.0826} & 14.1269 & 14.9184 & 14.1704 & 14.2476 & 14.1622 & 14.5381 & 16.2718 & 15.4171 & \textbf{13.4172} \\
Environmental sci. (CRPS) & 0.7304 & 0.6994 & 0.6845 & 0.6659 & 0.6889 & 0.6723 & 0.6707 & 0.6756 & 0.6752 & \underline{0.6656} & 0.6837 & 0.7144 & 0.6870 & \textbf{0.6458} \\
History (cMER-micro) & 0.0241 & 0.0243 & 0.0230 & 0.0227 & \underline{0.0217} & 0.0220 & 0.0227 & 0.0230 & 0.0234 & 0.0228 & 0.0220 & 0.0242 & 0.0238 & \textbf{0.0209} \\
Human society (nRMSE) & 0.0163 & 0.0158 & 0.0162 & \underline{0.0145} & 0.0154 & 0.0151 & 0.0156 & 0.0158 & 0.0154 & 0.0154 & 0.0149 & 0.0166 & 0.0153 & \textbf{0.0141} \\
Physical sci. (MAE) & 40.6514 & 39.0216 & 38.3645 & 35.8828 & 37.1914 & 36.4845 & 36.7402 & 37.4532 & 37.3627 & 36.6232 & \underline{35.8646} & 40.0907 & 37.0317 & \textbf{34.6443} \\
\bottomrule
\end{tabular*}
\par\vspace{6pt}
\centering
\includegraphics[width=\textwidth,trim=0 0 0 0,clip]{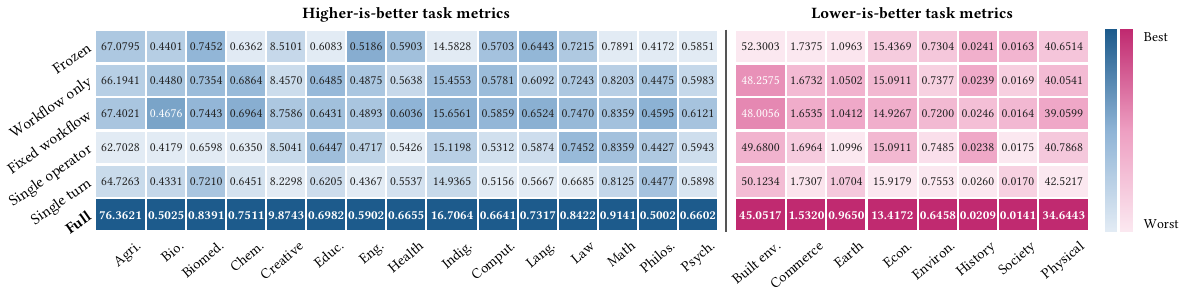}
  \setlength{\abovecaptionskip}{4pt}
  \captionof{figure}{Mean of IID and OOD task-native scores for the single-component and
  execution-structure ablation variants across 23 disciplines (variants not
  listed in Table~\ref{tab:scienceclaw-ablation-iid}, with the frozen agent and the full system for reference). Colours are normalised
  within each discipline column (darker is better), and the two groups follow higher-is-better and lower-is-better metrics.}
  \Description{Heatmap with the frozen agent, four single-component and
  execution-structure ablation variants, and the full system as rows and 23
  disciplines as columns, split into higher-is-better and lower-is-better
  metrics, showing the mean of IID and OOD task-native scores.}
  \label{fig:ablation-mean-map}
\end{table*}




\begin{figure*}[t]
  \centering
  \begin{minipage}[t]{0.685\fulltw}
    \vspace{0pt}
\begin{minipage}[t]{0.685\fulltw}
\vspace{0pt}
\centerline{\fontsize{7.8}{9.4}\selectfont\textbf{(a)}\ Continual self-evolution}\vspace{3pt}
\centering
\fontsize{7.3}{8.8}\selectfont
\setlength{\tabcolsep}{0pt}
\renewcommand{\arraystretch}{1.20}
\begin{tabular*}{\linewidth}{@{}c*{8}{@{\hspace{2pt}\extracolsep{\fill}}c}@{\hspace{6pt}\extracolsep{\fill}}c@{\hspace{2pt}\extracolsep{\fill}}c@{\hspace{2pt}\extracolsep{\fill}}c@{\hspace{2pt}\extracolsep{\fill}}c@{}}
\toprule
 & \multicolumn{8}{c}{OOD macro score (\%) after round} & \multicolumn{4}{c}{Round-7 summary} \\
\cmidrule(lr){2-9}\cmidrule(lr){10-13}
Method & 0 & 1 & 2 & 3 & 4 & 5 & 6 & 7 & Gain (pp) & Prom. & Rej. & Rate (\%) \\
\midrule
Frozen & 77.72 & 77.72 & 77.72 & 77.72 & 77.72 & 77.72 & 77.72 & 77.72 & 0.00 & 0 & 0 & -- \\
RuleEvo & 77.72 & 78.40 & 79.21 & 78.87 & 79.62 & 80.37 & 80.03 & 81.32 & 3.60 & 37 & 124 & 22.98 \\
Qwen-only & 77.72 & 79.42 & 80.16 & 79.89 & 81.25 & 81.93 & 81.59 & 83.22 & 5.50 & 49 & 112 & 30.43 \\
EvoScientist & 77.72 & 79.69 & 81.18 & 82.34 & 82.07 & 84.04 & 85.19 & 86.35 & 8.63 & 56 & 105 & 34.78 \\
EvoMaster & 77.72 & \underline{80.03} & \underline{81.59} & \textbf{82.88} & \underline{82.54} & \underline{84.85} & \underline{86.07} & \underline{87.50} & \underline{9.78} & \underline{63} & 98 & \underline{39.13} \\
\midrule
ScienceClaw & 77.72 & \textbf{80.43} & \textbf{82.68} & \underline{82.47} & \textbf{86.14} & \textbf{87.57} & \textbf{89.81} & \textbf{91.30} & \textbf{13.59} & \textbf{70} & 91 & \textbf{43.48} \\
\bottomrule
\end{tabular*}
\end{minipage}
\par\medskip
    \noindent\makebox[0.685\fulltw][s]{
\begin{minipage}[t]{0.332\fulltw}
\vspace{0pt}
\centerline{\fontsize{7.8}{9.4}\selectfont\textbf{(b)}\ Cross-family forward transfer (pp)}\vspace{3pt}
\centering
\fontsize{7.1}{8.5}\selectfont
\setlength{\tabcolsep}{0pt}
\renewcommand{\arraystretch}{1.3}
\begin{tabular*}{\linewidth}{@{}l@{\extracolsep{\fill}}ccccc@{}}
\toprule
\shortstack[l]{Source\\family} & \shortstack{Life/\\health} & \shortstack{Phys./\\Earth} & \shortstack{Eng./\\comp.} & \shortstack{Social/\\behav.} & \shortstack{Hum./\\law} \\
\midrule
Life/health & \textbf{13.08} & 2.90 & 1.77 & 0.62 & 0.33 \\
Phys./Earth & 2.20 & \textbf{15.32} & 3.47 & 0.36 & \makebox[0pt][r]{$-$}0.33 \\
Eng./comp. & 1.92 & 3.54 & \textbf{15.04} & 1.84 & 1.64 \\
Social/behav. & 1.10 & 0.32 & 0.95 & \textbf{14.37} & 4.27 \\
Hum./law & 0.27 & \makebox[0pt][r]{$-$}0.51 & 1.26 & 3.61 & \textbf{15.66} \\
\bottomrule
\end{tabular*}
\end{minipage}
\hfill
\begin{minipage}[t]{0.332\fulltw}
\vspace{0pt}
\centerline{\fontsize{7.8}{9.4}\selectfont\textbf{(c)}\ Transfer, forgetting and reliability}\vspace{3pt}
\centering
\fontsize{7.1}{8.5}\selectfont
\setlength{\tabcolsep}{0pt}
\renewcommand{\arraystretch}{1.3}
\begin{tabular*}{\linewidth}{@{}l@{\extracolsep{\fill}}ccccc@{}}
\toprule
Method & \shortstack{FWT/BWT\\$\uparrow$} & \shortstack{Neg.\\$\downarrow$} & \shortstack{Forg.\ $\downarrow$\\avg/max} & \shortstack{Attain.\\$\uparrow$} & \shortstack{Err.\\$\downarrow$} \\
\midrule
SkillOpt & 1.42/0.26 & 31.39 & 0.88/2.45 & 92.67 & 6.52 \\
TTE & 1.75/0.41 & 26.97 & 0.76/2.17 & 93.63 & 7.25 \\
EvoScientist & 2.74/1.23 & 17.32 & 0.42/1.23 & 94.43 & 4.58 \\
EvoMaster & \underline{3.19/1.53} & \underline{14.33} & \underline{0.33/1.05} & \underline{94.99} & \underline{3.80} \\
ScienceClaw & \textbf{4.20/3.66} & \textbf{11.89} & \textbf{0.13/0.51} & \textbf{96.42} & \textbf{1.23} \\
\bottomrule
\end{tabular*}
\end{minipage}
}
  \end{minipage}\hfill
  \begin{minipage}[t]{0.300\fulltw}
    \vspace{0pt}
    \centering
    \includegraphics[width=\linewidth]{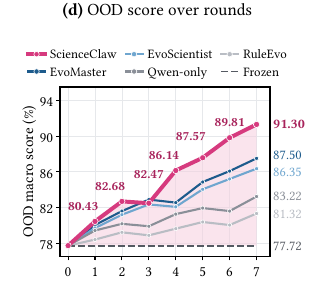}\par
    \includegraphics[width=\linewidth]{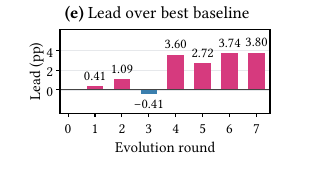}
  \end{minipage}
  \par\medskip
  \includegraphics[width=\fulltw,trim=0 16pt 0 0,clip]{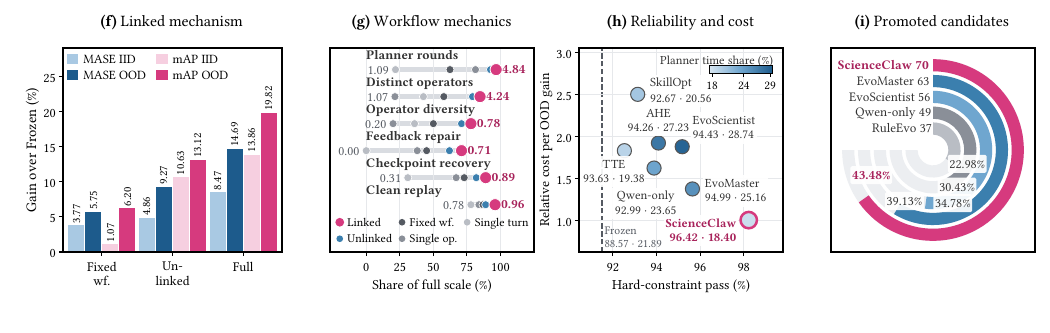}
  \setlength{\abovecaptionskip}{4pt}
  \caption{Continual evolution, transfer, and mechanisms. (a) OOD macro score (MacroSR on $\mathcal D_{\mathrm{OOD}}$, \%) of each snapshot after round $r$, with gain over $A_0$ in pp and promoted (Prom.) and rejected (Rej.) candidates out of 161. (b) Forward transfer (pp): OOD gain on the target family after evolving on the source family. (c) Forward and backward transfer (FWT/BWT, pp), negative-transfer share (Neg., \%), average and maximum forgetting on $\mathcal D_{\mathrm{rep}}$ (Forg., pp), continual-target attainment (Attain., \%), and erroneous promotions (Err., \%). (d) OOD macro score by round. (e) Lead of ScienceClaw over the best other system (pp). (f) Gain over Frozen (\%) of the linked-mechanism variants on Commerce (MASE) and Law (mAP). (g) Workflow mechanics: planner rounds, distinct Operators, Operator diversity, feedback repair, checkpoint recovery, and clean replay. (h) Hard-constraint pass rate against cost per OOD gain (ScienceClaw = 1), coloured by planner wall-time share, with Attain. and planner share annotated. (i) Promoted candidates and promotion rate. Bold and underline mark the best and second best in (a) and (c); in (b), bold marks the within-family gain.}
  \label{fig:eval-module}
  \Description{Panels (a) to (c) are tables of continual self-evolution, cross-family forward transfer and transfer, forgetting and reliability; panels (d) and (e) plot the OOD score over rounds and the lead over the best baseline; panels (f) to (i) show the linked mechanism, workflow mechanics, reliability and cost, and promoted candidates.}
\end{figure*}

\textbf{RQ1: Does persistent program evolution improve cross-dataset
performance?} Yes. ScienceClaw is best in all 23 disciplines on both splits
(Table~\ref{tab:scienceclaw-23discipline-iid-ood}), and its mean relative
margin over the strongest baseline in each discipline grows from 2.58\% on IID
to 4.19\% on OOD (12.25\% and 16.45\% over the frozen agent), and it is positive in every discipline, from 1.82\% to 3.73\% on IID and from 3.05\% to 6.21\% on OOD (Appendix~\ref{app:data-margin}). The largest OOD margins appear in Indigenous studies (6.21\%) and Law (5.42\%), the smallest in Physical sciences (3.05\%). The wider OOD
margin is consistent with the gate: a candidate persists only after it
reproduces on its source task and improves independent validation tasks, which
filters out updates fitted to one source, whereas single-asset methods can fall
below the frozen agent (SkillOpt in three of 46 cells). The baselines order by how much of
the program they evolve: their mean OOD gain over the frozen agent rises from
5.74\% for SkillOpt (Skills) through 8.82\% for TTE (tools) and 9.73\% for AHE
(harness) to 10.55\% for EvoScientist and 11.95\% for EvoMaster (programs and
memory), against 16.45\% for ScienceClaw, above all of them. ScienceClaw also beats the frozen agent in every discipline, by 11.57\% to 23.34\% on OOD, and its margin is similar on higher-is-better (4.26\%) and lower-is-better metrics (4.06\%). Its OOD score falls 11.73\% below its IID score on average, against 16.06\% for the frozen agent and 13.46\% to 15.45\% for the baselines, so the evolved program degrades least of all systems when the dataset changes.

\textbf{RQ2: Does evolution keep improving OOD performance over rounds?} Yes.
ScienceClaw's OOD macro score rises from 77.72 at $A_0$ to 91.30 after round 7
(+13.59 pp), against +9.78 pp for EvoMaster, the strongest baseline, and its
lead over the best other system grows from 0.41 pp after round 1 to 3.80 pp
(Figure~\ref{fig:eval-module}a,d,e). Round 3 is the only round in which another
system leads, by 0.41 pp. From round 3 to round 7 ScienceClaw adds 8.83 pp, nearly twice the
4.6 pp of EvoMaster. The curve keeps rising because ScienceClaw promotes
more candidates than any other system, 70 of the 161 submitted (43.48\%
against 22.98\% to 39.13\%; Figure~\ref{fig:eval-module}i), so the gate is
selective without stalling. RuleEvo and Qwen-only, which change only
how candidates are produced, reach +3.60 and +5.50 pp, so candidate quality, not the gate
alone, drives the gain. None of the five evolving systems dips by more than 0.34 pp in any round.

\textbf{RQ3: Does evolution transfer across disciplines and retain earlier
repairs?} Yes. Evolving on one discipline family lifts the OOD score of the
others: 18 of the 20 cross-family pairs are positive, with a mean gain of
1.58 pp against 14.69 pp within a family (Figure~\ref{fig:eval-module}b), so most of the evolved capability is discipline specific and the part that transfers rarely harms.
Transfer is strongest between related families: Social/behavioural and
Humanities/law (+4.27, +3.61), and Engineering/computing and Physical/Earth
(+3.54, +3.47). The two negative pairs, both between Physical/Earth and
Humanities/law, stay below 0.6 pp in magnitude. Retention shows the same
pattern (Figure~\ref{fig:eval-module}c; Appendix~\ref{app:data-continual}): ScienceClaw reaches 4.20 pp forward (all 25 pairs) and
3.66 pp backward transfer with 0.13 pp average and 0.51 pp maximum forgetting
and 11.89\% negative transfer, against 0.33 pp, 1.05 pp, and 14.33\% for
EvoMaster. Source replay protects earlier repairs, and independent validation guards
the other families against regressions.

\textbf{RQ4: Which mechanisms produce the gain?} The linked evolution and the
replay gate. The full system is best in all 23 disciplines
(Table~\ref{tab:scienceclaw-ablation-iid}; Appendices~\ref{app:impl-ablation} and~\ref{app:data-ablation}), and every ablation keeps only part
of its gain over the frozen agent, with gain measured as the average relative
drop below the full system (13.68\% for Frozen; Appendix~\ref{app:clarify-gate}). Updating Skills and Operators
without committing them as one bundle keeps 60\% of the gain, a loss comparable
to removing Operator evolution (59\%); on Commerce and Law the linked system gains
14.69\% and 19.82\% on OOD over Frozen, against 9.27\% and 13.12\% unlinked
(Figure~\ref{fig:eval-module}f). The gate matters even more: without IID
selection only 15\% of the gain remains, and without scientific constraints,
source replay, or the independent validator, 46\%, 55\%, and 59\%, with losses
2.4 to 2.8 times larger on OOD than on IID. RuleEvo and Qwen-only keep only about a third and 44\%. In 20 of 23 disciplines the runner-up
to the full system removes a single evolution component, so no component carries the
gain alone. Execution
structure is the base on which evolution acts: single-turn and single-operator
execution fall below the frozen agent (Figure~\ref{fig:ablation-mean-map}), and
Linked Full runs 4.84 planner rounds and 4.24 distinct Operators per task, repairs 71\% of failures from feedback, recovers 89\% after
interruption, and replays 96\% cleanly, against 0, 31\%, and 78\% for a single-turn
agent (Figure~\ref{fig:eval-module}g; case studies of Commerce and Law in Appendix~\ref{app:case}).

\textbf{RQ5: Is the gain reliable and efficient?} Yes. ScienceClaw passes the
hard constraints on 98.23\% of instances, 2.58 points above EvoMaster, makes the
fewest erroneous promotions in Figure~\ref{fig:eval-module}c (1.23\% against
3.80\% to 7.25\%), and costs less per OOD point than every other method (1.37 to 2.50 times as much, 27.1\% less than EvoMaster),
while its planner takes the smallest share of wall-clock time (18.40\%;
Figure~\ref{fig:eval-module}h; Appendix~\ref{app:data-continual}). Every compared evolving method passes more constraints than the frozen agent (91.51\%), and ScienceClaw adds the most (+6.73 points). Its target attainment of 96.42\% is 1.43 points above EvoMaster and 7.85 above the frozen agent (88.57\%). The replay and validation that block invalid updates therefore do not trade against cost. Appendix~\ref{app:clarify-cost} gives the cost accounting behind these ratios, and Appendix~\ref{app:data-continual} defines the reliability metrics.

\section{Conclusion}
\label{sec:conclusion}

Across 23 disciplines, ScienceClaw's replay-gated linked Skill--Operator
evolution yields sustained OOD gains, positive cross-family transfer, and
stronger retention at a lower cost per gain, without updating any model parameters.
These results support evaluating scientific-agent evolution by persistence, transfer, validity,
and reproducibility, rather than isolated task scores. ScienceClaw-Eval measures scientific correctness, evolutionary gain,
retention, cross-dataset transfer, and evolution cost through sequential streams and
independent reset evaluation.

\section{Limitations and Ethical Considerations}
\label{sec:limitations}

Automated evaluators may miss domain errors (Appendix~\ref{app:audit}), and updates may transfer
incorrectly. ScienceClaw should support, not replace, experts; high-stakes use
requires provenance, licensing, privacy safeguards, and independent review.
The 23-discipline scope and cross-dataset splits do not eliminate shared
conventions or within-field variation. Reported trajectories are final
snapshots, so broader claims need repeated runs and discipline-level
intervals. Persistent executable updates can reuse errors and expand the
attack surface, requiring versioned program snapshots, replay and rollback, sandboxing,
least-privilege credentials, auditable provenance, and human approval for
irreversible actions.

\section{Generative AI Usage}
\label{sec:genai-usage}

Generative AI assisted language and figures; the authors verified and take
responsibility for the paper.

\bibliographystyle{ACM-Reference-Format}
\bibliography{references}

\clearpage
\appendix
\theoremstyle{acmdefinition}
\newtheorem{remark}[theorem]{Remark}
\newtheorem{assumption}[theorem]{Assumption}
\definecolor{tbdred}{RGB}{190,30,30}
\newcommand{\tbd}[1]{{\color{tbdred}\textbf{[TBD: #1]}}}
\newcolumntype{L}[1]{>{\raggedright\arraybackslash}p{#1}}
\theoremstyle{acmplain}
\numberwithin{equation}{section}
\renewcommand{\dbltopfraction}{0.99}\renewcommand{\dblfloatpagefraction}{0.75}\renewcommand{\topfraction}{0.95}\renewcommand{\floatpagefraction}{0.75}\renewcommand{\textfraction}{0.05}\renewcommand{\bottomfraction}{0.9}\setcounter{bottomnumber}{3}\setcounter{dbltopnumber}{3}
\setlength{\textfloatsep}{10pt plus 2pt minus 4pt}\setlength{\dbltextfloatsep}{10pt plus 2pt minus 4pt}
\section{Formal Properties of the Replay-Gated Update}
\label{app:theory}

This appendix states what the update rule of Section~\ref{sec:task} guarantees.
The results concern the
rule as implemented (Appendix~\ref{app:impl}) and mark where the
implementation is stricter than the literal Eq.~(\ref{eq:program-update}).

\subsection{Setting and Conventions}
\label{app:theory-setting}

\paragraph{Validation outcomes.}
For a program $A$ and a validation episode $j\in\mathcal D_{\mathrm{val}}$, let
$u_j(A)=(z_j(A),s_j(A),\mathbf h_j(A),\mathbf c_j(A))$ collect the success
indicator, a bounded normalized task score, the hard-constraint indicators, and
the resource use of the reset solve of $j$ under $A$. Write
$M(A)=\operatorname{MacroSR}(A)$ on $\mathcal D_{\mathrm{val}}$
(Eq.~(\ref{eq:macro-success-rate})) and $S(A)$ for the mean normalized score.
The implementation compares programs lexicographically,
\begin{equation}
\begin{aligned}
A\succ A'
\;\Longleftrightarrow\;
&M(A)>M(A')\ \ \text{or}\\
&\bigl(M(A)=M(A')\ \land\ S(A)-S(A')\ge\varepsilon\bigr),
\end{aligned}
\label{eq:order}
\end{equation}
with a score margin $\varepsilon>0$ that exceeds the re-solve noise of a single
episode. The literal rule $Q_{\mathrm{val}}=M$ is the limit $\varepsilon=\infty$,
where ties are never broken.

\begin{lemma}[$\succ$ is a strict order]
\label{lem:order}
The relation $\succ$ is irreflexive and transitive.
\end{lemma}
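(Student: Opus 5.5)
Irreflexivity and transitivity follow directly from the definition in Eq.~(\ref{eq:order}), using only that $M$ and $S$ are real-valued and that $\varepsilon>0$. The proof treats the two disjuncts of the definition as cases.

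\emph{Irreflexivity.} Take any program $A$. The first disjunct $M(A)>M(A)$ fails, because $>$ is irreflexive on $\mathbb R$. The second disjunct would require $S(A)-S(A)=0\ge\varepsilon$, which contradicts $\varepsilon>0$. Hence $A\not\succ A$. This is the only point where positivity of the margin is used, and it is essential: with $\varepsilon\le 0$ every program with a tie would dominate itself. In the literal limit $\varepsilon=\infty$ the second disjunct is vacuous, so the claim reduces to irreflexivity of $>$.

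\emph{Transitivity.} Suppose $A\succ A'$ and $A'\succ A''$. There are four cases, according to which disjunct witnesses each relation.
\begin{itemize}
\item If $M(A)>M(A')$ and $M(A')>M(A'')$, then $M(A)>M(A'')$.
\item If $M(A)>M(A')$ and $M(A')=M(A'')$, then again $M(A)>M(A'')$.
\item If $M(A)=M(A')$ and $M(A')>M(A'')$, then again $M(A)>M(A'')$.
\end{itemize}
In each of these three cases the first disjunct gives $A\succ A''$. In the remaining case $M(A)=M(A')=M(A'')$, and the score margins add:
\[
S(A)-S(A'')=\bigl(S(A)-S(A')\bigr)+\bigl(S(A')-S(A'')\bigr)\ge 2\varepsilon\ge\varepsilon .
\]
The second disjunct then yields $A\succ A''$.

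The main subtlety is that the tie-breaking relation ``$S$ exceeds by at least $\varepsilon$'' is transitive only because the margins accumulate. Its complement, indifference within $\varepsilon$, is not transitive, so $\succ$ is a strict partial order but not a strict weak order. The lemma claims only irreflexivity and transitivity, so this does not affect the statement. Asymmetry follows as a corollary, since $A\succ A'\succ A$ would give $A\succ A$ by transitivity, contradicting irreflexivity.
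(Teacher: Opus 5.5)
Your proof is correct and matches the paper's argument: irreflexivity follows from $0<\varepsilon$, and transitivity holds because $M$ is weakly monotone along the chain, with the all-ties case settled by adding the two margins to get $2\varepsilon\ge\varepsilon$ (the paper compresses your four cases into the single observation $M(A)\ge M(A')\ge M(A'')$). One small correction to a side remark: the step $2\varepsilon\ge\varepsilon$ itself needs $\varepsilon\ge 0$, so it is only \emph{strict} positivity of the margin that is used solely for irreflexivity.
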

\begin{proof}
Irreflexivity: $M(A)>M(A)$ is false and $S(A)-S(A)=0<\varepsilon$. Transitivity:
let $A\succ A'$ and $A'\succ A''$. Then $M(A)\ge M(A')\ge M(A'')$. If either
inequality is strict, $M(A)>M(A'')$. Otherwise both are equalities and
$S(A)-S(A'')\ge 2\varepsilon\ge\varepsilon$.
\end{proof}

\begin{assumption}[Fixed, lineage-disjoint partitions]
\label{ass:partition}
$\mathcal D_{\mathrm{src}}$, $\mathcal D_{\mathrm{val}}$,
$\mathcal D_{\mathrm{ID}}$, and $\mathcal D_{\mathrm{OOD}}$ are fixed before the
first round and are pairwise disjoint in items and in lineage groups.
$\mathcal D_{\mathrm{rep}}$ consists of frozen copies of source episodes.
Hidden labels of $\mathcal D_{\mathrm{val}}$, $\mathcal D_{\mathrm{ID}}$, and
$\mathcal D_{\mathrm{OOD}}$ stay inside the evaluator and are never part of the
execution environment $D_{E,t}$.
\end{assumption}

\begin{assumption}[Recorded values]
\label{ass:recorded}
Eq.~(\ref{eq:program-update}) compares the recorded validation value of a
candidate with the recorded value of the incumbent. A candidate is solved once
per validation episode whose retrieved program slice it changes; other episodes
reuse the incumbent's recorded result. An episode cut short by an infrastructure
error is retried and, if it still fails, counted as an error entry that blocks
admission rather than as a failure ($z=0$).
\end{assumption}

\subsection{Safety Invariants of the Persisted Program}
\label{app:theory-invariants}

\begin{proposition}[Monotone validation value]
\label{prop:monotone}
Let $V_r=(M(A_r),S(A_r))$ be the recorded value of the round-$r$ program. For
every $r$, either $A_{r+1}=A_r$ or $A_{r+1}\succ A_r$. Hence $V_r$ is
nondecreasing in lexicographic order and strictly increases at every
promotion. Because $M$ takes values in a finite set and $S$ increases by at
least $\varepsilon$ among programs with equal $M$, the number of promotions
against a fixed $\mathcal D_{\mathrm{val}}$ is finite.
\end{proposition}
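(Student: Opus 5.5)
The plan is to read the implemented update rule as Eq.~(\ref{eq:program-update}) with $Q_{\mathrm{val}}$-comparison replaced by the order $\succ$ of Eq.~(\ref{eq:order}), evaluated on recorded values (Assumption~\ref{ass:recorded}). By construction there are only two branches. In the otherwise branch $A_{r+1}=A_r$. In the promotion branch $\mathcal F_r\neq\varnothing$ and the selected $\widetilde A_r$ satisfies $\widetilde A_r\succ A_r$, so $A_{r+1}\succ A_r$. This gives the dichotomy directly.

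Two points need care here. First, an episode blocked by an infrastructure error is not scored as $z=0$ but blocks admission, so no candidate with an error entry reaches the promotion branch. Second, unchanged episodes reuse the incumbent's recorded results, so $V_r$ is well defined as a recorded pair and the comparison is between recorded values rather than fresh noisy re-solves.

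Next I translate the dichotomy into monotonicity of $V_r$. I would note that $A\succ A'$ depends on $A,A'$ only through $(M,S)$. Between consecutive rounds, $V_{r+1}=V_r$ or $V_{r+1}$ strictly dominates $V_r$: either $M$ increases, or $M$ ties and $S$ increases by at least $\varepsilon$. By Lemma~\ref{lem:order}, transitivity chains these relations. So if $r<r'$ and there is a promotion between them, $A_{r'}\succ A_r$, and irreflexivity forbids ever returning to an earlier value. Hence the sequence is nondecreasing lexicographically and strictly increasing at each promotion. In the literal case $\varepsilon=\infty$, only the $M$-branch can fire.

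For finiteness, $\mathcal D_{\mathrm{val}}$ is fixed (Assumption~\ref{ass:partition}), so $M$ is a finite average of rationals with fixed denominators $|\mathcal I_d|$ and takes at most $\prod_d(|\mathcal I_d|+1)$ values. Along the promotion subsequence, $M$ is nondecreasing, so it can strictly increase only finitely often. Between two consecutive $M$-increases, $M$ is constant and each promotion raises $S$ by at least $\varepsilon$. Since $S$ is a mean of bounded normalized scores, say in $[0,1]$, each such block has at most $\lfloor 1/\varepsilon\rfloor$ promotions. Multiplying the two counts bounds the total number of promotions.

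The main obstacle is the second step: making sure the recorded-value semantics really makes $V_{r+1}$ comparable with $V_r$. If $S$ or $M$ of the incumbent were re-measured rather than reused, noise could break the chain. I would therefore state explicitly that the incumbent's value is frozen at promotion time and that candidates are judged against that recorded value.
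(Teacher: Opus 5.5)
Your proposal is correct and follows the paper's proof. The two-branch reading of Eq.~(\ref{eq:program-update}) under $\succ$ on recorded values gives the dichotomy, Lemma~\ref{lem:order} chains it, and finiteness comes from the finitely many values of $M$ together with $\varepsilon$-steps of a bounded $S$.

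One small bookkeeping fix: let $K$ be the number of attainable values of $M$ and $S\le \bar S$ (your $\bar S=1$). The tally is then additive, at most $K-1$ promotions that raise $M$ plus at most $\lfloor \bar S/\varepsilon\rfloor$ in each of at most $K$ constant-$M$ blocks. Literally multiplying the number of $M$-increases by the per-block count undercounts (it is zero when $M$ never rises), though finiteness is unaffected.
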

\begin{proof}
By Eq.~(\ref{eq:program-update}), $A_{r+1}\neq A_r$ only if the selected
candidate strictly improves the recorded value, i.e.\ $A_{r+1}\succ A_r$
(Eq.~(\ref{eq:order})). By Lemma~\ref{lem:order} the relation composes along the
chain. If $M$ takes $K$ values and $\bar S$ bounds $S$, at most $K-1$
promotions raise $M$ and at most $K\lfloor\bar S/\varepsilon\rfloor$ promotions
raise $S$ at constant $M$.
\end{proof}

\begin{proposition}[Feasibility is invariant]
\label{prop:feasible}
Every persisted program $A_{r+1}\neq A_r$ satisfies, at promotion: (i)
$R_{\mathrm{src}}(A_{r+1})=1$; (ii) $\mathbf H_{\mathrm{val}}(A_{r+1})\succeq\mathbf H_{\mathrm{val}}(A_r)$,
that is, no validation episode shows a
hard-constraint violation that the incumbent did not show; and (iii)
\begin{equation}
\mathbf C_{\mathrm{val}}(A_{r+1})\preceq\mathbf B_{\mathrm{abs}}
\quad\text{and}\quad
\mathbf C_{\mathrm{val}}(A_{r+1})\preceq(1+\beta)\,\mathbf C_{\mathrm{val}}(A_r),
\label{eq:budget-two}
\end{equation}
where $\mathbf B_{\mathrm{abs}}$ scales with $|\mathcal D_{\mathrm{val}}|$ and
$\beta\ge 0$. Consequently, the set of hard-constraint violations on any
completed validation episode is nonincreasing along the chain, and the
validation cost of every persisted program is at most $\mathbf B_{\mathrm{abs}}$
whenever $\mathbf C_{\mathrm{val}}(A_0)\preceq\mathbf B_{\mathrm{abs}}$.
\end{proposition}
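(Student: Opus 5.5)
The proof unfolds the admission conditions that a candidate must satisfy to be persisted under Eq.~(\ref{eq:program-update}), as implemented. Fix a round $r$ with $A_{r+1}\neq A_r$. By Eq.~(\ref{eq:program-update}), $A_{r+1}=\widetilde A_r$ for some $\widetilde A_r\in\mathcal F_r$. By the text following Eq.~(\ref{eq:candidate-validation}), only candidates with $R_{\mathrm{src}}=1$ enter $\mathcal P(\mathcal E_r)$, and the definition of $\mathcal F_r$ in Eq.~(\ref{eq:candidate-selection}) repeats this requirement. Item (i) therefore follows directly from set membership.

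For item (ii), I use the hard-constraint component of $\mathcal F_r$. Under the literal rule, $\mathbf H_{\mathrm{val}}(A_{r+1})=\mathbf 1\succeq\mathbf H_{\mathrm{val}}(A_r)$ holds trivially. The implementation is weaker and only requires no new violation relative to the incumbent, so I read the implemented gate as the componentwise condition $\mathbf h_j(\widetilde A)\ge \mathbf h_j(A_r)$ on each validation episode $j$. I then invoke Assumption~\ref{ass:recorded} to make this comparison well defined. Episodes whose retrieved program slice is unchanged reuse the incumbent's recorded result, so their indicators coincide. Re-solved episodes are checked explicitly. An infrastructure failure produces an error entry that blocks admission, so no episode can be silently missing. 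This last point is what justifies the clause ``on any completed validation episode.''

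For item (iii), I read the budget condition $\mathbf C_{\mathrm{val}}(A)\preceq\mathbf B$ as the conjunction of an absolute cap $\mathbf B_{\mathrm{abs}}$ and a relative cap $(1+\beta)\mathbf C_{\mathrm{val}}(A_r)$, which together give Eq.~(\ref{eq:budget-two}). I need to state this reading explicitly as part of the implementation detail. With that reading, Eq.~(\ref{eq:budget-two}) is again membership in $\mathcal F_r$.

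The consequences follow by induction along the chain $A_0,A_1,\dots$. Non-promoting rounds leave the program unchanged, so every quantity is preserved. At promoting rounds, (ii) shows the violation set on each completed episode can only shrink, and set inclusion is transitive. For the cost claim, the absolute half of (iii) bounds every promoted program directly. The hypothesis $\mathbf C_{\mathrm{val}}(A_0)\preceq\mathbf B_{\mathrm{abs}}$ covers the case in which no promotion has yet occurred. The relative factor $(1+\beta)$ cannot compound past $\mathbf B_{\mathrm{abs}}$, because the absolute cap is rechecked at every promotion.

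The main obstacle is not the logic but the mismatch between the literal feasibility set and the implemented gate. The proof stands or falls on pinning down precisely what the implementation checks: the relative hard-constraint comparison, the two-part budget, and recorded versus re-solved values. It must also confirm that reusing recorded incumbent results cannot mask a new violation. I would handle this by making these conventions explicit at the start of the proof, citing Assumption~\ref{ass:recorded}, and noting that a changed slice forces a re-solve.
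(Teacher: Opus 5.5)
Your proposal is correct and matches the paper's proof: (i) and (iii) are membership conditions of $\mathcal F_r$ with the budget instantiated as the absolute-plus-relative cap, (ii) is the no-regression form of the hard-constraint check, and the consequences follow by transitivity and by induction from $A_0$ using the absolute half of (iii). One small correction: ``completed'' is better tied to the implementation rule that an episode producing no output cannot veto through $\mathbf H_{\mathrm{val}}$, so $\mathbf h_j$ is compared only where an output exists, rather than to infrastructure errors, which block admission outright; this does not affect the argument.
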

\begin{proof}
(i) and (iii) are conditions of the candidate set in
Eq.~(\ref{eq:candidate-selection}), with the budget vector instantiated as
Eq.~(\ref{eq:budget-two}), and (ii) is its hard-constraint condition in
no-regression form; a candidate that violates any of them is never selected.
By (ii), $\mathbf h_j(A_{r+1})\ge\mathbf h_j(A_r)$ component-wise on
every completed validation episode, so the violation set cannot grow. The cost
bound follows from the first inequality of Eq.~(\ref{eq:budget-two}) and
induction from $A_0$.
\end{proof}

\begin{remark}[Where the implementation is stricter than Eq.~(\ref{eq:program-update})]
\label{rem:stricter}
Four additions follow: the lexicographic order
Eq.~(\ref{eq:order}) instead of $M$ alone; the relative budget cap in
Eq.~(\ref{eq:budget-two}) in addition to an absolute cap; the noise guard of
Proposition~\ref{prop:guard}; and the no-regression form $\succeq$ of the
hard-constraint condition. The relative cap and the noise guard make admission
harder, whereas the lexicographic order also admits a candidate that ties on $M$
and raises $S$ by at least $\varepsilon$, and the no-regression form admits a
candidate that keeps a violation of the incumbent. Proposition~\ref{prop:monotone} holds for
the literal rule because each literal promotion strictly raises $M$, and the
consequences of Proposition~\ref{prop:feasible} hold for it because they use only
(i), (ii), and the absolute cap in (iii). Appendix~\ref{app:impl-gate} lists the parameter values.
\end{remark}

\begin{proposition}[Information-flow isolation]
\label{prop:isolation}
Under Assumption~\ref{ass:partition}, the program $A_{r+1}$ is a function of
$A_r$, the round-$r$ source episode with its evaluator feedback, the validation
evaluations of the candidate and the incumbent, and the solver's randomness.
No quantity computed on $\mathcal D_{\mathrm{ID}}$, $\mathcal D_{\mathrm{OOD}}$,
or $\mathcal D_{\mathrm{rep}}$ is an input to the update, and
$\Theta_{r+1}=\Theta_r=\Theta_0$.
\end{proposition}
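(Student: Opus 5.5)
The argument traces the data dependencies of one round of the update, from the source episode of round $r$ to the persisted $A_{r+1}$, and checks at each stage which inputs enter. The plan is an induction over the stages within the round.

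\emph{Candidate generation.} By Eqs.~(\ref{eq:edit-policy})--(\ref{eq:execution-evidence}), the trajectory $(G_{t,k},\chi_{t,k},f_{t,k},H_{t,k})$ depends on:
\begin{itemize}
  \item $D_t$ for a source task $t\in\mathcal D_{\mathrm{src}}$;
  \item the retrieved slice of $A_r$;
  \item the fixed policy $\pi_{\Theta_0}$ and its sampling randomness;
  \item the environment $D_{E,t}$ and the evaluator $D_{V,t}$.
\end{itemize}
The bundle is then assembled from this trajectory. The repair interval $\delta_i$ (Eq.~(\ref{eq:repair-interval})), the projections $\Pi_{\mathrm{ctrl}},\Pi_{\mathrm{exec}},\operatorname{CC}_\Gamma$ (Eq.~(\ref{eq:repair-attribution})), $\operatorname{Patch}_{\Theta_0}$, and $\operatorname{BReplay}$ (Eqs.~(\ref{eq:skill-candidate})--(\ref{eq:operator-candidate})) are deterministic maps or $\Theta_0$-calls on the trace, $A_r$, and $D_{V,i}$. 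Hence $B_i$, and so $\widetilde A_i=\operatorname{Apply}(A_r;B_i)$, is a function of $A_r$, the round-$r$ source episode with its evaluator feedback, and solver randomness. Assumption~\ref{ass:partition} is used at exactly this point. Hidden labels of $\mathcal D_{\mathrm{val}},\mathcal D_{\mathrm{ID}},\mathcal D_{\mathrm{OOD}}$ are never in $D_{E,t}$, and the source item is lineage-disjoint from those sets, so no held-out information can leak into the trajectory through the environment.

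\emph{The gate.} $R_{\mathrm{src}}$ in Eq.~(\ref{eq:candidate-validation}) reruns the same source task $D_i$, so it adds only source information. $\mathbf H_{\mathrm{val}}$, $\mathbf C_{\mathrm{val}}$, and $Q_{\mathrm{val}}$ in Eqs.~(\ref{eq:candidate-selection})--(\ref{eq:program-update}), together with the order of Eq.~(\ref{eq:order}), are computed from the outcomes $u_j$ for $j\in\mathcal D_{\mathrm{val}}$ under the candidate and the incumbent. By Assumption~\ref{ass:recorded}, these are recorded values: fresh solves of the candidate on affected episodes, and the incumbent's stored results elsewhere. The budget $\mathbf B$ (instantiated as in Eq.~(\ref{eq:budget-two})) and the margin $\varepsilon$ are constants fixed before the first round. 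Composing the stages shows that the $\arg\max$ and the if-then-else in Eq.~(\ref{eq:program-update}) have exactly the inputs the statement lists.

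\emph{Exclusion of the evaluation sets.} By the evaluation protocol of Section~\ref{sec:benchmark-evaluation}, $\mathcal D_{\mathrm{ID}}$, $\mathcal D_{\mathrm{OOD}}$, and $\mathcal D_{\mathrm{rep}}$ are evaluated only on frozen snapshots after the update, and no term of the update reads those results. $\mathcal D_{\mathrm{rep}}$ needs a separate argument, since it consists of copies of source episodes. The update uses only the round-$r$ source episode and its $R_{\mathrm{src}}$ rerun, never the frozen replay evaluations. $\Theta_{r+1}=\Theta_r=\Theta_0$ holds by the second line of Eq.~(\ref{eq:program-update}), since no stage writes to $\Theta$.

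The main obstacle is the recorded-value reuse together with retrieval. One must argue that the incumbent's recorded validation values were themselves produced only from validation solves. This needs a secondary induction on $r$: the recorded values of $A_0$ come from validation solves, and each promotion replaces only entries obtained by validation solves of the new program. The other concern is implicit leakage through retrieval. $\operatorname{Retrieve}$ sees only $D_t$ and $A_r$, and by the induction hypothesis $A_r$ already satisfies the isolation property, so retrieval introduces no held-out information.
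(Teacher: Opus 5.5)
Your proposal is correct and follows the paper's own argument. Like the paper, you trace the inputs of candidate construction (the repair interval of the source episode) and of admission ($R_{\mathrm{src}}$ on the same source task plus the validation values $u_j$), invoke Assumption~\ref{ass:partition} to keep held-out labels out of the run environment, and note that $\Theta_0$ enters only as the fixed policy; your treatment of $\mathcal D_{\mathrm{rep}}$, recorded-value reuse, and retrieval adds detail the paper leaves implicit. One small correction: the assumption is not used only at candidate generation, because the paper's proof also uses it to conclude that the validation values consumed by the gate carry no held-out information (no hidden label sits in any validation environment, and $\mathcal D_{\mathrm{val}}$ is disjoint from $\mathcal D_{\mathrm{ID}}$ and $\mathcal D_{\mathrm{OOD}}$).
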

\begin{proof}
Candidates are built from the repair interval $\delta_i$ of a source episode
(Eqs.~(\ref{eq:repair-interval})--(\ref{eq:operator-candidate})). Admission
reads $R_{\mathrm{src}}$ (a reset run of the same source episode) and the
validation values $u_j$ in Eqs.~(\ref{eq:candidate-selection})
and~(\ref{eq:program-update}). By Assumption~\ref{ass:partition} no held-out
label reaches the run environment, so held-out episodes can influence neither the
solver's feedback nor the evaluator values that these equations consume. The
parameters $\Theta_0$ appear only as the fixed policy of
Eq.~(\ref{eq:edit-policy}).
\end{proof}

\begin{corollary}[No adaptive selection on held-out splits]
\label{cor:no-selection}
Snapshot scores on $\mathcal D_{\mathrm{ID}}$ and $\mathcal D_{\mathrm{OOD}}$ are
not maximized over, or conditioned on, by any step of the procedure. They are
single evaluations of a program chosen without them.
\end{corollary}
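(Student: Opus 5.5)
The corollary follows directly from Proposition~\ref{prop:isolation}, so I would prove it by tracing data dependence through the update chain. Let $\mathcal A=(A_0,A_1,\dots,A_R)$ be the sequence of persisted snapshots and $\sigma$ the full randomness of the solver.

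\textbf{Step 1: induction on rounds.} I would show that every $A_r$ is a measurable function of $A_0$, the source episodes of rounds $<r$ with their evaluator feedback, the validation evaluations $u_j$ on $\mathcal D_{\mathrm{val}}$, and $\sigma$. The base case is immediate because $A_0$ is fixed before the first round. For the inductive step, Proposition~\ref{prop:isolation} expresses $A_{r+1}$ as a function of $A_r$ and round-$r$ quantities, none of which is computed on $\mathcal D_{\mathrm{ID}}$, $\mathcal D_{\mathrm{OOD}}$, or $\mathcal D_{\mathrm{rep}}$. Composing these functions gives the claim.

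\textbf{Step 2: snapshots are single evaluations.} By the evaluation protocol of Section~\ref{sec:benchmark-evaluation}, each snapshot $A_r$ is frozen, transient state is cleared, and it is scored once. Its held-out score is therefore a deterministic function of $A_r$, the fixed split, and fresh evaluation randomness. Step~1 shows that no argmax in Eq.~(\ref{eq:candidate-selection}) and no comparison in Eq.~(\ref{eq:program-update}) or Eq.~(\ref{eq:order}) takes these scores as arguments. The only selection objective is $Q_{\mathrm{val}}$, which is computed on $\mathcal D_{\mathrm{val}}$. Assumption~\ref{ass:partition} makes $\mathcal D_{\mathrm{val}}$ lineage-disjoint from the held-out sets, so it is a distinct object.

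\textbf{Step 3: no conditioning.} The reported snapshot set consists of all seven rounds. It is fixed in advance, not chosen by score, so there is no post-hoc conditioning on held-out values.

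\textbf{Main obstacle.} The hard part is ruling out indirect leakage, for example through the evaluator feedback, recorded values (Assumption~\ref{ass:recorded}), or shared environment state. Assumption~\ref{ass:partition} covers hidden labels, since they never enter $D_{E,t}$. The reset before each evaluation covers transient state. I would also argue that $\mathcal D_{\mathrm{rep}}$ evaluations are read-only and never written into $\mathcal E_r$. This holds because $\mathcal E_r$ is built only from source trajectories, by Eqs.~(\ref{eq:repair-interval})--(\ref{eq:operator-candidate}).
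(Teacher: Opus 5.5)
Your proposal is correct and follows the paper's route: the paper gives no separate proof and reads the corollary directly off Proposition~\ref{prop:isolation} together with Assumption~\ref{ass:partition}, and your induction over the update chain makes explicit the step from a single update to every persisted snapshot (in the implementation that chain is indexed per candidate rather than per round, since each candidate is gated immediately, but the induction is unchanged). Your extra remarks, that every snapshot is reported and that $\mathcal D_{\mathrm{rep}}$ evaluations are read-only, agree with the protocol and go slightly beyond what the paper states.
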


\subsection{The Noise Guard: What It Buys and What It Costs}
\label{app:theory-guard}

The literal rule admits a candidate on any strict gain in $Q_{\mathrm{val}}$.
With one stochastic solve per validation episode, a single lucky episode
suffices. The implementation therefore requires the gain to be supported by at
least $m$ individually improved episodes and by at most $v$ regressed ones.

\begin{proposition}[False promotion under the noise guard]
\label{prop:guard}
Consider a candidate whose effect on each of $n$ affected validation episodes of
one discipline is independent, with the episode improving with probability $a$,
regressing with probability $b$, and unchanged otherwise. Let $U$ and $V$ be the
numbers of improved and regressed episodes. The literal rule admits the
candidate iff $U>V$. The guarded rule admits it iff $U>V$, $U\ge m$, and
$V\le v$. Then
\begin{equation}
\Pr[\text{guard admits}]\;\le\;\Pr[U\ge m]\;\le\;\binom{n}{m}a^{m},
\label{eq:guard-bound}
\end{equation}
and for $n=m=2$, $v=1$,
\begin{equation}
\Pr[\text{literal admits}]=a^2+2a(1-a-b),
\qquad
\Pr[\text{guard admits}]=a^2 .
\label{eq:guard-exact}
\end{equation}
In particular, when the candidate has no effect and noise is symmetric
($a=b=p$), the literal rule admits it with probability $2p-3p^2$ and the guard
with probability $p^2$, a ratio of $(2-3p)/p$.
\end{proposition}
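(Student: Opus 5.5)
The plan models the $n$ affected episodes as i.i.d.\ categorical trials with outcomes improved, regressed, or unchanged, with probabilities $a$, $b$, and $1-a-b$. Under this model $(U,V,n-U-V)$ is multinomial. Both admission rules are events in this multinomial space, and the statement reduces to bounding or computing their probabilities.

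For Eq.~(\ref{eq:guard-bound}), the first inequality is event inclusion. The guarded rule requires $U\ge m$ among its conjuncts, so $\{\text{guard admits}\}\subseteq\{U\ge m\}$ and monotonicity of probability applies. For the second inequality I would use a union bound over subsets. If $U\ge m$, then some $m$-subset of the $n$ episodes consists entirely of improved episodes. There are $\binom{n}{m}$ such subsets, and by independence each is all-improved with probability exactly $a^m$. This gives $\Pr[U\ge m]\le\binom{n}{m}a^m$, with no need for exact binomial tail sums.

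For Eq.~(\ref{eq:guard-exact}) with $n=m=2$ and $v=1$, I would enumerate the outcome pairs.
\begin{itemize}
\item The literal rule needs $U>V$, which holds for $(U,V)=(2,0)$ or $(1,0)$; $(1,1)$ is excluded.
\item $\Pr[U=2]=a^2$.
\item $\Pr[U=1,V=0]=2a(1-a-b)$, counting the two orderings of one improved and one unchanged episode.
\item The guard needs $U\ge 2$, which forces $U=2$ and $V=0$. Then $V\le1$ and $U>V$ hold automatically, so the guard admits with probability exactly $a^2$.
\end{itemize}

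The symmetric case substitutes $a=b=p$. The literal probability becomes $p^2+2p(1-2p)=2p-3p^2$, the guard probability is $p^2$, and dividing gives $(2-3p)/p$ for $p>0$.

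The main obstacle is bookkeeping rather than any single estimate. One point to check is that the literal admission event is exactly $U>V$; this requires identifying a strict increase in the per-discipline success count on the affected episodes with $U>V$, via Assumption~\ref{ass:recorded}, which says unaffected episodes reuse the incumbent's result. The other is not double-counting the $(1,1)$ case in the literal computation.
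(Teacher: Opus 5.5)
Your proof is correct and follows the paper's argument essentially step for step: event inclusion for the first inequality, the union bound over the $\binom{n}{m}$ all-improved $m$-subsets for the second, and the split of $\{U>V\}$ at $n=2$ into $(U,V)=(2,0)$ and $(1,0)$, of which only the first survives $U\ge m=2$. The bookkeeping check you flag about identifying the literal rule with $U>V$ via Assumption~\ref{ass:recorded} is unnecessary, because the statement takes that equivalence as part of its model.
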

\begin{proof}
The guard adds conditions to $U>V$, so its admission event is contained in
$\{U\ge m\}$; the second inequality in Eq.~(\ref{eq:guard-bound}) is the union
bound over the $\binom{n}{m}$ subsets of $m$ episodes that must all improve. For
$n=2$, $U>V$ holds in exactly two disjoint ways: both episodes improve
(probability $a^2$), or exactly one improves and the other is unchanged
(probability $2a(1-a-b)$). With $m=2$ only the first survives, and then
$V=0\le v$.
\end{proof}

\begin{remark}[Illustration and price]
\label{rem:guard-price}
For $p=0.1$, a no-effect candidate passes the literal rule with probability
$0.17$ and the guard with probability $0.01$. The price is power. A candidate
that helps each episode with probability $a=0.5$ and harms with $b=0.1$ passes
with probability $0.65$ under the literal rule and $0.25$ under the guard, so
the guard also rejects some genuine one-episode improvements. These values come
from the model above; the measured promotion counts are
in Section~\ref{sec:experiments} and Appendix~\ref{app:data}.
\end{remark}

\begin{corollary}[Stagnation condition]
\label{cor:stagnation}
If $m>n$, the guard admits no candidate whatever its effect, because
$\Pr[U\ge m]=0$. A configuration with $m>n_{\mathrm{val}}$ therefore never
promotes a single-discipline candidate.
\end{corollary}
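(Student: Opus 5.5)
The plan is to reduce the statement to the bound in Eq.~(\ref{eq:guard-bound}) of Proposition~\ref{prop:guard}. The guarded rule admits a candidate only on the event $\{U>V,\ U\ge m,\ V\le v\}$, and this event is contained in $\{U\ge m\}$. So $\Pr[\text{guard admits}]\le\Pr[U\ge m]$ for every effect profile $(a,b)$. Hence it is enough to show that $\{U\ge m\}$ is empty when $m>n$.

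For that step I will use only the definition of $U$. It counts the improved episodes among the $n$ affected validation episodes of the discipline. Therefore $0\le U\le n$ surely, not just almost surely, and this holds whatever the per-episode probabilities and whatever the dependence between episodes. If $m>n$, then $U\ge m$ is impossible and $\Pr[U\ge m]=0$. This also agrees with the union-bound form $\binom{n}{m}a^m$, since $\binom{n}{m}=0$ for $m>n$. The admission probability is therefore zero for every $(a,b)$, which is the claimed "whatever its effect".

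For the second sentence, a single-discipline candidate can change at most the validation episodes of its own discipline. I will take $n\le n_{\mathrm{val}}$, where $n_{\mathrm{val}}$ is the number of validation episodes per discipline, using Assumption~\ref{ass:recorded}: only episodes whose retrieved program slice changes are re-solved, and the others reuse the incumbent's recorded result and so contribute neither to $U$ nor to $V$. Then $m>n_{\mathrm{val}}\ge n$, so the first part applies.

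To get "never promotes" rather than just probability zero, I will invoke Eq.~(\ref{eq:program-update}). The incumbent is retained unless some admitted feasible candidate exists, so every round gives $A_{r+1}=A_r$. The only point needing care is this identification of $n$ with at most $n_{\mathrm{val}}$, that is, that reused recorded episodes cannot count as improvements. Assumption~\ref{ass:recorded} settles this.
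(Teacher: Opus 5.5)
Your proof is correct and follows the paper's own argument: the guard's admission event lies inside $\{U\ge m\}$ (Eq.~(\ref{eq:guard-bound})), and because $U\le n$ holds surely, that event is empty when $m>n$. You also make explicit, via Assumption~\ref{ass:recorded}, why $n\le n_{\mathrm{val}}$ (reused episodes are unchanged and cannot count as improvements), which the paper leaves to the word ``therefore''; this relies on reading ``single-discipline'' as a candidate whose retrieval-slice changes stay within one discipline's validation episodes, which is the intended reading.
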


\begin{remark}[When the program stalls]
Corollary~\ref{cor:stagnation} gives the configuration in which the gate
stalls by construction. In any other configuration the program stalls at round
$r$ exactly when $\mathcal F_r$ contains no member that passes the guard and
strictly improves validation value, so progress is set by the quality of the candidates and the
validation set. Table~\ref{tab:app-evolution} reports the measured promotion
counts per method.
\end{remark}

\subsection{Operator Abstraction}
\label{app:theory-operator}

An Operator candidate is a component $U$ of the final graph $G_i^{+}$ replaced by
one node with the boundary $\partial U$ (Eq.~(\ref{eq:operator-candidate})). Two
properties make this replacement well defined.

\begin{definition}[Convex node set]
$U\subseteq N$ is \emph{convex} in a DAG $G$ if every directed path in $G$
between two nodes of $U$ stays inside $U$.
\end{definition}

\begin{lemma}[Contraction preserves acyclicity iff $U$ is convex]
\label{lem:convex}
Let $G$ be a DAG and $U$ a nonempty node set. The graph $G/U$, obtained by
contracting $U$ into one node, is acyclic iff $U$ is convex.
\end{lemma}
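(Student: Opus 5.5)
We prove the two directions separately, working throughout with the contracted graph $G/U$. Its node set is $(N\setminus U)\cup\{u^\ast\}$, where $u^\ast$ is the new node. Every edge of $G$ with both ends outside $U$ is kept. An edge with exactly one end in $U$ is redirected to $u^\ast$. Edges inside $U$ are dropped, so no self-loop arises. The key tool is a correspondence between directed cycles in $G/U$ and directed paths in $G$.

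\emph{Convex $\Rightarrow$ acyclic.} Suppose $G/U$ contains a directed cycle $C$. If $C$ avoids $u^\ast$, then every edge of $C$ is an edge of $G$ between nodes outside $U$, so $C$ is a cycle in $G$, which is impossible. Hence $C$ passes through $u^\ast$, and we may take it to be simple, so it visits $u^\ast$ exactly once. Write it as $u^\ast\to w_1\to\cdots\to w_k\to u^\ast$ with $k\ge 1$ and every $w_\ell\notin U$. (The case $k=0$ would require a self-loop, which contraction does not create.) Lifting the two edges at $u^\ast$ gives $x,y\in U$ with $(x,w_1)\in E$ and $(w_k,y)\in E$. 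Then $x\to w_1\to\cdots\to w_k\to y$ is a directed path in $G$ between two nodes of $U$ that leaves $U$, contradicting convexity. Note that $x=y$ is allowed here: in that case the lifted path is itself a cycle in $G$, again a contradiction. Either way the conclusion holds.

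\emph{Acyclic $\Rightarrow$ convex.} We argue by contraposition. Suppose $U$ is not convex. Then there are $x,y\in U$ and a directed path $P$ from $x$ to $y$ in $G$ that contains a node outside $U$. Choose a maximal subpath $w_1\to\cdots\to w_k$ of $P$ lying outside $U$, with $k\ge 1$, whose predecessor $x'$ and successor $y'$ on $P$ both lie in $U$. Such a subpath exists because $P$ starts and ends in $U$. Its image $u^\ast\to w_1\to\cdots\to w_k\to u^\ast$ is a directed closed walk in $G/U$, so $G/U$ contains a directed cycle.

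The step needing the most care is fixing the contraction convention so that edges internal to $U$ produce neither a self-loop nor spurious cycles. Once contraction is defined to drop internal edges, both directions reduce to lifting or projecting a single excursion outside $U$, as above.
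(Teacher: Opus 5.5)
Your proof is correct and follows the same route as the paper. For convex $\Rightarrow$ acyclic, you lift a cycle through the contracted node to an excursion outside $U$ between two nodes of $U$, treating equal endpoints as a cycle in $G$. For the converse, you project a maximal outside segment of a non-convex path to a cycle in $G/U$, and your explicit convention that edges internal to $U$ are dropped (so no self-loop arises) fills in a detail the paper leaves implicit.
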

\begin{proof}
($\Leftarrow$) A cycle in $G/U$ cannot avoid the contracted node, since $G$ is
acyclic. It lifts to a path in $G$ that leaves $U$ at some $v\in U$ through
nodes outside $U$ and returns to some $w\in U$. If $v\neq w$ this contradicts
convexity; if $v=w$ it is a cycle in $G$. ($\Rightarrow$) If $U$ is not convex,
some path between nodes of $U$ contains a maximal outside segment
$x_1,\dots,x_k$ entered from $a\in U$ and left toward $b\in U$. Then
$U\to x_1\to\dots\to x_k\to U$ is a cycle in $G/U$.
\end{proof}

\begin{corollary}[A valid partition always exists]
Every singleton is convex, and the maximal weakly connected components
of Eq.~(\ref{eq:repair-attribution}) can be refined along a topological order
into convex pieces. A component whose boundary would still create a cycle is
dropped and logged rather than registered.
\end{corollary}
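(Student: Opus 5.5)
The corollary has two claims: every singleton is convex, and every component of Eq.~(\ref{eq:repair-attribution}) can be split along a topological order into convex pieces. Given both, Lemma~\ref{lem:convex} shows that each piece can be contracted into one Operator node without creating a cycle. The singleton claim is immediate. A directed path from $v$ to $v$ in a DAG has length zero, since a longer one would be a cycle, so it stays inside $\{v\}$.

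For the refinement, I fix a topological order $v_1,\dots,v_n$ of $G_i^{+}$ and take a component $U$. I list its nodes in that order as $u_1,\dots,u_k$ and cut the list greedily. Starting a piece $P=\{u_s\}$, I add the next node $u_{s+1}$ only while $P\cup\{u_{s+1}\}$ remains convex in $G_i^{+}$. Otherwise I close $P$ and open a new piece at $u_{s+1}$. Each piece is convex by construction, and the process terminates because every new piece at least contains a singleton, which is convex.

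The step I expect to be the main obstacle is that convexity of each piece separately is not enough. The pieces are contracted together, and two convex pieces $P,P'$ can still form a cycle in the quotient, via a path $P\to x\to P'$ together with a path $P'\to y\to P$. I would rule this out using the topological order. Because each piece is a set of consecutive elements of $U$'s ordered list, every edge leaving $P$ enters a piece that starts later, or else a node outside $U$ placed later in the order. Any path in the quotient therefore advances in the order of the first elements of the pieces, so no cycle can return to an earlier piece. I would make this precise by defining the rank of a piece as the position of its first element. I would then check that an edge from piece $P$ to piece $P'$ forces $\operatorname{rank}(P)<\operatorname{rank}(P')$, applying the same interval argument used to prove convexity of a single piece, and contract the pieces one at a time, applying Lemma~\ref{lem:convex} to the current quotient graph at each step.

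Finally, the dropping clause covers any case where the greedy pieces still violate the typed boundary condition, because $\operatorname{Compat}$ fails on $\partial U$ or a contraction creates a cycle in the accumulated program. In that case the piece is simply not registered. The remaining graph is a subgraph of a DAG and is therefore still acyclic, so removing a piece cannot break the construction. This shows that a valid partition always exists, in the worst case the partition into singletons.
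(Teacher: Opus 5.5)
Your proposal is correct and follows the route the paper intends. Singletons are trivially convex, so refining each component along a topological order into convex pieces always succeeds (at worst into singletons), Lemma~\ref{lem:convex} makes each contraction acyclic, and the drop-and-log clause is a procedural fallback rather than something to prove. Your extra argument that all pieces of one component can be contracted together is sound because the pieces are consecutive blocks of the topological order, and it goes beyond what the paper states; it is specific to a single component, though, since convex pieces of different components can interleave in that order and form a cycle if contracted jointly.
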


\begin{proposition}[Boundary replay is sound]
\label{prop:breplay}
Let $\widehat o_{i,U}$ pass boundary replay: executed on the recorded inputs at
$\partial^-U$, its outputs at $\partial^+U$ equal the recorded outputs in
$\tau_i^+$ exactly (deterministic nodes) or within $\delta$ in a norm. Then,
with the nodes downstream of $U$ unchanged, (i) the final output equals
$y_i^+$ in the exact case; and (ii) if the map from the boundary outputs to $y$
is $L$-Lipschitz, the final output differs from $y_i^+$ by at most $L\delta$.
\end{proposition}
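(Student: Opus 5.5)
The plan is to view the final graph $G_i^+$ as a composition. The nodes upstream of and parallel to $U$ produce the recorded inputs at $\partial^-U$. The component $U$ maps those inputs to the outputs at $\partial^+U$. The unchanged downstream nodes then map the boundary outputs, together with any other recorded inputs, to the final output $y$. By Lemma~\ref{lem:convex} we may assume $U$ is convex, since nonconvex components are refined or dropped by the corollary that follows it. Convexity makes the contracted graph $G_i^+/U$ a DAG, so this three-stage decomposition is well defined along a topological order. In particular, no path leaves $U$ and re-enters it, which means nothing downstream of $\partial^+U$ feeds back into $\partial^-U$.

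\textbf{First step.} Show that in the replacing execution the inputs arriving at $\partial^-U$ coincide with the recorded ones in $\tau_i^+$. Every node not downstream of $U$ is unchanged and sees exactly its recorded inputs, which we prove by induction along the topological order of $G_i^+/U$. The base case is the reset environment $D_{E,i}$ and the recorded inputs. The inductive step holds because an unchanged node fed recorded values reproduces its recorded outputs. Here we use the boundary-replay convention of Eq.~(\ref{eq:operator-candidate}): recorded inputs are restored and stochastic nodes are replicated. So the hypothesis ``executed on the recorded inputs at $\partial^-U$'' is exactly the situation met in the composed run.

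\textbf{Second step.} Write the final output as $y=\Phi(w,x)$. Here $w$ denotes the boundary outputs at $\partial^+U$, $x$ denotes the remaining recorded values that reach the downstream part, and $\Phi$ is the unchanged downstream map. By the first step, $x$ equals its recorded value.
\begin{itemize}
\item In the exact case $w=w^+$, so $y=\Phi(w^+,x)=y_i^+$, again by induction along the downstream topological order. This proves (i).
\item In the approximate case $\|w-w^+\|\le\delta$. The assumed $L$-Lipschitz property of $w\mapsto\Phi(w,x)$ then gives $\|y-y_i^+\|\le L\delta$, which proves (ii).
\end{itemize}

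\textbf{Main obstacle.} The difficulty is the first step: justifying that the downstream map genuinely depends only on the boundary outputs and fixed recorded values. This requires that no edge from inside $U$ bypasses $\partial^+U$, which is true by the definition of $\partial U$ as all external ports. It also requires that no node downstream of $U$ also feeds $U$, which is exactly convexity. I would make both points explicit before composing. I would also note that for stochastic nodes, ``equal'' is interpreted under the replication protocol of $D_{V,i}$.
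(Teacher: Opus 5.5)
Your proposal is correct and matches the paper's proof, which consists only of your second step: induction along the topological order of the unchanged downstream nodes for (i), and the Lipschitz bound applied to the boundary difference for (ii). The paper leaves implicit the two things you add, namely convexity of $U$ (Lemma~\ref{lem:convex}) to make the decomposition well defined, and the check that the side inputs $x$ and the inputs at $\partial^-U$ take their recorded values (its ``each receives identical inputs''); note that this upstream step rests on determinism or replay reproducibility of the non-downstream nodes, not on the boundary-replay convention, which concerns only $U$ in isolation.
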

\begin{proof}
Evaluate the downstream nodes in topological order. In case (i) each receives
identical inputs and, being deterministic, returns identical outputs, so
identical $y$ follows by induction. Case (ii) is the Lipschitz bound applied to
the boundary difference.
\end{proof}

\begin{remark}[Boundary replay and the source check]
Proposition~\ref{prop:breplay} shows that an Operator reproduces the component
it abstracts on the recorded boundary. A candidate then reproduces its source
task from a reset state with its component versions used
($R_{\mathrm{src}}=1$, Eq.~(\ref{eq:candidate-validation})) and improves
independent validation tasks, which extends the evidence from the recorded
boundary to the full program and to other tasks.
\end{remark}

\subsection{Metrics: Invariance and Sampling Variability}
\label{app:theory-metrics}

\begin{definition}[Relative margin]
For a discipline with task-native score $s$, a reference system with score
$s_{\mathrm{ref}}>0$, and a method with score $s_{\mathrm{M}}$, the relative
margin is $(s_{\mathrm{M}}-s_{\mathrm{ref}})/s_{\mathrm{ref}}$ if higher is
better and $(s_{\mathrm{ref}}-s_{\mathrm{M}})/s_{\mathrm{ref}}$ if lower is
better.
\end{definition}

\begin{proposition}[Unit invariance and sign]
\label{prop:margin}
The relative margin is unchanged under $s\mapsto cs$ for every $c>0$ (for
example, percent versus fraction), and it is positive iff the method is better
than the reference in the metric's own direction. It is not invariant under
translation $s\mapsto s+c$.
\end{proposition}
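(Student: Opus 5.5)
The plan is to work directly from the definition of the relative margin and treat its three claims (scale invariance, sign, and failure of translation invariance) separately. Throughout, the reference score satisfies $s_{\mathrm{ref}}>0$, and the metric direction (higher or lower is better) is fixed by the discipline. A rescaling by $c>0$ does not change that direction.

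\emph{Scale invariance.} Write the margin as $\sigma\,(s_{\mathrm{M}}-s_{\mathrm{ref}})/s_{\mathrm{ref}}$, with $\sigma=+1$ for higher-is-better metrics and $\sigma=-1$ for lower-is-better ones. This form covers both cases of the definition at once. Under $s\mapsto cs$, both the numerator and the denominator are multiplied by $c$, so the factor cancels. The new reference score $cs_{\mathrm{ref}}$ is still positive, so the definition still applies to the rescaled scores. This is a one-line computation.

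\emph{Sign.} Since $s_{\mathrm{ref}}>0$, the margin has the same sign as $\sigma(s_{\mathrm{M}}-s_{\mathrm{ref}})$. This quantity is positive exactly when $s_{\mathrm{M}}>s_{\mathrm{ref}}$ for a higher-is-better metric and exactly when $s_{\mathrm{M}}<s_{\mathrm{ref}}$ for a lower-is-better one. In both cases that is the statement that the method is better than the reference in the metric's own direction. The same argument shows the margin is zero iff the two scores tie.

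\emph{Non-invariance under translation.} One counterexample suffices, and I will build it by direct substitution. Under $s\mapsto s+c$ the numerator is unchanged but the denominator becomes $s_{\mathrm{ref}}+c$. Take $s_{\mathrm{ref}}=1$, $s_{\mathrm{M}}=2$, higher is better, and $c=1$. The margin moves from $1$ to $1/2$.

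The only point needing care, and the closest thing to an obstacle, is domain bookkeeping: the translated reference score must stay positive. Choosing $c>0$ guarantees this. More generally, for $s_{\mathrm{M}}\neq s_{\mathrm{ref}}$ and any $c\neq0$ with $s_{\mathrm{ref}}+c>0$, the margin changes, because the map $x\mapsto1/x$ is injective.
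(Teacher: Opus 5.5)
Your proof is correct and follows the paper's argument. In both, the rescaling factor cancels between numerator and denominator, the sign is read off the definition because $s_{\mathrm{ref}}>0$, and translation shifts only the denominator. Your explicit counterexample and the condition $s_{\mathrm{M}}\neq s_{\mathrm{ref}}$ make the non-invariance claim slightly more precise than the paper's one-line version, which implicitly needs a nonzero numerator.
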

\begin{proof}
Scaling multiplies numerator and denominator by $c$. The sign follows from the
definition, and translation changes the denominator without changing the
numerator.
\end{proof}

\noindent This is why Section~\ref{sec:experiments} compares systems only within
a discipline and averages relative margins, never raw scores, across
disciplines.

\begin{proposition}[Sampling variability of MacroSR]
\label{prop:se}
Suppose, for a fixed program, the outcomes $z_i$ in discipline $d$ are
independent Bernoulli$(p_d)$ over $n_d$ instances. Then
\begin{equation}
\operatorname{Var}\bigl[\operatorname{MacroSR}\bigr]
=\frac{1}{23^{2}}\sum_{d=1}^{23}\frac{p_d(1-p_d)}{n_d}
\;\le\;\frac{1}{4\cdot 23\,n}\quad\text{if } n_d=n\ \ \forall d.
\end{equation}
For $n=64$ the standard error is at most $1.30$ percentage points, and for the
difference of two programs evaluated on independent draws at most $1.84$ points.
\end{proposition}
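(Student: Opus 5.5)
The plan is to compute the variance directly from the definition of MacroSR in Eq.~(\ref{eq:macro-success-rate}), using only independence and the Bernoulli variance, and then to plug in the numbers. I read the hypothesis as saying that all outcomes $z_i$, for every discipline, are mutually independent, so that the per-discipline means are also independent across $d$. This matters: with independence only inside each discipline, cross-discipline covariances would appear and the stated formula would need them to vanish. Under this reading the only inputs are $\operatorname{Var}[z_i]=p_d(1-p_d)$ and the additivity of variance for independent summands.

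First, I write $\operatorname{MacroSR}=\frac{1}{23}\sum_{d}\bar z_d$, where $\bar z_d=\frac{1}{n_d}\sum_{i\in\mathcal I_d}z_i$ and $n_d=|\mathcal I_d|$. Since $\bar z_d$ averages $n_d$ independent Bernoulli$(p_d)$ variables, $\operatorname{Var}[\bar z_d]=\frac{1}{n_d^2}\cdot n_d\,p_d(1-p_d)=p_d(1-p_d)/n_d$. By independence across disciplines, the variances add after scaling by $1/23^2$, which gives the displayed identity. For the inequality, I use $p_d(1-p_d)\le 1/4$ for $p_d\in[0,1]$ together with $n_d=n$. The sum is then at most $23\cdot\frac{1}{4n}$, and dividing by $23^2$ yields $\frac{1}{4\cdot 23\,n}$.

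Second, the numerical claims. For $n=64$ the bound on the variance is $1/5888$, so the standard error is at most $1/\sqrt{5888}\approx 0.01303$, that is $1.30$ percentage points. For two programs evaluated on independent draws, the variance of the difference of their MacroSR values is the sum of their variances. It is therefore at most $2/5888$, and the standard error is at most $\sqrt2\cdot 1.303\approx 1.843$, i.e.\ $1.84$ points.

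The argument is elementary and I expect no real obstacle. The one point to make explicit is the independence assumption across disciplines, and, for the comparison, across the two programs' evaluations: the bound for a difference would fail for paired evaluations on the same instances if the outcomes were negatively correlated. I would state both independence requirements explicitly in the proof.
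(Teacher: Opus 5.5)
Your proposal is correct and follows the paper's proof: independence gives the variance of the average, $p_d(1-p_d)\le 1/4$ bounds the variance by $1/(92n)$, and two independent evaluations double the variance, which yields the $1.30$ and $1.84$ point figures. You also state the cross-discipline and cross-program independence assumptions explicitly, which the paper's one-line argument leaves implicit.
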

\begin{proof}
Independence across disciplines and instances gives the variance of the average.
Each term is at most $1/(4n_d)$, with equality at $p_d=1/2$; summing 23 terms
and dividing by $23^2$ gives $1/(92n)$, whose square root at $n=64$ is
$0.0130$. The difference has variance at most twice this.
\end{proof}

\begin{remark}[Scale of sampling noise]
\label{rem:se}
The bound is the binomial sampling error of MacroSR over $23\times 64=1{,}472$
instances per split. The $+13.59$ point
OOD gain of ScienceClaw over $A_0$ is more than seven times the $1.84$ point
bound.
\end{remark}

\begin{remark}[Summary of the guarantees]
The persisted program is monotone in recorded validation value, feasible at
every promotion, selected without access to the held-out splits, and built from
components that reproduce the evidence they came from. The noise guard lowers
false promotion at a cost in power, and operator replacement preserves
acyclicity.
\end{remark}

\section{Implementation and Configuration}
\label{app:impl}

This appendix specifies how the formalism of Sections~\ref{sec:task}
and~\ref{sec:method} is operationalized: the objects of the paper and the
modules that realize them, the execution runtime, the policy interface, the
evolution procedure, the gate parameters, the models and budgets, and the
configuration of every compared variant and baseline. The benchmark adapters,
solver, runtime, and evolution engine are in the project repository
(\url{https://github.com/beita6969/ScienceClaw}).

\subsection{From Formalism to Modules}
\label{app:impl-map}

\begin{table}[t]
\centering
\caption{Paper objects and the modules that implement them.}
\label{tab:impl-map}
\footnotesize
\setlength{\tabcolsep}{3pt}
\renewcommand{\arraystretch}{1.08}
\begin{tabular}{@{}L{104pt}L{128pt}@{}}
\toprule
Paper object & Implementation \\
\midrule
Task $D_t=(D_T,D_V,D_E)$ & episode: objective and output schema, constraints and evaluator, tools and data \\
Program $A_r=(\mathcal S_r,\mathcal O_r)$ & program of Skills and Operators with version identifiers \\
Fixed model $\Theta_0$ & policy role of the language-model client; never trained \\
$\operatorname{Solve}$, Eq.~(\ref{eq:task-solving}) & solver: one episode, one program, one mode \\
Port schema $\Gamma$, $\operatorname{Compat}$ & schema type with a compatibility rule \\
Graph $G_{t,k}$, Eq.~(\ref{eq:typed-workflow}) & workflow graph with node and edge validation \\
Policy $\pi_{\Theta_0}$, Eq.~(\ref{eq:edit-policy}) & policy proposing one atomic action per turn \\
$\operatorname{Retrieve}$ & lexical retriever over Skills and Operators \\
$\operatorname{Execute}$, Eq.~(\ref{eq:workflow-transition}) & executor with checkpoint and fingerprints \\
$\operatorname{Replay}$, $\operatorname{Eval}$, Eq.~(\ref{eq:execution-evidence}) & clean replay and the episode evaluator \\
$\delta_i$, Eq.~(\ref{eq:repair-interval}) & attribution of evolution instances \\
$\Pi_{\mathrm{ctrl}},\Pi_{\mathrm{exec}},\operatorname{CC}_\Gamma$ & edit split with convex components \\
Skill candidate, Eq.~(\ref{eq:skill-candidate}) & skill patch \\
Operator candidate, BReplay & operator abstraction and boundary replay \\
$R_{\mathrm{src}}$, Eq.~(\ref{eq:candidate-validation}) & source replay check with the use test \\
$\mathcal F_r$, Eq.~(\ref{eq:candidate-selection}) & validation gate \\
Update, Eq.~(\ref{eq:program-update}) & evolver \\
MacroSR, Eq.~(\ref{eq:macro-success-rate}) & benchmark metrics \\
$\mathcal D_{\mathrm{src}},\mathcal D_{\mathrm{val}},\mathcal D_{\mathrm{ID}},\mathcal D_{\mathrm{OOD}},\mathcal D_{\mathrm{rep}}$ & split plan with frozen manifests \\
\bottomrule
\end{tabular}
\end{table}

Table~\ref{tab:impl-map} maps each object to its module. Every run writes
receipts: the resolved configuration, each program snapshot $A_r$, and, per
episode, the trajectory, the final graph, the output, and the evaluation
record, together with a ledger of tokens and wall time per phase.

\subsection{Execution Runtime}
\label{app:impl-runtime}

\paragraph{Node kinds and schemas.}
A node is a \emph{tool} call into the task's visible tools $\mathcal T_t$, a
\emph{code} node holding a Python function of its inputs and configuration, an
\emph{llm} node that maps a prompt template over a list of items, an
\emph{operator} node that expands a registered Operator, or the single
\emph{submit} node whose input is the required output $y_t$. A port schema
$\Gamma$ has a type (any, number, text, array, table, list, dict, series), a
shape (integers or named symbols), a unit, and provenance. An edge is valid when
the types agree or one is \texttt{any}, the shapes unify with symbols bound
consistently, and the units are equal, unspecified on either side, or bridged by
an explicit conversion (a factor and an offset) attached to the edge, which is
recorded in provenance. A graph is valid when it is acyclic, all ports exist,
every edge is compatible, and it has at most one submit node.

\paragraph{Checkpoints and fingerprints.}
Each node has a fingerprint, the hash of its specification, its upstream
fingerprints, and its edge wiring. The checkpoint $\chi$ maps fingerprints to
recorded outputs. After an edit only nodes whose fingerprint is absent from
$\chi$ run again, in topological order, so an edit to one node recomputes its
descendants and nothing else. A node whose outcome is not reproducible, such as
a failed language-model call or a first timeout, is marked transient, and
neither it nor any descendant enters the checkpoint.

\paragraph{Sandbox and time limits.}
Code nodes run in a separate process with their own working directory and a time
limit, and tool nodes run in trusted adapter code under one lock. A node has a
limit of $\max\_node\_s$ seconds, and an operator node as a whole has twice that
limit. Hidden labels never enter the run environment: adapters keep them inside
the evaluator. A static scan of code nodes rejects absolute paths and
directory traversal, references to the dataset root and to the evaluator
package, network, download, and process-spawning modules, dynamic evaluation
and imports, and environment or credential access; its violations feed the
integrity check in $\mathbf H_{\mathrm{val}}$, and the worker runs with a
scrubbed environment.

\paragraph{Diagnostics.}
After each execution the executor returns per-node status and error text,
output summaries (type, shape, unit, fraction of finite values, range, head),
violated contract conditions, the visible constraint checks on the submit
output, and the visible development score when the task provides one. These
fields form $f_{t,k}$ in Eq.~(\ref{eq:workflow-transition}).

\paragraph{Reset replay.}
Replay executes the complete graph in a fresh run directory with an empty
checkpoint. A result is reproducible when the replayed output matches the
executor's output within the episode tolerance. A clean replay runs whenever a
submit output differs in fingerprint from the last replayed one, and the
sequence of replays gives the evidence $e_{t,k}$.

\subsection{Policy Interface}
\label{app:impl-policy}

\paragraph{Actions.}
At each turn the policy emits one JSON object with a short rationale, one
action, and the identifiers $\nu_{t,k}$ of the Skills and Operators it used.
The actions are \texttt{add\_node}, \texttt{remove\_node}, \texttt{modify\_node}
(with a patch to code, a find-and-replace code edit, prompt, configuration,
ports, or wiring), \texttt{add\_edge}, \texttt{remove\_edge}, and
\texttt{finish}. A rejected action leaves the graph unchanged and returns the
reason as feedback. Edits that change only configuration, edges, node removal,
or add tool, operator, or submit nodes are control edits ($\Pi_{\mathrm{ctrl}}$),
and edits that add or modify code or prompt nodes are executable edits
($\Pi_{\mathrm{exec}}$).

\paragraph{What the policy sees.}
The system prompt describes what each action, node kind, and interface rule
does, the deliverable format, and a uniform acceptance statement: a deliverable
counts as solved when the task metric beats the task's reference by a fixed
margin, all constraints hold, and the final workflow replays reproducibly within
the budget. It names no metric value, reference method, or task-specific recipe.
The initial program $A_0$ is empty; strategies reach the prompt only as
retrieved Skills, and the prompt is scanned for strategy phrases to keep it so.
Each turn adds the objective, the tool signatures with port descriptions, the
retrieved Skills and Operators, the remaining budget, the feedback of the last
step, and a compact history of earlier steps. The policy never sees
hidden-label scores.

\paragraph{Retrieval.}
Skills and Operators are ranked against the episode query (objective, task
type, tags, discipline, and required-output type) by a lexical Okapi BM25 over
their text, plus bonuses for matching discipline, task type, tags, and, for
Operators, input port types. A component is retrievable only if it matches the
episode by metadata or covers at least 15\% of the query's inverse-document
frequency mass; ranking is deterministic, and the top 4 Skills and top 6
Operators enter the context. A Skill or Operator learned in one discipline stays
retrievable in another when it is relevant to the episode, which is how
cross-discipline transfer arises. The hash of the retrieved version identifiers
and content digests is the \emph{retrieval slice} of an episode, and a candidate
is validated on an episode only when it changes that slice
(Section~\ref{app:impl-gate}).

\subsection{Evolution Procedure}
\label{app:impl-evolution}

One round processes one source episode per discipline in a fixed source order,
so seven rounds give $7\times 23=161$ source episodes and 161 candidates per
method. The default schedule is per candidate: each candidate is validated
immediately against the current incumbent, and the next source episode already
uses the updated program. Snapshots $A_0,\dots,A_7$ are taken at round ends.

\begin{enumerate}
\item \textbf{Solve the source episode} with $A_r$ in source mode, which stops
at the first replay-verified pass. The final workflow is the replay-verified
passing graph with the best visible development score.
\item \textbf{Extract the evolution instance.} The pass is $e_i^{+}$, the latest
replay-verified failure before it is $e_i^{-}$ (or none), and the actions
between them form $\delta_i$ (Eq.~(\ref{eq:repair-interval})). Without
$e_i^{-}$, only an Operator candidate is formed.
\item \textbf{Split the repair.} Control edits give $\Delta_i^{\mathcal S}$.
Executable nodes created or modified in $\delta_i$, together with the
executable nodes weakly connected to them, are grouped into maximal connected
components that are made convex by splitting along a topological cut
(Lemma~\ref{lem:convex}). Nodes expanded from a registered Operator are not
candidates unless $\delta_i$ touched them.
\item \textbf{Form candidates.} The Skill candidate patches the attributed
Skills, or creates one if none was attributed, from the control edits and the
failure and success evidence, without instance-specific values or answers. Each
component becomes an Operator candidate: its body is the subgraph, concrete
integer dimensions at its boundary are replaced by symbols so that instance
sizes do not make it incompatible with other episodes, its contract $\kappa$
records the types, shapes, units, finiteness, and ranges observed at the
boundary together with episode tags, and its provenance records episode, steps,
and parent program version. Names and descriptions are documentation only.
\item \textbf{Boundary replay.} Recorded inputs at $\partial^{-}U$ are loaded,
the Operator runs in isolation in a fresh directory, and its outputs at
$\partial^{+}U$ are compared with $\tau_i^{+}$ within tolerance. Only passing
Operators enter the bundle $B_i$.
\item \textbf{Apply and check the source.} The bundle is applied atomically,
assigning version identifiers $\omega_i$, and the source episode is solved
again from a reset state. $R_{\mathrm{src}}=1$ requires a passing
replay-verified result and that every new version was used: an Operator version
is used if an operator node with that reference is in the evidence graph and ran
successfully, and a Skill version is used if it is in $\nu$ of an applied step
up to the evidence step. Citations in rejected or failed actions do not count.
\item \textbf{Gate.} The candidate is evaluated on the validation episodes and
admitted under the rules of Section~\ref{app:impl-gate}; otherwise the
incumbent is kept.
\end{enumerate}

\subsection{Gate Parameters and Implementation Choices}
\label{app:impl-gate}

Table~\ref{tab:impl-gate} lists the parameters of the gate and of the
benchmark protocol. The following choices fix details that
Eqs.~(\ref{eq:candidate-selection}) and~(\ref{eq:program-update}) leave open.

\begin{table}[t]
\centering
\caption{Gate and protocol parameters of all reported runs.}
\label{tab:impl-gate}
\footnotesize
\setlength{\tabcolsep}{3pt}
\renewcommand{\arraystretch}{1.08}
\begin{tabular}{@{}L{118pt}L{114pt}@{}}
\toprule
Parameter & Value \\
\midrule
Rounds $R$ & 7 \\
Items per episode & 16 \\
Validation episodes per discipline & 2 \\
ID and OOD episodes per discipline & 4 each (64 instances) \\
Update schedule & per candidate \\
$Q_{\mathrm{val}}$ & MacroSR, then mean normalized score \\
Score margin $\varepsilon$ & 0.02 \\
Hard-constraint mode $\mathbf H_{\mathrm{val}}$ & no new violation against the incumbent \\
Absolute token budget & 250,000 per validation episode \\
Relative cost cap $\beta$ & 0.5 \\
Wall-time budget & 1,800 s per validation episode \\
Lazy re-validation & on \\
Noise guard $(m,v)$ & at least 2 improved, at most 1 regressed episode \\
Boundary-replay repeats & 1 \\
Evolution instances per source episode & 1 \\
Infrastructure retries & 2, backoff from 30 s \\
Partition and sampling seed & 20260928 \\
\bottomrule
\end{tabular}
\end{table}

\paragraph{Choice of $Q_{\mathrm{val}}$ and $\mathbf H_{\mathrm{val}}$.}
$Q_{\mathrm{val}}$ is MacroSR on $\mathcal D_{\mathrm{val}}$ with ties broken by
the mean normalized score, so a candidate that solves the same episodes but
solves them measurably better can be admitted (Eq.~(\ref{eq:order})).
$\mathbf H_{\mathrm{val}}$ collects (a) no leakage or sandbox violation, (b) a
valid output schema, and (c) no hard-constraint violation that the incumbent did
not show. An episode that produced no output is already penalized by MacroSR
and has no output to violate a constraint with, so it cannot veto a candidate
through $\mathbf H_{\mathrm{val}}$; a solver or gateway error does block
admission.

\paragraph{Budget.}
Eq.~(\ref{eq:budget-two}) makes $\mathbf B$ concrete: tokens are logical (spent
plus cached) so that a run served from the response cache is judged like the
original run, and wall time is checked against its own budget. The violated
condition is recorded with each rejected candidate.

\paragraph{Validation cost control.}
A validation episode is solved again only if the candidate changes its
retrieval slice, and otherwise the incumbent's recorded result is reused. The
number of re-solved and reused episodes is recorded per candidate, so the
validation cost grows with the candidate rather than with the library.

\paragraph{Noise guard.}
With one stochastic solve per validation episode and two validation episodes per
discipline, one lucky episode would admit a candidate under the literal rule.
The implementation therefore requires the gain to be supported by at least two
episodes that individually improved (the episode solves where it failed, or
solves with a normalized-score gain of at least $\varepsilon$) and by at most
one that regressed, comparing the entries already recorded and drawing no extra
samples (Proposition~\ref{prop:guard}). Setting the first threshold to one
restores the literal rule.

\paragraph{Infrastructure outages.}
A solve cut short by a gateway outage is retried with doubling backoff, then
becomes an error entry that blocks admission and is never reused. A source solve
is retried in the same way and, if it still fails, deferred once to the end of
its round. Outages are recorded and are never scored as task failures.

\subsection{Models, Sampling, and Budgets}
\label{app:impl-models}

\begin{table}[t]
\centering
\caption{Models and roles. All systems share the orchestrator.}
\label{tab:impl-models}
\footnotesize
\setlength{\tabcolsep}{3pt}
\renewcommand{\arraystretch}{1.08}
\begin{tabular}{@{}L{66pt}L{166pt}@{}}
\toprule
Role & Setting \\
\midrule
Orchestrator (policy, executor, patch) & Qwen3.5-27B, open weights, self-hosted OpenAI-compatible server, thinking disabled, temperature 0 \\
Evolution & GPT-5.4 in isolated sessions \\
Qwen-only variant & Qwen3.5-27B in evolution \\
RuleEvo variant & no model in evolution; deterministic trace projection and compiler \\
\midrule
Output limit per call & policy 8,000, executor 2,000, patch 4,000 tokens \\
Policy output format & JSON object \\
Response cache & on; identical contexts return identical outputs \\
Parameter updates & none, $\Theta_{r}=\Theta_0$ for all $r$ \\
\bottomrule
\end{tabular}
\end{table}

Table~\ref{tab:impl-models} lists the models. The orchestrator is the only
model that executes tasks, and its parameters never change. Evolution
operates on source trajectories and evaluator feedback and has no access to
validation, ID, or OOD labels.

\paragraph{Solver and episode budgets.}
A source or evaluation episode has at most 24 planner steps, 200,000 policy tokens, and
1,800 to 3,600 seconds of wall time depending on the task, with per-node limits
of 180 to 900 seconds (Appendix~\ref{app:bench-spec} lists the per-discipline
values). The six most recent steps are shown in full and earlier steps are
summarized. Evaluation episodes run in evaluation mode: the final workflow is
the latest valid, fully wired workflow, and no hidden verification happens per
step. Table~\ref{tab:impl-gate} lists the caps of a validation episode at the gate, and
Table~\ref{tab:bench-constraints} the budget of a source or evaluation episode.

\paragraph{Compute.}
Orchestrator calls are served by self-hosted vLLM servers on H800 GPUs; the evaluator code of
each task runs on CPU, and tasks that need a GPU, such as segmentation or source
separation, call a GPU tool broker.

\subsection{Configuration of the Ablation Variants}
\label{app:impl-ablation}

\begin{table*}[t]
\centering
\caption{Configuration of the eighteen variants. Each variant changes one factor of the full system and leaves
the runtime, budgets, source stream, and validation data unchanged.}
\label{tab:impl-ablation}
\footnotesize
\setlength{\tabcolsep}{4pt}
\renewcommand{\arraystretch}{1.08}
\begin{tabular}{@{}L{70pt}L{82pt}L{300pt}@{}}
\toprule
Variant & Factor & Single change \\
\midrule
ScienceClaw (full) & none & Dynamic workflow orchestration and linked Skill--Operator self-evolution. \\
Frozen & evolution & All persistent updates are off; runtime and budgets are unchanged. \\
\midrule
Workflow only & scope of updates & Only the structured workflow-template and routing fields inside Skills are updated; other Skill text and Operators stay frozen. \\
w/o Skill evol. & scope of updates & Only ordinary Skill strategy text is frozen; the structured workflow fields and Operators may still change. \\
w/o Oper. evol. & scope of updates & Operator creation and fusion are disabled in the full system. \\
w/o Work. evol. & scope of updates & Only the workflow-template and routing fields inside Skills are frozen; ordinary Skill text and Operators may still change. \\
w/o link. evol. & linkage & Skills and Operators are updated separately, and not committed together as one atomic bundle. \\
\midrule
RuleEvo & candidate source & Generative evolution is off; candidates come only from deterministic trace projection and a compiler. \\
Qwen-only & candidate source & Qwen replaces GPT-5.4 in evolution. \\
\midrule
w/o src. replay & gate & Source replay still runs, but its result does not enter the admission rule. \\
w/o sci. constr. & gate & The agent and the admission rule do not see the scientific hard constraints; the final official scorer still checks them. \\
w/o ind. valid. & gate & The evolving agent scores its own candidates and freezes; the official scorer is applied afterward only for blind evaluation. \\
w/o cross-task mem. & memory & Reuse of Skills, Operators, and failure memory across tasks is off. \\
w/o IID select. & gate & No paired IID selection; the latest valid candidate is frozen. \\
w/o OOD transfer & transfer & The preregistered robustness pack is off; the entry is not available when no independent callable pack exists. \\
\midrule
Fixed workflow & orchestration & Dynamic canvas orchestration is off; a fixed tool chain executes. \\
Single operator & orchestration & Each sample may call exactly one task Operator. \\
Single turn & orchestration & One model generation per sample, with no feedback, rollback, or replanning. \\
\bottomrule
\end{tabular}
\end{table*}

\begin{table*}[t]
\centering
\caption{Baseline protocol. Every baseline runs on the same orchestrator, evolution model (GPT-5.4), tools, source stream, validation data, and update budget as ScienceClaw.}
\label{tab:impl-baselines}
\footnotesize
\setlength{\tabcolsep}{4pt}
\renewcommand{\arraystretch}{1.08}
\begin{tabular}{@{}L{62pt}L{100pt}L{120pt}L{150pt}@{}}
\toprule
Method & What it evolves & Fixed across methods & Method-specific setup \\
\midrule
Frozen ($A_0$) & nothing & orchestrator, tools, budgets & none \\
SkillOpt & Skills & orchestrator, evolution model, tools, source stream, validation data, update budget & best released configuration, same evolution rounds as ScienceClaw \\
TTE & tools at test time & same & same \\
AHE & agent harness & same & same \\
EvoMaster & programs and memory & same & same \\
EvoScientist & programs and memory & same & same \\
ScienceClaw & linked Skill--Operator program & same & default configuration of Appendix~\ref{app:impl-gate} \\
\bottomrule
\end{tabular}
\end{table*}

Table~\ref{tab:impl-ablation} lists the variants in four groups. The first
group removes parts of the update scope, the second changes how candidates are
produced, the third removes one component of the gate, and the last group
restricts execution itself. Variants in the same group share everything except
the named factor, so each difference in Table~\ref{tab:scienceclaw-ablation-iid}
and Figure~\ref{fig:ablation-mean-map} is attributable to that factor. The
execution variants are enforced by the solver: in single-operator orchestration
the graph may hold only tool nodes, one code node, and the submit node; in
fixed-workflow orchestration only edits to the code node and its configuration
are accepted; in single-turn orchestration the policy must emit the complete
graph in one batch action.

\subsection{Baseline Configuration and Fairness}
\label{app:impl-baselines}

Table~\ref{tab:impl-baselines} states the protocol. Each baseline runs its best
released configuration through fixed ScienceClaw-Eval adapters, and adapters and
initial assets are fixed without access to ID or OOD instances or evaluators.
No method is tuned on $\mathcal D_{\mathrm{ID}}$ or $\mathcal D_{\mathrm{OOD}}$.
Every baseline runs its own evolution calls on GPT-5.4, the evolution model of ScienceClaw, for as many evolution rounds as ScienceClaw.

\section{ScienceClaw-Eval Specification}
\label{app:bench-spec}

This appendix documents how ScienceClaw-Eval is built and scored: task sources,
partition rules, evaluator definitions, acceptance criteria, hard constraints,
budgets, and the checks that keep evaluation honest. Per-discipline values are in
Tables~\ref{tab:bench-spec} and~\ref{tab:bench-constraints}.

\subsection{Construction Pipeline}
\label{app:bench-construction}

Construction follows the four stages of Figure~\ref{fig:benchmark-construction}.

\paragraph{Task collection.}
Each of the 23 disciplines is one division of the ANZSRC 2020 Fields of Research
(divisions 30 to 52), and its task is built on a public scientific resource with
explicit inputs, a verifiable target, and a recorded license. Every resource is
pinned to a version, release, or commit, and its files are checked against
recorded checksums when the benchmark loads.

\paragraph{Executable instantiation.}
A task adapter turns the resource into an episode $D_t=(D_T,D_V,D_E)$. $D_T$ is
the objective and the required output with its schema. $D_E$ is the execution
environment: visible training and development data, the scientific tools of the
task, and, where the discipline has one, a domain library of documented
functions. $D_V$ is the evaluator, its hard constraints, and its acceptance rule.
An episode contains $16$ evaluation items, so a split of four episodes holds $64$
instances.

\paragraph{Validation.}
Domain reviewers inspect each evaluator, its constraints, and its reference
execution (Appendix~\ref{app:audit}). Reset replay then confirms that a reference
solution executes, scores as expected, and reproduces.

\paragraph{Evaluation partitioning.}
Partitions are fixed before any run, drawn with a seeded permutation that is
stable in the number of episodes, and written to manifests that are hashed and
verified on every load. Items inside an episode are drawn without replacement,
and no item is reused across episodes of a held-out split.

\subsection{Splits and Source Stream}
\label{app:bench-splits}

\begin{table}[t]
\centering
\caption{Splits of one discipline at the default plan.}
\label{tab:bench-splits}
\footnotesize
\setlength{\tabcolsep}{3pt}
\renewcommand{\arraystretch}{1.08}
\begin{tabular}{@{}L{44pt}L{44pt}L{56pt}L{80pt}@{}}
\toprule
Split & Episodes & Items & Role \\
\midrule
$\mathcal D_{\mathrm{src}}$ & 7 (one per round) & 16 each & evolution evidence \\
$\mathcal D_{\mathrm{val}}$ & 2 & 16 each & candidate selection \\
$\mathcal D_{\mathrm{ID}}$ & 4 & 64 instances & held-out, same source \\
$\mathcal D_{\mathrm{OOD}}$ & 4 & 64 instances & held-out source \\
$\mathcal D_{\mathrm{rep}}$ & frozen copies of source episodes seen & 16 each & retention \\
\bottomrule
\end{tabular}
\end{table}

Table~\ref{tab:bench-splits} gives the split sizes. Across 23 disciplines the
source stream has $7\times 23=161$ episodes, the same for every method. Round
$r$ presents one source episode per discipline in a fixed discipline order, and
a source pool smaller than seven episodes is recycled with its items marked as
reused. Snapshots $A_0,\dots,A_7$ are evaluated after each round. The
replay set $\mathcal D_{\mathrm{rep}}$ at an evaluation point consists of frozen
copies of the source episodes encountered so far, so retention is measured on
tasks that supplied evolution evidence while no new evidence is drawn from them.

\paragraph{Disjointness.}
Splits are disjoint in items and in a discipline-specific lineage unit, such as
the capture in FoR30, the track in FoR36, the series in FoR35, the student in
FoR39, the simulated dataset in FoR44, and the contract in FoR48. The loader
re-verifies that no item, capture, or lineage identifier occurs in two roles
whenever the roles are loaded, and a violation prevents the benchmark from
running.

\paragraph{Held-out sources.}
$\mathcal D_{\mathrm{OOD}}$ of every discipline is held out by source, by year in FoR37 and
by economy in FoR38 (Table~\ref{tab:bench-ood}). FoR30 OOD is scored by the leaf-instance panoptic quality at IoU $\ge 0.5$.

\begin{table*}[t]
\centering
\caption{OOD source of each discipline.}
\label{tab:bench-ood}
\footnotesize
\setlength{\tabcolsep}{3pt}
\renewcommand{\arraystretch}{1.08}
\begin{tabular}{@{}rlrlrl@{}}
\toprule
FoR & OOD source & FoR & OOD source & FoR & OOD source \\
\midrule
30 & CropAndWeedAndLeaf & 38 & WDI, other economies, origin 2015 & 46 & SWE-bench Verified \\
31 & TAPE fluorescence & 39 & EdNet-KT1 & 47 & UD Indonesian, Swedish Sign Language, French, Hindi \\
32 & Dryad hippocampus & 40 & ToyNscale & 48 & ACORD \\
33 & EULP & 41 & USGS river metabolism & 49 & SMT-LIB 2024 QF\_NIA \\
34 & BACE & 42 & SepsisExp & 50 & ETHICS \\
35 & Monash Tourism Quarterly & 43 & ICDAR2019 POCR & 51 & PhononDB \\
36 & MoisesDB & 44 & IHDP & 52 & Psych-101 \\
37 & WeatherBench 2 (ERA5, 2020) & 45 & BLOOM Vist MaMa &  &  \\
\bottomrule
\end{tabular}
\end{table*}

\subsection{Evaluator Definition}
\label{app:bench-evaluator}

\paragraph{Success indicator.}
For instance $i$, $z_i(A_r)=1$ iff the run completes within budget, the
task-native metric meets the acceptance rule, every hard constraint holds, and the
workflow replays reproducibly (Section~\ref{sec:benchmark-evaluation}). The
task-native metric is evaluated for each instance, and the acceptance rule compares
it with a reference predictor.

\paragraph{Reference predictors and margins.}
Every task has a deterministic reference predictor computed from the visible
data of the same episode (Table~\ref{tab:bench-spec}). The acceptance rule is
\emph{the metric beats the reference by a fixed margin}, either additive or
multiplicative, and the margin is fixed per task before any run. The reference
itself is never accepted. The policy is told only that the task metric has to
beat the task's reference by a fixed margin; the reference method, the margin,
and metric values are not part of any prompt, objective, or tool description,
and automated tests scan the episodes of the forecasting tasks to confirm it.

\paragraph{Official scorers.}
Where a task has an official scorer, the evaluator calls or reproduces it:
FoR30 uses the PhenoBench evaluator at a pinned commit, FoR42 the official
sepsis-score script, FoR47 the CoNLL~2018 rules for valid trees, FoR48 the
ContractNLI scorer (the implementation of precision at recall 80\% agrees
exactly with the official function in its tests), and FoR35 reproduces the
published seasonal-naive MASE of the Monash archive (1.631) in its tests. FoR44
is cross-checked against the data team's R outputs (the OLS coefficient equals
R's \texttt{lm} within $10^{-8}$, and the true effect and response standard
deviation are identical).

\paragraph{Failure semantics.}
A held-out episode never drops out of a pooled metric. An episode whose solve
produced no output, an invalid output, or a crashed evaluator contributes the
adapter's reference payload, so it counts as reference level, never better, with
$z=0$, and it is flagged in the records. Coverage is checked against the
episodes the split promises, and an incomplete discipline is excluded from a
comparison rather than compared on different episode sets. An infrastructure
failure, such as a gateway outage, is not a task failure: it is logged and the
episode is retried at the next evaluation run.

\subsection{Per-Discipline Specification}
\label{app:bench-table}

\begin{table*}[t]
\centering
\caption{Task sources, metrics, references, and acceptance rules. $\uparrow$ and $\downarrow$ give the
direction of the metric, $s$ the score of an instance, and $s_{\mathrm{ref}}$ that of its reference.}
\label{tab:bench-spec}
\fontsize{7.2}{8.4}\selectfont
\setlength{\tabcolsep}{3pt}
\renewcommand{\arraystretch}{1.06}
\begin{tabular}{@{}l L{60pt} L{92pt} L{54pt} L{58pt} L{132pt} L{66pt}@{}}
\toprule
FoR & Discipline & IID source (version) & Item & Metric & Reference predictor & Acceptance \\
\midrule
30 & Agricultural, veterinary and food & PhenoBench v1.1.0 & image & PQ+ $\uparrow$ & excess-green index with Otsu threshold & $s>s_{\mathrm{ref}}+2.0$ \\
31 & Biological & ProteinGym v1.3 & assay & Spearman $\uparrow$ & mean training score of the mutated position & $s>s_{\mathrm{ref}}+0.03$ \\
32 & Biomedical and clinical & MSD Task04 Hippocampus & volume & DSC $\uparrow$ & location-only probabilistic atlas & $s>s_{\mathrm{ref}}+0.03$ \\
33 & Built environment & BuildingsBench v1.0.0 & building window & NRMSE (\%) $\downarrow$ & previous-day persistence & $s\le 0.90\,s_{\mathrm{ref}}$ \\
34 & Chemical & OGB \texttt{ogbg-molhiv} & molecule & ROC-AUC $\uparrow$ & logistic regression on 11 graph-count features & $s\ge s_{\mathrm{ref}}+0.05$ \\
35 & Commerce and services & Monash Tourism Monthly & series & MASE $\downarrow$ & seasonal naive & $s\le 0.95\,s_{\mathrm{ref}}$ \\
36 & Creative arts and writing & MUSDB18 v1.0.0 & 6-s excerpt & SDR (dB) $\uparrow$ & mixture as the estimate of every stem & $s\ge s_{\mathrm{ref}}+3$ dB \\
37 & Earth & WeatherBench 2 (ERA5, 2019) & initialization & RMSE (K) $\downarrow$ & 24-h persistence & $s\le 0.96\,s_{\mathrm{ref}}$ \\
38 & Economics & World Bank WDI, origin 2012 & economy series & sMAPE (\%) $\downarrow$ & no-change forecast & $s\le 0.97\,s_{\mathrm{ref}}$ \\
39 & Education & Eedi NeurIPS 2020 Task 4 & student & 10-mask acc. $\uparrow$ & most common training correctness per question & $s\ge s_{\mathrm{ref}}+0.02$ \\
40 & Engineering & DCASE 2024 Task 2 & audio clip & DCASE $\uparrow$ & nearest-neighbour distance on log-mel statistics & $s\ge s_{\mathrm{ref}}+0.02$ \\
41 & Environmental & NEON aquatics (neon4cast) & site and date & CRPS $\downarrow$ & day-of-year climatology & $s\le 0.95\,s_{\mathrm{ref}}$ \\
42 & Health & PhysioNet/CinC 2019 sepsis & patient stay & Clin. utility $\uparrow$ & balanced logistic regression on trivial causal features & $s>\max(s_{\mathrm{ref}},0)+0.05$ \\
43 & History and archaeology & HIPE-OCRepair 2026 (v0.9.5) & OCR text & cMER-micro $\downarrow$ & copy of the OCR input & $s\le 0.95\,s_{\mathrm{ref}}$ \\
44 & Human society & ACIC 2016 & simulated dataset & nRMSE $\downarrow$ & OLS coefficient of the treatment & $s\le 0.90\,s_{\mathrm{ref}}$ \\
45 & Indigenous studies & AmericasNLP 2026 & image caption & chrF++ $\uparrow$ & per-language medoid caption & $s\ge s_{\mathrm{ref}}+1.0$ \\
46 & Computing & HumanEval & problem & pass@1 $\uparrow$ & memorizer of visible examples & $s\ge s_{\mathrm{ref}}+0.5$ \\
47 & Language and culture & UD 2.2 Marathi, Italian, Arabic, Croatian & sentence & LAS $\uparrow$ & right-branching chain with frequent relations & $s\ge s_{\mathrm{ref}}+0.15$ \\
48 & Law & ContractNLI & contract--hypothesis pair & mAP $\uparrow$ & lexical overlap scores with majority label & $s\ge s_{\mathrm{ref}}+0.10$ \\
49 & Mathematical & SMT-LIB 2025 QF\_NIA, QF\_NIRA & formula & Oracle acc. $\uparrow$ & majority status of training labels & $s\ge s_{\mathrm{ref}}+0.2$ \\
50 & Philosophy and religion & Touch\'e23-ValueEval & argument & F1 $\uparrow$ & predict all 20 categories & $s\ge s_{\mathrm{ref}}+0.05$ \\
51 & Physical & Matbench \texttt{phonons} v0.1 & crystal & MAE (cm$^{-1}$) $\downarrow$ & 5-nearest-neighbour regression & $s\le 0.8\,s_{\mathrm{ref}}$ \\
52 & Psychology & Psych-201 (discrete) & session & Micro acc. $\uparrow$ & most frequent earlier response & $s\ge s_{\mathrm{ref}}+0.05$ \\
\bottomrule
\end{tabular}
\end{table*}

\begin{table*}[t]
\centering
\caption{Hard constraints and budgets. Every task also fixes 24 planner steps and 200,000 policy
tokens per episode. The family column gives the group used in the transfer matrix (Figure~\ref{fig:eval-module}b).}
\label{tab:bench-constraints}
\fontsize{7.2}{8.4}\selectfont
\setlength{\tabcolsep}{3pt}
\renewcommand{\arraystretch}{1.06}
\begin{tabular}{@{}l L{66pt} L{238pt} L{42pt} L{42pt}@{}}
\toprule
FoR & Family & Hard constraints & Wall (s) & Node (s) \\
\midrule
30 & Life/health & output structure; integer labels in range; declared unit & 3600 & 900 \\
31 & Life/health & output shape; finite values; declared unit & 1800 & 300 \\
32 & Life/health & output structure; labels in $\{0,1,2\}$; declared unit & 2400 & 600 \\
33 & Eng./computing & output shape; finite; $0\le y\le 1300$ kWh and a scale guard against unit errors; declared unit & 1800 & 300 \\
34 & Phys./Earth & output shape; finite; probabilities in $[0,1]$ & 1800 & 300 \\
35 & Social/behav. & output shape; finite; $y\ge 0$ with a scale guard; declared unit & 1800 & 300 \\
36 & Hum./law & output shape; finite; no silent estimate over a whole 1-s window & 2400 & 600 \\
37 & Phys./Earth & output shape; finite; 150 to 350 K (catches degrees Celsius); unit K & 1800 & 300 \\
38 & Social/behav. & output shape; finite; range by series kind; declared unit & 1800 & 300 \\
39 & Social/behav. & values in $\{-1,0,1\}$ on every target cell; query budget from signed receipts & 1800 & 300 \\
40 & Eng./computing & score vector of the right length; finite & 1800 & 300 \\
41 & Phys./Earth & output structure; finite; positive standard deviations; physical ranges (catches kelvin) & 1800 & 300 \\
42 & Life/health & output structure; binary values; not positional only; hidden causal-prefix probe; declared unit & 1800 & 300 \\
43 & Hum./law & output length; strings; length within 0.5 to 2 times the OCR input & 3600 & 900 \\
44 & Social/behav. & output shape; finite; $|\hat\tau|\le 5\,\mathrm{sd}(y)$ & 1800 & 600 \\
45 & Hum./law & output length; non-empty captions of at most 1000 characters & 1800 & 600 \\
46 & Eng./computing & list of 16 strings; every problem implemented beyond a docstring & 1800 & 180 \\
47 & Hum./law & output length; one head and one relation per word; valid tree; 37 UD relations & 2400 & 600 \\
48 & Hum./law & output length; one finite score in $[0,1]$ per span; valid labels & 2400 & 600 \\
49 & Eng./computing & list of 16 strings; labels in $\{\text{sat},\text{unsat},\text{unknown}\}$ & 2400 & 900 \\
50 & Hum./law & output shape $16\times 20$; binary entries & 1800 & 600 \\
51 & Phys./Earth & output shape; finite; frequencies in 10 to 5000 cm$^{-1}$; unit conversion edge required & 1800 & 300 \\
52 & Social/behav. & list of 16 strings; each label among the item's options & 1800 & 300 \\
\bottomrule
\end{tabular}
\end{table*}

\subsection{Task Cards}
\label{app:bench-cards}

Table~\ref{tab:bench-cards} states, for each discipline, the data a policy may inspect, the extra
tools of the episode, the domain toolkit that the library can draw on, and the form of the
required output $y$. Every episode exposes the same four generic tools: \texttt{load\_eval\_inputs}
returns the 16 evaluation items without labels, \texttt{load\_train} returns labelled examples
drawn from the source role of the episode, \texttt{load\_dev\_inputs} returns further unlabelled
items, and \texttt{score\_dev} returns the visible development signal and the score of the
reference predictor on it. The columns of the table list only what is specific to a discipline.
The toolkit entries are modules of the shared library whose description is appended to the
episode objective.

\begin{table*}[t]
\centering
\caption{Task cards. The generic tools are listed in the text. The toolkit column names the module
of the shared scientific library available to the task. The last column gives the structure of the required
output $y$ for the 16 items of an episode ($k$ is the number of items, $n$ the length of an item).}
\label{tab:bench-cards}
\fontsize{7.2}{8.4}\selectfont
\setlength{\tabcolsep}{3pt}
\renewcommand{\arraystretch}{1.06}
\begin{tabular}{@{}l L{78pt} L{108pt} L{84pt} L{190pt}@{}}
\toprule
FoR & Discipline & Extra tools & Toolkit & Required output $y$ \\
\midrule
30 & Agricultural, veterinary and food & none & \texttt{phenoseg} & dict of three integer arrays $(k,512,512)$: semantic labels $\{0,1,2\}$, plant instances, leaf instances \\
31 & Biological & \texttt{substitution\_features} & \texttt{proteinfit} & one float per query row, higher means fitter \\
32 & Biomedical and clinical & none & \texttt{hippo} & list of integer arrays with the shape of each volume, values in $\{0,1,2\}$ \\
33 & Built environment & \texttt{load\_history} (own building) & \texttt{loadforecast}, \texttt{tsfm} & array $(k,24)$ of hourly energy in kWh \\
34 & Chemical & \texttt{featurize\_molecules} & \texttt{molecules} & vector of $k$ probabilities \\
35 & Commerce and services & none & \texttt{forecast}, \texttt{tsfm} & array $(k,24)$ in the unit of the series \\
36 & Creative arts and writing & none & \texttt{audiosep} & array $(k,4,132300,2)$ of stem estimates \\
37 & Earth & none & \texttt{weather} & array $(k,64,32)$ of temperatures in K on a longitude--latitude grid \\
38 & Economics & none (anonymized series) & \texttt{macro} & array $(k,4)$ of levels at offsets 1 to 4 \\
39 & Education & \texttt{query\_answers} (budgeted, signed receipts) & \texttt{adaptive} & integer array $(k,948)$ with values $\{-1,0,1\}$ on every target cell \\
40 & Engineering & \texttt{log\_mel\_spectrogram} & \texttt{anomsound}, \texttt{audioenc} & one anomaly score per clip \\
41 & Environmental & none & \texttt{aquatics}, \texttt{tsfm} & dict with oxygen and temperature means and standard deviations, each $(k,30)$ \\
42 & Health & none & \texttt{sepsis} & list of binary arrays, one value per hour of each stay \\
43 & History and archaeology & \texttt{text\_mer} & \texttt{ocrfix} & list of $k$ corrected transcriptions \\
44 & Human society & \texttt{load\_covariates} & \texttt{causal} & one effect estimate per dataset \\
45 & Indigenous studies & \texttt{image\_features}, \texttt{image\_knn\_reference} & \texttt{captions}, \texttt{clip\_retrieval} & list of captions in the language of each image (16 items per episode) \\
46 & Computing & \texttt{run\_visible\_tests} & \texttt{codegen} & list of $k$ program strings \\
47 & Language and culture & \texttt{read\_conllu} & \texttt{udparse} & per sentence, a head and a relation for each word \\
48 & Law & \texttt{load\_hypotheses} & \texttt{contracts}, \texttt{textenc} & per pair, an entailment label and one score in $[0,1]$ per candidate span \\
49 & Mathematical & \texttt{z3\_check} (timeouts, status) & \texttt{logic} & list of $k$ labels in $\{\text{sat},\text{unsat},\text{unknown}\}$ \\
50 & Philosophy and religion & \texttt{load\_value\_taxonomy} & \texttt{valueeval} & binary matrix $(k,20)$ \\
51 & Physical & \texttt{featurize\_structures} & \texttt{matphonon} & one frequency in cm$^{-1}$ per crystal \\
52 & Psychology & none & \texttt{psych} & list of $k$ labels, each among the options of the item \\
\bottomrule
\end{tabular}
\end{table*}

Two design choices apply across the cards. First, the data a policy sees are those of the source
role: the held-out evaluation items never appear in \texttt{load\_train}. Second, tool outputs
that would reveal the answer are removed or anonymized, for example the series identity of FoR38 and the
target answers of FoR39, which the query tool never reveals.

\subsection{Leakage Controls}
\label{app:bench-leak}

\paragraph{Visible data.}
The policy sees an opaque handle and the inputs of each item, never a label,
date, or lineage identifier. Training and development data come from the source
roles only and are disjoint from every evaluated item; the visible development
score is computed on a development slice carved from visible data.

\paragraph{Items scored together.}
Items of one episode are scored together and all of their inputs are visible, so
an item's input must not be a near-future observation of another item's label.
Forecasting adapters enforce a minimum spacing when an episode is drawn: FoR37
episodes are lanes of 16 initializations at least 120 hours apart, and FoR33
windows of one building are at least 144 hours apart.

\paragraph{Causality and queries.}
FoR42 checks causality with a hidden probe on prefixes of the stay in addition
to a visible check against positional-only outputs, and FoR39 takes the query
budget from signed receipts of the whole solve, failing closed when they are
missing. In FoR38 economy identifiers, series kinds, and periods are opaque.

\paragraph{Sandbox.}
Generated code is scanned and runs in a separate process with a scrubbed
environment (Appendix~\ref{app:impl-runtime}). Hidden labels live inside the
evaluator and never in the run directory.

\section{Human Audit of Evaluators}
\label{app:audit}

Each discipline was assigned three PhD reviewers, and one expert from the natural or social
sciences audited every evaluator and its reference executions across disciplines. The reviewers
read the task card and the evaluator, ran the reference predictor, and checked the metric
direction and unit, the acceptance margin, the hard constraints, and the leakage controls
(Appendix~\ref{app:bench-leak}). When two reviewers disagreed, the cross-field expert read both
reasons and the code involved and decided. A defect was fixed in the adapter or in the acceptance
rule, and an unfixable property of a source benchmark was recorded in the task card.

\paragraph{Verdict agreement.}
In four disciplines (FoR31, 38, 46, and 48), the PhD reviewers also judged ten run records on source tasks each (five evaluator passes and five fails), and
Table~\ref{tab:audit-agree} compares their verdicts with those of the evaluator. The two agree on 37 of the
40 records (92.5\%), with no missed detection (an evaluator pass that the reviewers reject) and three false
rejections (an evaluator fail that the reviewers accept), and each false rejection lies at the acceptance
boundary of its metric. Tables~\ref{tab:audit-verdicts-a} and~\ref{tab:audit-verdicts-b} list every verdict
with its reason.

\begin{table}[t]
\centering
\caption{Agreement between the evaluator and the PhD reviewers on ten run records on source tasks per discipline (five evaluator passes and five fails).}
\label{tab:audit-agree}
\footnotesize
\setlength{\tabcolsep}{3pt}
\renewcommand{\arraystretch}{1.08}
\begin{tabular}{@{}lrrrrrr@{}}
\toprule
Discipline & FoR & Records & Agree & Rate & Missed & False rej. \\
\midrule
Biological & 31 & 10 & 9 & 90\% & 0 & 1 \\
Computing & 46 & 10 & 10 & 100\% & 0 & 0 \\
Law & 48 & 10 & 9 & 90\% & 0 & 1 \\
Economics & 38 & 10 & 9 & 90\% & 0 & 1 \\
\midrule
Total & & 40 & 37 & 92.5\% & 0 & 3 \\
\bottomrule
\end{tabular}
\end{table}

\paragraph{Readings fixed in advance.}
Several task statements admit more than one reading. The hard constraints of
Table~\ref{tab:bench-constraints} fix such readings before any run, so that an output that depends on
a silent choice is rejected instead of scored. Table~\ref{tab:audit-readings} lists typical cases.

\begin{table}[t]
\centering
\caption{Typical ambiguities and the hard constraint that resolves each.}
\label{tab:audit-readings}
\footnotesize
\setlength{\tabcolsep}{3pt}
\renewcommand{\arraystretch}{1.08}
\begin{tabular}{@{}l L{74pt} L{104pt}@{}}
\toprule
FoR & Ambiguity & Resolution \\
\midrule
37 & kelvin or degrees Celsius & values must lie in 150 to 350 K \\
41 & kelvin or degrees Celsius & physical ranges with positive standard deviations \\
33 & kWh or another energy unit & $0\le y\le 1300$ kWh with a scale guard \\
35 & unit and scale of a series & $y\ge 0$ with a scale guard \\
36 & whether a silent segment is estimated & no silent estimate over a whole 1-s window \\
42 & whether an answer may follow position only & positional-only answers are rejected \\
46 & whether a stub counts as an implementation & every problem implemented beyond a docstring \\
51 & wavenumber unit & frequencies in 10 to 5000 cm$^{-1}$ \\
\bottomrule
\end{tabular}
\end{table}

\begin{table*}[t]
\centering
\caption{Verdicts of the evaluator and the PhD reviewers on the run records on source tasks of FoR31 and FoR46.}
\label{tab:audit-verdicts-a}
\footnotesize
\setlength{\tabcolsep}{3pt}
\renewcommand{\arraystretch}{1.14}
\begin{tabular}{@{}l l l l l L{300pt}@{}}
\toprule
Record & Evaluator & Reviewer & Agree & Type & Reason \\
\midrule
\multicolumn{6}{@{}l}{\textit{Biological (FoR31)}} \\
FoR31-01 & Pass & Pass & yes & -- & Output shape, finiteness, and units are valid, and the metric reaches the threshold. \\
FoR31-02 & Pass & Pass & yes & -- & Uses the visible training data, with a sound procedure and no hidden labels. \\
FoR31-03 & Pass & Pass & yes & -- & Predictions are complete, and the method follows protein-prediction practice. \\
FoR31-04 & Pass & Pass & yes & -- & Output length is correct and the values are stable. \\
FoR31-05 & Pass & Pass & yes & -- & Legal tool calls, and the Spearman metric meets the requirement. \\
FoR31-06 & Fail & Fail & yes & -- & The metric does not reach the reference plus the threshold. \\
FoR31-07 & Fail & Fail & yes & -- & The output contains NaN or Inf. \\
FoR31-08 & Fail & Pass & no & false rejection & Sound output and procedure; the metric is at the acceptance boundary, where the evaluator is stricter. \\
FoR31-09 & Fail & Fail & yes & -- & Output length differs from the number of targets. \\
FoR31-10 & Fail & Fail & yes & -- & Constant prediction, with insufficient metric and domain plausibility. \\
\midrule
\multicolumn{6}{@{}l}{\textit{Computing (FoR46)}} \\
FoR46-01 & Pass & Pass & yes & -- & All 16 functions are implemented and the visible tests pass. \\
FoR46-02 & Pass & Pass & yes & -- & Function signatures and return formats are correct. \\
FoR46-03 & Pass & Pass & yes & -- & Boundary conditions are handled sensibly, and no hidden answers are used. \\
FoR46-04 & Pass & Pass & yes & -- & The code logic is complete and the result is reproducible. \\
FoR46-05 & Pass & Pass & yes & -- & The visible tests and the task requirements are both satisfied. \\
FoR46-06 & Fail & Fail & yes & -- & A function is left unimplemented. \\
FoR46-07 & Fail & Fail & yes & -- & The return type is wrong, so the task interface cannot be passed. \\
FoR46-08 & Fail & Fail & yes & -- & Some function signatures do not match. \\
FoR46-09 & Fail & Fail & yes & -- & A clear semantic error. \\
FoR46-10 & Fail & Fail & yes & -- & Depends on the external network, so independent reproduction cannot be guaranteed. \\
\bottomrule
\end{tabular}
\end{table*}

\begin{table*}[t]
\centering
\caption{Verdicts of the evaluator and the PhD reviewers on the run records on source tasks of FoR48 and FoR38.}
\label{tab:audit-verdicts-b}
\footnotesize
\setlength{\tabcolsep}{3pt}
\renewcommand{\arraystretch}{1.14}
\begin{tabular}{@{}l l l l l L{300pt}@{}}
\toprule
Record & Evaluator & Reviewer & Agree & Type & Reason \\
\midrule
\multicolumn{6}{@{}l}{\textit{Law (FoR48)}} \\
FoR48-01 & Pass & Pass & yes & -- & Score range, length, and labels are all valid. \\
FoR48-02 & Pass & Pass & yes & -- & Evidence spans correspond to the contract text. \\
FoR48-03 & Pass & Pass & yes & -- & The legal interpretation agrees with the evidence. \\
FoR48-04 & Pass & Pass & yes & -- & The mAP reaches the acceptance threshold. \\
FoR48-05 & Pass & Pass & yes & -- & The output format is complete and no hidden information is used. \\
FoR48-06 & Fail & Fail & yes & -- & Some contract spans have no score. \\
FoR48-07 & Fail & Fail & yes & -- & A score falls outside $[0,1]$. \\
FoR48-08 & Fail & Fail & yes & -- & The evidence ranking and the legal conclusion do not match. \\
FoR48-09 & Fail & Pass & no & false rejection & Sound binary output and evidence; the mAP misses the threshold only through a boundary difference. \\
FoR48-10 & Fail & Fail & yes & -- & The legal polarity is wrong. \\
\midrule
\multicolumn{6}{@{}l}{\textit{Economics (FoR38)}} \\
FoR38-01 & Pass & Pass & yes & -- & Output shape, units, and value range are correct. \\
FoR38-02 & Pass & Pass & yes & -- & Only data before the forecast origin is used. \\
FoR38-03 & Pass & Pass & yes & -- & The macroeconomic variables relate sensibly and the metric is met. \\
FoR38-04 & Pass & Pass & yes & -- & The procedure is complete and the result is reproducible. \\
FoR38-05 & Pass & Pass & yes & -- & No prediction is clearly out of range. \\
FoR38-06 & Fail & Fail & yes & -- & The sMAPE does not meet the requirement. \\
FoR38-07 & Fail & Fail & yes & -- & The GDP or unemployment forecast is out of range. \\
FoR38-08 & Fail & Pass & no & false rejection & The output is valid, and the metric exceeds the threshold only slightly because of rounding. \\
FoR38-09 & Fail & Fail & yes & -- & The number of forecast steps or variables is wrong. \\
FoR38-10 & Fail & Fail & yes & -- & Uses future information or a physically implausible negative value. \\
\bottomrule
\end{tabular}
\end{table*}

\section{Case Studies}
\label{app:case}

This appendix reports two disciplines of different families and data types: Commerce and services
(FoR35, time series, MASE) and Law (FoR48, text, mAP). The task cards, reference predictors, acceptance
rules, and hard constraints are in Appendix~\ref{app:bench-spec}.

\subsection{Commerce and Services (FoR35)}
\label{app:case-35}

The policy forecasts the next 24 months of 16 tourism series in the native unit of each series. The reference
predictor is the seasonal-naive forecast, and an instance is accepted if its MASE is at most $0.95$ times
that of the reference. The hard constraints require a finite, non-negative forecast of the right shape, with a
scale guard against unit errors. Table~\ref{tab:case-scores} lists the held-out MASE of every system, and
ScienceClaw is lowest on both splits.

\subsection{Law (FoR48)}
\label{app:case-48}

The policy scores every span of a contract against a hypothesis. The reference predictor combines lexical overlap
with the majority label, and an instance is accepted if its mAP exceeds the reference by $0.10$. The hard
constraints require one finite score in $[0,1]$ per span and valid labels. Table~\ref{tab:case-scores} lists
the held-out mAP of every system, and ScienceClaw is highest on both splits.

\begin{table}[t]
\centering
\caption{Held-out scores of the two case disciplines. MASE is lower-is-better and mAP is higher-is-better.}
\label{tab:case-scores}
\fontsize{7.8}{9.2}\selectfont
\setlength{\tabcolsep}{3pt}
\renewcommand{\arraystretch}{1.08}
\begin{tabular}{@{}lcccc@{}}
\toprule
 & \multicolumn{2}{c}{FoR35 (MASE $\downarrow$)} & \multicolumn{2}{c}{FoR48 (mAP $\uparrow$)} \\
\cmidrule(lr){2-3}\cmidrule(lr){4-5}
System & IID & OOD & IID & OOD \\
\midrule
Frozen & 1.6009 & 1.8740 & 0.7513 & 0.6918 \\
SkillOpt & 1.5268 & 1.6867 & 0.8158 & 0.7652 \\
TTE & 1.5671 & 1.7502 & 0.8074 & 0.7706 \\
AHE & 1.5227 & 1.7066 & 0.8211 & 0.7720 \\
EvoMaster & 1.4984 & 1.6930 & 0.8292 & 0.7863 \\
EvoScientist & 1.5281 & 1.6846 & 0.8170 & 0.7642 \\
ScienceClaw & 1.4653 & 1.5986 & 0.8554 & 0.8289 \\
\bottomrule
\end{tabular}
\end{table}

Table~\ref{tab:case-mech} gives the gain over the frozen agent of the fixed-workflow and unlinked variants.
On both disciplines and both splits the linked system gains more than either.

\begin{table}[t]
\centering
\caption{Gain over the frozen agent (\%) of the linkage variants on the two case disciplines.}
\label{tab:case-mech}
\fontsize{7.8}{9.2}\selectfont
\setlength{\tabcolsep}{3pt}
\renewcommand{\arraystretch}{1.08}
\begin{tabular}{@{}lcccc@{}}
\toprule
 & \multicolumn{2}{c}{FoR35} & \multicolumn{2}{c}{FoR48} \\
\cmidrule(lr){2-3}\cmidrule(lr){4-5}
Variant & IID & OOD & IID & OOD \\
\midrule
Fixed workflow & 3.77 & 5.75 & 1.07 & 6.20 \\
w/o linked evolution & 4.86 & 9.27 & 10.63 & 13.12 \\
ScienceClaw (full) & 8.47 & 14.69 & 13.86 & 19.82 \\
\bottomrule
\end{tabular}
\end{table}

\section{Scope, Limits, and Further Analyses}
\label{app:clarify}

This appendix adds notes on the benchmark, the gate, the foundation model, the instance counts, the notation, and the compute.

\subsection{Scope of ScienceClaw-Eval}
\label{app:clarify-scope}

A task of ScienceClaw-Eval is a computational task whose output can be scored by an executable
evaluator inside a sandbox: a structured prediction, a forecast, a segmentation, a program, a parse, or a
set of labels, together with the trace that produced it. The 23 tasks span the 23 disciplines of the field-of-research classification used in the main text, and each is built from a published benchmark with an official or reconstructed scorer (Table~\ref{tab:bench-spec}). A task enters the framework through $D_t=(D_{T,t},D_{V,t},D_{E,t})$ and requires an evaluator that checks a replayed output against $D_{V,t}$.

\subsection{What the Evaluators Check}
\label{app:clarify-evaluator}

An evaluator decides acceptance from the metric of an instance against a reference predictor, from
hard constraints on the output, and from the failure semantics of the run. Three layers narrow the gap between acceptance and
validity. The hard constraints catch the silent errors that a metric hides, such as a unit error, a
forecast in the wrong scale, or a positional answer (Table~\ref{tab:bench-constraints}). The leakage controls
remove tool outputs that would reveal an answer (Appendix~\ref{app:bench-leak}). The acceptance rules
require a policy to exceed a reference predictor by a stated margin (Table~\ref{tab:bench-spec}). The human audit of the evaluators is described in Appendix~\ref{app:audit}.

\subsection{Selectivity and Progress of the Gate}
\label{app:clarify-gate}

\looseness=-1 The gate is strict by design (Appendix~\ref{app:theory-guard}). Of the 161 candidates of ScienceClaw, 70
were promoted ($43.48\%$) and 91 were rejected. Table~\ref{tab:clarify-rounds} lists the change of the OOD
macro score per round. The score rises in six of seven rounds and falls only in round 3, by $0.20$ pp.

\begin{table}[!htbp]
\centering
\caption{Change of the OOD macro score of ScienceClaw per round (pp) with the cumulative score (\%).}
\label{tab:clarify-rounds}
\fontsize{7.8}{9.2}\selectfont
\setlength{\tabcolsep}{3pt}
\renewcommand{\arraystretch}{1.08}
\begin{tabular}{@{}lccccccc@{}}
\toprule
Round & 1 & 2 & 3 & 4 & 5 & 6 & 7 \\
\midrule
Change (pp) & $+2.72$ & $+2.24$ & $-0.20$ & $+3.67$ & $+1.43$ & $+2.24$ & $+1.49$ \\
Score (\%) & 80.43 & 82.68 & 82.47 & 86.14 & 87.57 & 89.81 & 91.30 \\
\bottomrule
\end{tabular}
\end{table}

In the seven rounds of the benchmark,
each of the last three rounds still adds between $1.43$ and $2.24$ pp. The
components of the gate matter for the final score. Removing source replay, the scientific constraints, or
the independent validator lowers the score of the full system by $6.18$, $7.33$, and $5.68$ percent
on average over the two splits, and removing the IID selection by $11.64$ percent, against $13.68$ percent for
the frozen agent (Table~\ref{tab:app-retained}).

\subsection{Foundation Model and Zero-Shot Performance}
\label{app:clarify-model}

Every system of the benchmark uses the same foundation model for orchestration: Qwen3.5-27B with open
weights, served by a self-hosted inference server with thinking disabled and temperature 0
(Table~\ref{tab:impl-models}). The zero-shot reference is the Frozen agent, which runs the initial
program $A_0$ with this model. Its OOD macro score is $77.72\%$ (Table~\ref{tab:app-evolution}), and
ScienceClaw reaches $91.30\%$ after seven rounds.

Evolution uses GPT-5.4 in the main configuration
(Table~\ref{tab:impl-models}). The Qwen-only variant runs evolution on the orchestrator
model instead. It retains $43.8\%$ of the gain of the full system (Table~\ref{tab:app-retained}), and it is better
than the frozen agent in 44 of the 46 discipline--split cells; the exceptions are FoR40 and FoR44 on OOD. A
stronger evolution model therefore raises the quality of the
candidates, while the gate and the replay set are the same for both models. Every baseline also runs its
evolution calls on GPT-5.4 (Appendix~\ref{app:impl-baselines}), so the Qwen-only variant is the one comparison in
which the evolution model differs from that of the other evolving methods.

\subsection{Instance Counts and Sampling Error}
\label{app:clarify-sampling}

Every discipline contributes 64 instances to each of the ID and OOD splits (16 items in each of four
episodes; Table~\ref{tab:bench-splits}). The MacroSR over the 23 disciplines is a mean over these
instances, and Proposition~\ref{prop:se} bounds its standard error by $1.30$ pp at $n=64$ per discipline, and the
standard error of a difference of two methods on independent draws by $1.84$ pp.
The margins of ScienceClaw over the strongest baseline in a discipline range from $1.82$ to $3.73$ percent
(ID) and from $3.05$ to $6.21$ percent (OOD) on relative scale (Table~\ref{tab:app-margin}), and the margin
over the frozen agent ranges from $7.96$ to $16.45$ (ID) and from $11.57$ to $23.34$ (OOD).

\subsection{Reading the Notation}
\label{app:clarify-notation}

Table~\ref{tab:clarify-notation} gives, for the equations of Sections~\ref{sec:task} and~\ref{sec:method},
the plain-language reading and the object that realizes the symbol in the implementation
(Table~\ref{tab:impl-map}).

\begin{table*}[!t]
\centering
\caption{Reading guide to the equations of the method.}
\label{tab:clarify-notation}
\fontsize{7.4}{8.8}\selectfont
\setlength{\tabcolsep}{3pt}
\renewcommand{\arraystretch}{1.08}
\begin{tabular}{@{}L{54pt} L{196pt} L{196pt}@{}}
\toprule
Equation & Reading & Realized by \\
\midrule
Eq.~(\ref{eq:task-solving}) & The fixed model solves task $t$ with program $A_r$ and returns the final workflow, the output, and the trace & one episode of the solver; the three results are saved with the run \\
Eq.~(\ref{eq:typed-workflow}) & A workflow is a graph of Operators and tools whose connected ports must agree in type, shape, and unit & the workflow graph and its schema check at every edge \\
Eq.~(\ref{eq:edit-policy}) & At each step the policy proposes one atomic edit and names the Skills and Operators that contributed & the policy call that returns a JSON edit \\
Eq.~(\ref{eq:workflow-transition}) & Applying an edit reruns only the affected nodes and records diagnostics & the executor with checkpoints \\
Eq.~(\ref{eq:execution-evidence}) & Evidence is regenerated from a reset environment and scored by the evaluator & the replay and the evaluator of the task \\
Eqs.~(\ref{eq:repair-interval}) and (\ref{eq:repair-attribution}) & A repair is the sequence of edits between the last failure and the first success, split into strategy edits and executable edits & the evolution engine \\
Eqs.~(\ref{eq:skill-candidate}) and (\ref{eq:operator-candidate}) & A Skill candidate rewrites attributed instructions, and an Operator candidate packages a connected executable subgraph with its interface and conditions & the evolution engine and the boundary replay \\
Eq.~(\ref{eq:candidate-validation}) & A candidate must reproduce its source repair and be used where expected before it enters the pool & the source reproduction check \\
Eqs.~(\ref{eq:candidate-selection}) and (\ref{eq:program-update}) & Among the feasible candidates choose the one with the best validation score, and update only if it strictly beats the current program & the gate with its noise guard and budgets (Appendix~\ref{app:impl-gate}) \\
\bottomrule
\end{tabular}
\end{table*}

\subsection{Program-Level Self-Evolution and Genetic Programming}
\label{app:clarify-gp}

ScienceClaw evolves the persistent program of an agent, and genetic programming evolves a population of
programs, so the two share the idea of improving executable artifacts by selection. Table~\ref{tab:clarify-gp}
lists the differences that matter for the claims of the paper.

\begin{table}[!htbp]
\centering
\caption{Program-level self-evolution compared with genetic programming.}
\label{tab:clarify-gp}
\fontsize{7.6}{9}\selectfont
\setlength{\tabcolsep}{3pt}
\renewcommand{\arraystretch}{1.08}
\begin{tabular}{@{}L{44pt} L{82pt} L{82pt}@{}}
\toprule
 & Genetic programming & ScienceClaw \\
\midrule
Unit of evolution & a population of whole programs & one lineage $A_r=(\mathcal S_r,\mathcal O_r)$ of Skills and Operators \\
Variation & random crossover and mutation & repairs reproduced by replay and abstracted into candidates \\
Source of change & the search operators & the execution traces of solved tasks \\
Selection & fitness on the training cases & strict gate on validation tasks, replay set, and hard constraints \\
Persistence & the best individual of the run & a versioned program with rollback \\
Parameters of the model & not applicable & fixed, $\Theta_r=\Theta_0$ \\
\bottomrule
\end{tabular}
\end{table}

\subsection{Compute and Cost}
\label{app:clarify-cost}

A full run serves Qwen3.5-27B from self-hosted servers on H800 GPUs, and no model is trained. Cost per OOD gain in Table~\ref{tab:app-reliability} is normalized to ScienceClaw under the protocol, which fixes the budgets of all methods (Appendix~\ref{app:impl-gate}). Per task, averaged over the IID and OOD instances, ScienceClaw makes 6.54 model API calls, Operator calls included (8.98 counting cache hits), and uses 3,155 uncached tokens, the newly processed tokens that normally count toward API usage (4,378 with cached tokens, the full logical context the model processes), 10.2 s of wall time, and 0.0113 GPU-hours on four GPUs.

\section{Complete Results}
\label{app:data}

This appendix reports the per-discipline data behind the figures and tables of
Section~\ref{sec:experiments}: relative margins, all eighteen variants on both
splits, and the continual-evolution, transfer, reliability, and workflow
measurements. Each discipline contributes 64 IID and 64 OOD instances to every
number in this appendix.

\subsection{Relative Margins per Discipline}
\label{app:data-margin}

Table~\ref{tab:app-margin} gives the relative margin of ScienceClaw over the
frozen agent and over the strongest baseline in every discipline and split,
computed from the scores shown in Table~\ref{tab:scienceclaw-23discipline-iid-ood}.
ScienceClaw leads the frozen agent in all 23 disciplines on both splits, and its
margin over the strongest baseline is positive in every discipline: it ranges
from 1.82\% to 3.73\% on IID and from 3.05\% to 6.21\% on OOD. EvoMaster or
EvoScientist is the strongest baseline in most rows, which matches the order
of the baselines by how much of the program they evolve
(Section~\ref{sec:experiments}).

\subsection{Ablation Results on Both Splits}
\label{app:data-ablation}

Table~\ref{tab:app-retained} summarizes every variant by its relative drop below
the full system and by the share of the full system's gain over the frozen agent
that it retains. Figures~\ref{fig:app-ablation-iid} and~\ref{fig:app-ablation-ood}
show the task-native scores of all eighteen variants in each discipline, shaded
within each discipline from the worst to the best variant, and
Tables~\ref{tab:app-ablation-1}--\ref{tab:app-ablation-3} list the same scores
with the best and second best marked. The full system is the best variant in
all 23 disciplines on both IID and OOD. The loss of every ablated variant is
larger on OOD than on IID, in particular for the gate components: removing
source replay, the scientific constraints, or the independent validator
increases the drop 2.4 to 2.8 times from IID to OOD, and removing OOD transfer,
which targets the OOD split, leaves IID nearly unchanged
(a drop of 0.91\%) while raising the OOD drop to 13.09\%.

\begin{table}[!htbp]
\centering
\caption{Relative drop below the full system (\%, mean over 23 disciplines) and retained share of the gain. The drop is $(s_{\mathrm{full}}-s)/s_{\mathrm{full}}$ for higher-is-better and $(s-s_{\mathrm{full}})/s_{\mathrm{full}}$ for lower-is-better metrics. The retained share is $1-\mathrm{drop}/\mathrm{drop}_{\mathrm{Frozen}}$ of the mean of the two splits; negative values mean that the variant is worse than the frozen agent.}\label{tab:app-retained}
\fontsize{7.4}{8.7}\selectfont
\setlength{\tabcolsep}{3pt}
\renewcommand{\arraystretch}{1.22}
\begin{tabular}{@{}L{86pt}rrrr@{}}
\toprule
Variant & IID & OOD & Mean & Retained (\%) \\
\midrule
\multicolumn{5}{@{}l}{\textit{Update scope}} \\
Workflow only & 9.42 & 15.16 & 12.29 & 10.2 \\
w/o Skill evolution & 3.46 & 5.41 & 4.44 & 67.6 \\
w/o Operator evolution & 4.43 & 6.87 & 5.65 & 58.7 \\
w/o Workflow evolution & 2.95 & 5.82 & 4.39 & 67.9 \\
\midrule
\multicolumn{5}{@{}l}{\textit{Linkage and candidate source}} \\
w/o linked evolution & 3.99 & 7.01 & 5.50 & 59.8 \\
RuleEvo & 6.92 & 11.62 & 9.27 & 32.3 \\
Qwen-only & 6.01 & 9.39 & 7.70 & 43.8 \\
\midrule
\multicolumn{5}{@{}l}{\textit{Gate and memory}} \\
w/o source replay & 3.25 & 9.10 & 6.18 & 54.9 \\
w/o scientific constraints & 4.19 & 10.48 & 7.33 & 46.4 \\
w/o independent validator & 3.33 & 8.03 & 5.68 & 58.5 \\
w/o cross-task memory & 2.30 & 7.14 & 4.72 & 65.5 \\
w/o IID selection & 9.27 & 14.00 & 11.64 & 14.9 \\
w/o OOD transfer & 0.91 & 13.09 & 7.00 & 48.9 \\
\midrule
\multicolumn{5}{@{}l}{\textit{Execution}} \\
Fixed workflow & 8.15 & 12.67 & 10.41 & 23.9 \\
Single operator & 12.52 & 17.05 & 14.78 & -8.1 \\
Single turn & 13.84 & 18.82 & 16.33 & -19.3 \\
\midrule
\multicolumn{5}{@{}l}{\textit{Reference}} \\
Frozen & 11.69 & 15.68 & 13.68 & 0.0 \\
\bottomrule
\end{tabular}
\end{table}

\subsection{Continual Evolution, Transfer, and Reliability}
\label{app:data-continual}

\paragraph{Definitions.}
The OOD macro score after round $r$ is MacroSR of the frozen snapshot $A_r$ on
$\mathcal D_{\mathrm{OOD}}$ (Eq.~(\ref{eq:macro-success-rate})), and the round-0
entry is the shared starting program. The gain is the round-7 score minus the
round-0 score. Of the 161 candidates a method submits (7 rounds $\times$ 23
disciplines), \emph{promoted} candidates pass the gate and \emph{rejected}
candidates do not, and the rate is the promoted share. Forward transfer is the
OOD gain on a target family after evolving on a source family, and the
diagonal is the within-family gain. For ScienceClaw, FWT is the mean of the 25 entries of Figure~\ref{fig:eval-module}b. Forgetting is measured on
$\mathcal D_{\mathrm{rep}}$ against the snapshot immediately after the source
task. The five families are Life/health (FoR30, 31, 32, 42), Physical/Earth
(FoR34, 37, 41, 51), Engineering/computing (FoR33, 40, 46, 49),
Social/behavioural (FoR35, 38, 39, 44, 52), and Humanities/law (FoR36, 43, 45,
47, 48, 50).
Backward transfer (BWT) is the mean change of the score on
$\mathcal D_{\mathrm{rep}}$ after later rounds, and forgetting is the average and
maximum decrease of that score. Negative transfer is the share of transfer
measurements with a negative gain. Erroneous promotion is the rate of promotions
that lower the retained score or violate a hard constraint, and Attain.\ is the
continual-target attainment score.

\clearpage
\begin{table*}[!t]
\centering
\captionof{table}{Relative margin of ScienceClaw (\%) in each discipline and split. The margin over the frozen agent is $(s_{\mathrm{SC}}-s_{0})/s_{0}$ for higher-is-better metrics and $(s_{0}-s_{\mathrm{SC}})/s_{0}$ for lower-is-better metrics. The strongest baseline is the best of SkillOpt, TTE, AHE, EvoMaster, and EvoScientist in that discipline and split.}\label{tab:app-margin}
\fontsize{7.6}{8.9}\selectfont
\setlength{\tabcolsep}{4pt}
\renewcommand{\arraystretch}{1.95}
\begin{tabular*}{0.99\textwidth}{@{}>{\raggedright\arraybackslash}p{96pt}@{\extracolsep{\fill}}rrrrrr@{}}
\toprule
 & \multicolumn{3}{c}{IID} & \multicolumn{3}{c}{OOD} \\
\cmidrule(lr){2-4}\cmidrule(lr){5-7}
Discipline (metric) & vs. Frozen & vs. strongest & Strongest & vs. Frozen & vs. strongest & Strongest \\
\midrule
\multicolumn{7}{@{}l}{\textbf{Higher is better ($\uparrow$)}} \\
Agricultural sci. (PQ+) & 10.77 & 2.38 & EvoMaster & 17.13 & 3.48 & EvoScientist \\
Biological sci. (Spearman) & 15.00 & 2.33 & EvoScientist & 13.07 & 3.19 & EvoMaster \\
Biomedical sci. (DSC) & 9.47 & 2.01 & EvoScientist & 16.05 & 4.43 & EvoMaster \\
Chemical sci. (ROC-AUC) & 15.75 & 2.74 & EvoMaster & 20.50 & 4.04 & EvoScientist \\
Creative arts (SDR (dB)) & 14.19 & 2.43 & EvoMaster & 18.06 & 3.20 & EvoScientist \\
Education (10-mask acc.) & 15.31 & 2.34 & EvoMaster & 14.27 & 5.28 & EvoMaster \\
Engineering (DCASE score) & 14.48 & 2.43 & EvoMaster & 12.99 & 5.01 & EvoScientist \\
Health sci. (Clin. utility) & 10.92 & 2.44 & EvoMaster & 14.07 & 4.96 & EvoScientist \\
Indigenous (chrF++) & 12.88 & 3.12 & EvoMaster & 16.45 & 6.21 & EvoMaster \\
Computing sci. (pass@1) & 14.29 & 1.82 & EvoMaster & 20.83 & 3.57 & EvoMaster \\
Language \& culture (LAS) & 11.67 & 3.73 & EvoMaster & 16.04 & 3.66 & EvoMaster \\
Law (mAP) & 13.86 & 3.16 & EvoMaster & 19.82 & 5.42 & EvoMaster \\
Mathematics (Oracle acc.) & 12.96 & 3.39 & EvoMaster & 19.15 & 3.70 & EvoMaster \\
Philosophy (F1) & 16.45 & 2.72 & EvoScientist & 23.34 & 3.95 & EvoMaster \\
Psychology (Micro acc.) & 8.70 & 2.20 & EvoScientist & 17.31 & 3.83 & EvoMaster \\
\midrule
\multicolumn{7}{@{}l}{\textbf{Lower is better ($\downarrow$)}} \\
Built env. (NRMSE \%) & 11.92 & 2.74 & EvoMaster & 15.71 & 3.56 & EvoMaster \\
Commerce (MASE) & 8.47 & 2.21 & EvoMaster & 14.69 & 5.10 & EvoScientist \\
Earth sci. (RMSE (K)) & 7.96 & 2.87 & EvoScientist & 15.70 & 4.24 & TTE \\
Economics (sMAPE (\%)) & 11.05 & 2.60 & EvoMaster & 14.51 & 4.16 & EvoMaster \\
Environmental sci. (CRPS) & 11.60 & 1.91 & EvoMaster & 11.57 & 3.25 & EvoMaster \\
History (cMER-micro) & 10.17 & 3.05 & AHE & 15.22 & 3.84 & EvoMaster \\
Human society (nRMSE) & 11.68 & 2.58 & EvoMaster & 15.04 & 5.27 & EvoScientist \\
Physical sci. (MAE) & 12.20 & 2.15 & EvoScientist & 16.89 & 3.05 & EvoMaster \\
\midrule
Mean over 23 disciplines & 12.25 & 2.58 &  & 16.45 & 4.19 &  \\
Minimum & 7.96 & 1.82 &  & 11.57 & 3.05 &  \\
Maximum & 16.45 & 3.73 &  & 23.34 & 6.21 &  \\
\bottomrule
\end{tabular*}
\end{table*}

\begin{figure*}[!p]
\centering
\includegraphics[width=\textwidth]{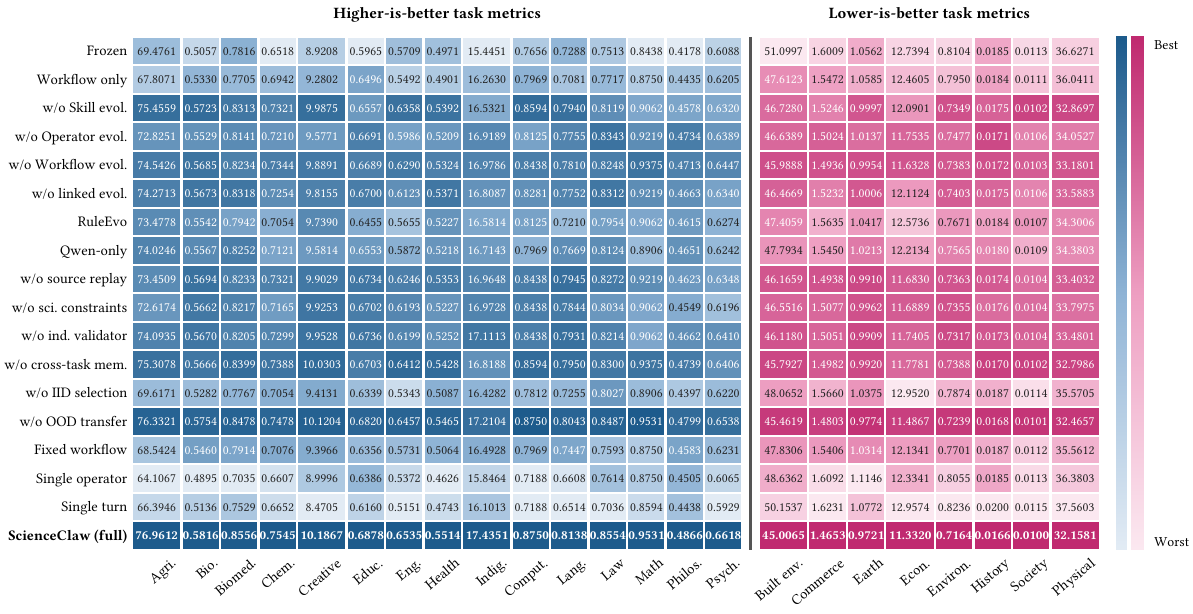}
\caption{IID scores of all eighteen variants in each discipline. Shading runs from the worst (light) to the best (dark) variant within a discipline, and the full system is in bold.}
\label{fig:app-ablation-iid}
\vspace{4pt}
\captionof{table}{Continual evolution of the OOD macro score (\%) over seven rounds with candidate counts over the 161 submitted candidates. The best entry in each column is in bold and the second best is underlined.}\label{tab:app-evolution}
\fontsize{7.6}{9}\selectfont
\setlength{\tabcolsep}{0pt}
\renewcommand{\arraystretch}{1.1}
\renewcommand{\arraystretch}{1.6}
\begin{tabular*}{0.99\textwidth}{@{}l*{8}{@{\hspace{2pt}\extracolsep{\fill}}c}@{\hspace{8pt}\extracolsep{\fill}}c@{\hspace{2pt}\extracolsep{\fill}}c@{\hspace{2pt}\extracolsep{\fill}}c@{\hspace{2pt}\extracolsep{\fill}}c@{}}
\toprule
 & \multicolumn{8}{c}{OOD macro score (\%) after round} & \multicolumn{4}{c}{Round-7 summary} \\
\cmidrule(lr){2-9}\cmidrule(lr){10-13}
Method & 0 & 1 & 2 & 3 & 4 & 5 & 6 & 7 & Gain (pp) & Prom. & Rej. & Rate (\%) \\
\midrule
Frozen & 77.72 & 77.72 & 77.72 & 77.72 & 77.72 & 77.72 & 77.72 & 77.72 & 0.00 & 0 & 0 & -- \\
RuleEvo & 77.72 & 78.40 & 79.21 & 78.87 & 79.62 & 80.37 & 80.03 & 81.32 & 3.60 & 37 & 124 & 22.98 \\
Qwen-only & 77.72 & 79.42 & 80.16 & 79.89 & 81.25 & 81.93 & 81.59 & 83.22 & 5.50 & 49 & 112 & 30.43 \\
EvoScientist & 77.72 & 79.69 & 81.18 & 82.34 & 82.07 & 84.04 & 85.19 & 86.35 & 8.63 & 56 & 105 & 34.78 \\
EvoMaster & 77.72 & \underline{80.03} & \underline{81.59} & \textbf{82.88} & \underline{82.54} & \underline{84.85} & \underline{86.07} & \underline{87.50} & \underline{9.78} & \underline{63} & 98 & \underline{39.13} \\
\midrule
ScienceClaw & 77.72 & \textbf{80.43} & \textbf{82.68} & \underline{82.47} & \textbf{86.14} & \textbf{87.57} & \textbf{89.81} & \textbf{91.30} & \textbf{13.59} & \textbf{70} & 91 & \textbf{43.48} \\
\bottomrule
\end{tabular*}
\vspace{8pt}
\begin{minipage}[t]{\textwidth}\centering
\captionof{table}{Workflow mechanics of the linked system and of the ablated executions. Planner rounds and distinct operators are per task, and the other columns are rates.}\label{tab:app-mechanics}
\fontsize{7.4}{8.7}\selectfont
\setlength{\tabcolsep}{2.4pt}
\renewcommand{\arraystretch}{1.1}
\renewcommand{\arraystretch}{1.6}
\begin{tabular}{@{}lcccccc@{}}
\toprule
System & \shortstack{Plan.\\rounds} & \shortstack{Dist.\\oper.} & \shortstack{Oper.\\divers.} & \shortstack{Feedb.\\repair (\%)} & \shortstack{Ckpt.\\recov.\ (\%)} & \shortstack{Clean\\replay (\%)} \\
\midrule
\textbf{Linked Full} & 4.84 & 4.24 & \textbf{0.78} & \textbf{71} & \textbf{89} & \textbf{96} \\
Unlinked & 4.63 & 3.96 & 0.71 & 62 & 82 & 89 \\
Fixed Workflow & 3.12 & 2.89 & 0.50 & 45 & 73 & 87 \\
Single Operator & 4.07 & 1.07 & 0.20 & 38 & 67 & 84 \\
Single Turn & 1.09 & 2.13 & 0.36 & 0 & 31 & 78 \\
\bottomrule
\end{tabular}
\end{minipage}
\end{figure*}

\begin{figure*}[!p]
\centering
\includegraphics[width=\textwidth]{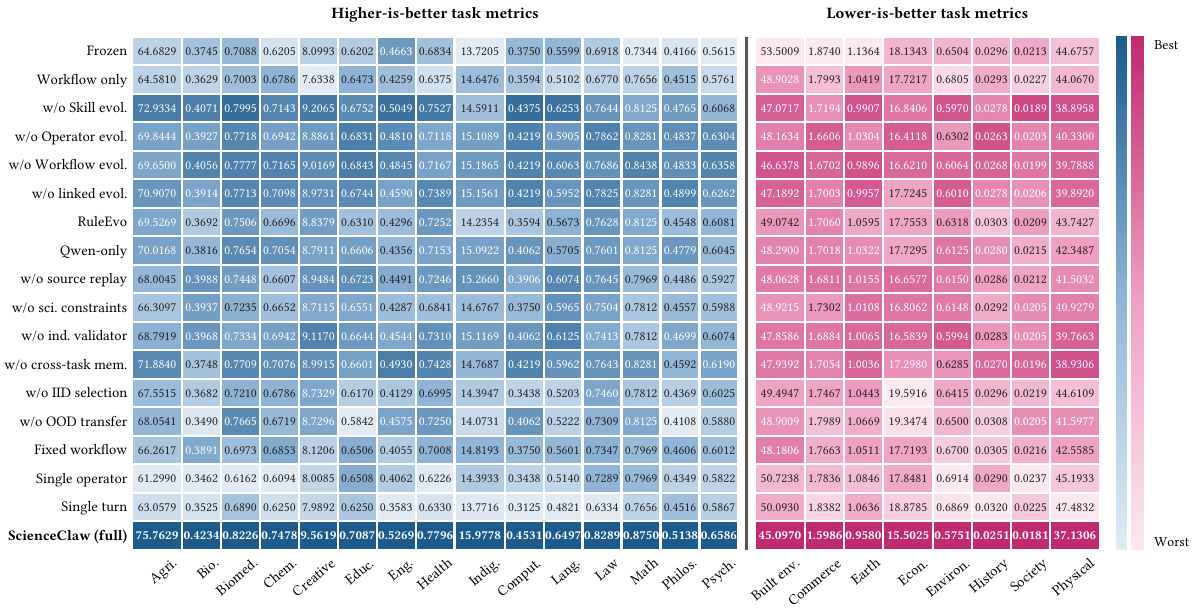}
\caption{OOD scores of all eighteen variants in each discipline, shaded as in Figure~\ref{fig:app-ablation-iid}.}
\label{fig:app-ablation-ood}
\vspace{4pt}
\begin{minipage}[t]{\textwidth}\centering
\captionof{table}{Reliability and efficiency. Cost per OOD gain is normalized to ScienceClaw ($=1$) and is undefined for the frozen agent. The best entry in each column is in bold and the second best is underlined.}\label{tab:app-reliability}
\footnotesize
\setlength{\tabcolsep}{2.4pt}
\renewcommand{\arraystretch}{1.48}
\begin{tabular*}{\textwidth}{@{}l@{\extracolsep{\fill}}rrrrr@{}}
\toprule
Method & \shortstack[r]{Target\\attain.\ $\uparrow$} & \shortstack[r]{Hard-constr.\\pass (\%) $\uparrow$} & \shortstack[r]{Erroneous\\prom.\ (\%) $\downarrow$} & \shortstack[r]{Cost per\\OOD gain $\downarrow$} & \shortstack[r]{Planner\\time (\%) $\downarrow$} \\
\midrule
Frozen & 88.57 & 91.51 & -- & -- & 21.89 \\
Qwen-only & 92.99 & 93.89 & 7.41 & 1.62 & 23.65 \\
SkillOpt & 92.67 & 93.14 & 6.52 & 2.50 & 20.56 \\
TTE & 93.63 & 92.53 & 7.25 & 1.83 & \underline{19.38} \\
AHE & 94.26 & 94.09 & 5.60 & 1.92 & 27.23 \\
EvoMaster & \underline{94.99} & \underline{95.65} & \underline{3.80} & \underline{1.37} & 25.16 \\
EvoScientist & 94.43 & 95.18 & 4.58 & 1.87 & 28.74 \\
\midrule
ScienceClaw & \textbf{96.42} & \textbf{98.23} & \textbf{1.23} & \textbf{1.00} & \textbf{18.40} \\
\bottomrule
\end{tabular*}
\end{minipage}\\[8pt]\begin{minipage}[t]{\textwidth}\centering
\captionof{table}{Transfer, forgetting, and reliability of the evolving methods. FWT and BWT are forward and backward transfer (pp), Neg.\ the share of negative transfer (\%), Forg.\ the average and maximum forgetting (pp), Attain.\ the target-attainment score, and Err.\ the erroneous-promotion rate (\%). The best entry in each column is in bold and the second best is underlined.}\label{tab:app-transfer-rel}
\footnotesize
\setlength{\tabcolsep}{2.4pt}
\renewcommand{\arraystretch}{1.48}
\begin{tabular*}{\textwidth}{@{}l@{\extracolsep{\fill}}ccccc@{}}
\toprule
Method & \shortstack{FWT/BWT\\$\uparrow$} & \shortstack{Neg.\\$\downarrow$} & \shortstack{Forg.\ $\downarrow$\\avg/max} & \shortstack{Attain.\\$\uparrow$} & \shortstack{Err.\\$\downarrow$} \\
\midrule
SkillOpt & 1.42/0.26 & 31.39 & 0.88/2.45 & 92.67 & 6.52 \\
TTE & 1.75/0.41 & 26.97 & 0.76/2.17 & 93.63 & 7.25 \\
EvoScientist & 2.74/1.23 & 17.32 & 0.42/1.23 & 94.43 & 4.58 \\
EvoMaster & \underline{3.19/1.53} & \underline{14.33} & \underline{0.33/1.05} & \underline{94.99} & \underline{3.80} \\
ScienceClaw & \textbf{4.20/3.66} & \textbf{11.89} & \textbf{0.13/0.51} & \textbf{96.42} & \textbf{1.23} \\
\bottomrule
\end{tabular*}
\end{minipage}
\end{figure*}

\begin{table*}[!p]
\centering
\caption{Per-discipline ablation results (1/3): update-scope variants. Each row gives the task-native score of one discipline on one split. The best score in each row is in bold and the second best is underlined, taken over all 18 variants.}
\label{tab:app-ablation-1}
\fontsize{7.4}{8.7}\selectfont
\setlength{\tabcolsep}{0pt}
\renewcommand{\arraystretch}{1.3}
\begin{tabular*}{0.99\textwidth}{@{}>{\raggedright\arraybackslash}p{88pt}@{\hspace{4pt}}l@{\extracolsep{\fill}}cccccc@{}}
\toprule
Discipline (metric) & Split & \multicolumn{1}{c}{\makebox[3.2em][c]{\fontsize{6.1}{6.9}\selectfont\shortstack[c]{Frozen}}} & \multicolumn{1}{c}{\makebox[3.2em][c]{\fontsize{6.1}{6.9}\selectfont\shortstack[c]{Workflow\\only}}} & \multicolumn{1}{c}{\makebox[3.2em][c]{\fontsize{6.1}{6.9}\selectfont\shortstack[c]{w/o Skill\\evol.}}} & \multicolumn{1}{c}{\makebox[3.2em][c]{\fontsize{6.1}{6.9}\selectfont\shortstack[c]{w/o Oper.\\evol.}}} & \multicolumn{1}{c}{\makebox[3.2em][c]{\fontsize{6.1}{6.9}\selectfont\shortstack[c]{w/o Work.\\evol.}}} & \multicolumn{1}{c}{\makebox[3.2em][c]{\fontsize{6.1}{6.9}\selectfont\shortstack[c]{ScienceClaw\\(full)}}} \\
\midrule
\multicolumn{8}{@{}l}{\textbf{Higher is better ($\uparrow$)}} \\
Agricultural sci. (PQ+) & IID & 69.4761 & 67.8071 & 75.4559 & 72.8251 & 74.5426 & \textbf{76.9612} \\
 & OOD & 64.6829 & 64.5810 & \underline{72.9334} & 69.8444 & 69.6500 & \textbf{75.7629} \\
Biological sci. (Spearman) & IID & 0.5057 & 0.5330 & 0.5723 & 0.5529 & 0.5685 & \textbf{0.5816} \\
 & OOD & 0.3745 & 0.3629 & \underline{0.4071} & 0.3927 & 0.4056 & \textbf{0.4234} \\
Biomedical sci. (DSC) & IID & 0.7816 & 0.7705 & 0.8313 & 0.8141 & 0.8234 & \textbf{0.8556} \\
 & OOD & 0.7088 & 0.7003 & \underline{0.7995} & 0.7718 & 0.7777 & \textbf{0.8226} \\
Chemical sci. (ROC-AUC) & IID & 0.6518 & 0.6942 & 0.7321 & 0.7210 & 0.7344 & \textbf{0.7545} \\
 & OOD & 0.6205 & 0.6786 & 0.7143 & 0.6942 & \underline{0.7165} & \textbf{0.7478} \\
Creative arts (SDR (dB)) & IID & 8.9208 & 9.2802 & 9.9875 & 9.5771 & 9.8891 & \textbf{10.1867} \\
 & OOD & 8.0993 & 7.6338 & \underline{9.2065} & 8.8861 & 9.0169 & \textbf{9.5619} \\
Education (10-mask acc.) & IID & 0.5965 & 0.6496 & 0.6557 & 0.6691 & 0.6689 & \textbf{0.6878} \\
 & OOD & 0.6202 & 0.6473 & 0.6752 & 0.6831 & \underline{0.6843} & \textbf{0.7087} \\
Engineering (DCASE score) & IID & 0.5709 & 0.5492 & 0.6358 & 0.5986 & 0.6290 & \textbf{0.6535} \\
 & OOD & 0.4663 & 0.4259 & \underline{0.5049} & 0.4810 & 0.4845 & \textbf{0.5269} \\
Health sci. (Clin. utility) & IID & 0.4971 & 0.4901 & 0.5392 & 0.5209 & 0.5324 & \textbf{0.5514} \\
 & OOD & 0.6834 & 0.6375 & \underline{0.7527} & 0.7118 & 0.7167 & \textbf{0.7796} \\
Indigenous (chrF++) & IID & 15.4451 & 16.2630 & 16.5321 & 16.9189 & 16.9786 & \textbf{17.4351} \\
 & OOD & 13.7205 & 14.6476 & 14.5911 & 15.1089 & 15.1865 & \textbf{15.9778} \\
Computing sci. (pass@1) & IID & 0.7656 & 0.7969 & \underline{0.8594} & 0.8125 & 0.8438 & \textbf{0.8750} \\
 & OOD & 0.3750 & 0.3594 & \underline{0.4375} & 0.4219 & 0.4219 & \textbf{0.4531} \\
Language \& culture (LAS) & IID & 0.7288 & 0.7081 & 0.7940 & 0.7755 & 0.7810 & \textbf{0.8138} \\
 & OOD & 0.5599 & 0.5102 & \underline{0.6253} & 0.5905 & 0.6063 & \textbf{0.6497} \\
Law (mAP) & IID & 0.7513 & 0.7717 & 0.8119 & 0.8343 & 0.8248 & \textbf{0.8554} \\
 & OOD & 0.6918 & 0.6770 & 0.7644 & \underline{0.7862} & 0.7686 & \textbf{0.8289} \\
Mathematics (Oracle acc.) & IID & 0.8438 & 0.8750 & 0.9062 & 0.9219 & \underline{0.9375} & \textbf{0.9531} \\
 & OOD & 0.7344 & 0.7656 & 0.8125 & 0.8281 & \underline{0.8438} & \textbf{0.8750} \\
Philosophy (F1) & IID & 0.4178 & 0.4435 & 0.4578 & 0.4734 & 0.4713 & \textbf{0.4866} \\
 & OOD & 0.4166 & 0.4515 & 0.4765 & 0.4837 & 0.4833 & \textbf{0.5138} \\
Psychology (Micro acc.) & IID & 0.6088 & 0.6205 & 0.6320 & 0.6389 & 0.6447 & \textbf{0.6618} \\
 & OOD & 0.5615 & 0.5761 & 0.6068 & 0.6304 & \underline{0.6358} & \textbf{0.6586} \\
\midrule
\multicolumn{8}{@{}l}{\textbf{Lower is better ($\downarrow$)}} \\
Built env. (NRMSE \%) & IID & 51.0997 & 47.6123 & 46.7280 & 46.6389 & 45.9888 & \textbf{45.0065} \\
 & OOD & 53.5009 & 48.9028 & 47.0717 & 48.1634 & \underline{46.6378} & \textbf{45.0970} \\
Commerce (MASE) & IID & 1.6009 & 1.5472 & 1.5246 & 1.5024 & 1.4936 & \textbf{1.4653} \\
 & OOD & 1.8740 & 1.7993 & 1.7194 & \underline{1.6606} & 1.6702 & \textbf{1.5986} \\
Earth sci. (RMSE (K)) & IID & 1.0562 & 1.0585 & 0.9997 & 1.0137 & 0.9954 & \textbf{0.9721} \\
 & OOD & 1.1364 & 1.0419 & 0.9907 & 1.0304 & \underline{0.9896} & \textbf{0.9580} \\
Economics (sMAPE (\%)) & IID & 12.7394 & 12.4605 & 12.0901 & 11.7535 & 11.6328 & \textbf{11.3320} \\
 & OOD & 18.1343 & 17.7217 & 16.8406 & \underline{16.4118} & 16.6210 & \textbf{15.5025} \\
Environmental sci. (CRPS) & IID & 0.8104 & 0.7950 & 0.7349 & 0.7477 & 0.7383 & \textbf{0.7164} \\
 & OOD & 0.6504 & 0.6805 & \underline{0.5970} & 0.6302 & 0.6064 & \textbf{0.5751} \\
History (cMER-micro) & IID & 0.0185 & 0.0184 & 0.0175 & 0.0171 & 0.0172 & \textbf{0.0166} \\
 & OOD & 0.0296 & 0.0293 & 0.0278 & \underline{0.0263} & 0.0268 & \textbf{0.0251} \\
Human society (nRMSE) & IID & 0.0113 & 0.0111 & 0.0102 & 0.0106 & 0.0103 & \textbf{0.0100} \\
 & OOD & 0.0213 & 0.0227 & \underline{0.0189} & 0.0203 & 0.0199 & \textbf{0.0181} \\
Physical sci. (MAE) & IID & 36.6271 & 36.0411 & 32.8697 & 34.0527 & 33.1801 & \textbf{32.1581} \\
 & OOD & 44.6757 & 44.0670 & \underline{38.8958} & 40.3300 & 39.7888 & \textbf{37.1306} \\
\bottomrule
\end{tabular*}
\end{table*}

\begin{table*}[!p]
\centering
\caption{Per-discipline ablation results (2/3): linkage, candidate-source, and gate variants. Each row gives the task-native score of one discipline on one split. The best score in each row is in bold and the second best is underlined, taken over all 18 variants.}
\label{tab:app-ablation-2}
\fontsize{7.4}{8.7}\selectfont
\setlength{\tabcolsep}{0pt}
\renewcommand{\arraystretch}{1.3}
\begin{tabular*}{0.99\textwidth}{@{}>{\raggedright\arraybackslash}p{88pt}@{\hspace{4pt}}l@{\extracolsep{\fill}}cccccccc@{}}
\toprule
Discipline (metric) & Split & \multicolumn{1}{c}{\makebox[3.2em][c]{\fontsize{6.1}{6.9}\selectfont\shortstack[c]{Frozen}}} & \multicolumn{1}{c}{\makebox[3.2em][c]{\fontsize{6.1}{6.9}\selectfont\shortstack[c]{w/o link.\\evol.}}} & \multicolumn{1}{c}{\makebox[3.2em][c]{\fontsize{6.1}{6.9}\selectfont\shortstack[c]{RuleEvo}}} & \multicolumn{1}{c}{\makebox[3.2em][c]{\fontsize{6.1}{6.9}\selectfont\shortstack[c]{Qwen-only}}} & \multicolumn{1}{c}{\makebox[3.2em][c]{\fontsize{6.1}{6.9}\selectfont\shortstack[c]{w/o src.\\replay}}} & \multicolumn{1}{c}{\makebox[3.2em][c]{\fontsize{6.1}{6.9}\selectfont\shortstack[c]{w/o sci.\\constr.}}} & \multicolumn{1}{c}{\makebox[3.2em][c]{\fontsize{6.1}{6.9}\selectfont\shortstack[c]{w/o ind.\\valid.}}} & \multicolumn{1}{c}{\makebox[3.2em][c]{\fontsize{6.1}{6.9}\selectfont\shortstack[c]{ScienceClaw\\(full)}}} \\
\midrule
\multicolumn{10}{@{}l}{\textbf{Higher is better ($\uparrow$)}} \\
Agricultural sci. (PQ+) & IID & 69.4761 & 74.2713 & 73.4778 & 74.0246 & 73.4509 & 72.6174 & 74.0935 & \textbf{76.9612} \\
 & OOD & 64.6829 & 70.9070 & 69.5269 & 70.0168 & 68.0045 & 66.3097 & 68.7919 & \textbf{75.7629} \\
Biological sci. (Spearman) & IID & 0.5057 & 0.5673 & 0.5542 & 0.5567 & 0.5694 & 0.5662 & 0.5670 & \textbf{0.5816} \\
 & OOD & 0.3745 & 0.3914 & 0.3692 & 0.3816 & 0.3988 & 0.3937 & 0.3968 & \textbf{0.4234} \\
Biomedical sci. (DSC) & IID & 0.7816 & 0.8318 & 0.7942 & 0.8252 & 0.8233 & 0.8217 & 0.8205 & \textbf{0.8556} \\
 & OOD & 0.7088 & 0.7713 & 0.7506 & 0.7654 & 0.7448 & 0.7235 & 0.7334 & \textbf{0.8226} \\
Chemical sci. (ROC-AUC) & IID & 0.6518 & 0.7254 & 0.7054 & 0.7121 & 0.7321 & 0.7165 & 0.7299 & \textbf{0.7545} \\
 & OOD & 0.6205 & 0.7098 & 0.6696 & 0.7054 & 0.6607 & 0.6652 & 0.6942 & \textbf{0.7478} \\
Creative arts (SDR (dB)) & IID & 8.9208 & 9.8155 & 9.7390 & 9.5814 & 9.9029 & 9.9253 & 9.9528 & \textbf{10.1867} \\
 & OOD & 8.0993 & 8.9731 & 8.8379 & 8.7911 & 8.9484 & 8.7115 & 9.1170 & \textbf{9.5619} \\
Education (10-mask acc.) & IID & 0.5965 & 0.6700 & 0.6455 & 0.6553 & 0.6734 & 0.6702 & 0.6736 & \textbf{0.6878} \\
 & OOD & 0.6202 & 0.6744 & 0.6310 & 0.6606 & 0.6723 & 0.6551 & 0.6644 & \textbf{0.7087} \\
Engineering (DCASE score) & IID & 0.5709 & 0.6123 & 0.5655 & 0.5872 & 0.6246 & 0.6193 & 0.6199 & \textbf{0.6535} \\
 & OOD & 0.4663 & 0.4590 & 0.4296 & 0.4356 & 0.4491 & 0.4287 & 0.4544 & \textbf{0.5269} \\
Health sci. (Clin. utility) & IID & 0.4971 & 0.5371 & 0.5227 & 0.5218 & 0.5353 & 0.5227 & 0.5252 & \textbf{0.5514} \\
 & OOD & 0.6834 & 0.7389 & 0.7252 & 0.7153 & 0.7246 & 0.6841 & 0.7310 & \textbf{0.7796} \\
Indigenous (chrF++) & IID & 15.4451 & 16.8087 & 16.5814 & 16.7143 & 16.9648 & 16.9728 & 17.1113 & \textbf{17.4351} \\
 & OOD & 13.7205 & 15.1561 & 14.2354 & 15.0922 & \underline{15.2660} & 14.6767 & 15.1169 & \textbf{15.9778} \\
Computing sci. (pass@1) & IID & 0.7656 & 0.8281 & 0.8125 & 0.7969 & 0.8438 & 0.8438 & 0.8438 & \textbf{0.8750} \\
 & OOD & 0.3750 & 0.4219 & 0.3594 & 0.4062 & 0.3906 & 0.3750 & 0.4062 & \textbf{0.4531} \\
Language \& culture (LAS) & IID & 0.7288 & 0.7752 & 0.7210 & 0.7669 & 0.7945 & 0.7844 & 0.7931 & \textbf{0.8138} \\
 & OOD & 0.5599 & 0.5952 & 0.5673 & 0.5705 & 0.6074 & 0.5965 & 0.6125 & \textbf{0.6497} \\
Law (mAP) & IID & 0.7513 & 0.8312 & 0.7954 & 0.8124 & 0.8272 & 0.8034 & 0.8214 & \textbf{0.8554} \\
 & OOD & 0.6918 & 0.7825 & 0.7628 & 0.7601 & 0.7645 & 0.7504 & 0.7413 & \textbf{0.8289} \\
Mathematics (Oracle acc.) & IID & 0.8438 & 0.9219 & 0.9062 & 0.8906 & 0.9219 & 0.9062 & 0.9062 & \textbf{0.9531} \\
 & OOD & 0.7344 & 0.8281 & 0.8125 & 0.8125 & 0.7969 & 0.7812 & 0.7812 & \textbf{0.8750} \\
Philosophy (F1) & IID & 0.4178 & 0.4663 & 0.4615 & 0.4651 & 0.4623 & 0.4549 & 0.4662 & \textbf{0.4866} \\
 & OOD & 0.4166 & \underline{0.4899} & 0.4548 & 0.4779 & 0.4486 & 0.4557 & 0.4699 & \textbf{0.5138} \\
Psychology (Micro acc.) & IID & 0.6088 & 0.6340 & 0.6274 & 0.6242 & 0.6348 & 0.6196 & 0.6410 & \textbf{0.6618} \\
 & OOD & 0.5615 & 0.6262 & 0.6081 & 0.6045 & 0.5927 & 0.5988 & 0.6074 & \textbf{0.6586} \\
\midrule
\multicolumn{10}{@{}l}{\textbf{Lower is better ($\downarrow$)}} \\
Built env. (NRMSE \%) & IID & 51.0997 & 46.4669 & 47.4059 & 47.7934 & 46.1659 & 46.5516 & 46.1180 & \textbf{45.0065} \\
 & OOD & 53.5009 & 47.1892 & 49.0742 & 48.2900 & 48.0628 & 48.9215 & 47.8586 & \textbf{45.0970} \\
Commerce (MASE) & IID & 1.6009 & 1.5232 & 1.5635 & 1.5450 & 1.4938 & 1.5077 & 1.5051 & \textbf{1.4653} \\
 & OOD & 1.8740 & 1.7003 & 1.7060 & 1.7018 & 1.6811 & 1.7302 & 1.6884 & \textbf{1.5986} \\
Earth sci. (RMSE (K)) & IID & 1.0562 & 1.0006 & 1.0417 & 1.0213 & 0.9910 & 0.9962 & 0.9909 & \textbf{0.9721} \\
 & OOD & 1.1364 & 0.9957 & 1.0595 & 1.0322 & 1.0155 & 1.0108 & 1.0065 & \textbf{0.9580} \\
Economics (sMAPE (\%)) & IID & 12.7394 & 12.1124 & 12.5736 & 12.2134 & 11.6830 & 11.6889 & 11.7405 & \textbf{11.3320} \\
 & OOD & 18.1343 & 17.7245 & 17.7553 & 17.7295 & 16.6577 & 16.8062 & 16.5839 & \textbf{15.5025} \\
Environmental sci. (CRPS) & IID & 0.8104 & 0.7403 & 0.7671 & 0.7565 & 0.7363 & 0.7355 & 0.7317 & \textbf{0.7164} \\
 & OOD & 0.6504 & 0.6010 & 0.6318 & 0.6125 & 0.6150 & 0.6148 & 0.5994 & \textbf{0.5751} \\
History (cMER-micro) & IID & 0.0185 & 0.0175 & 0.0184 & 0.0180 & 0.0174 & 0.0176 & 0.0173 & \textbf{0.0166} \\
 & OOD & 0.0296 & 0.0278 & 0.0303 & 0.0280 & 0.0286 & 0.0292 & 0.0283 & \textbf{0.0251} \\
Human society (nRMSE) & IID & 0.0113 & 0.0106 & 0.0107 & 0.0109 & 0.0104 & 0.0104 & 0.0104 & \textbf{0.0100} \\
 & OOD & 0.0213 & 0.0206 & 0.0209 & 0.0215 & 0.0212 & 0.0205 & 0.0205 & \textbf{0.0181} \\
Physical sci. (MAE) & IID & 36.6271 & 33.5883 & 34.3006 & 34.3803 & 33.4032 & 33.7975 & 33.4801 & \textbf{32.1581} \\
 & OOD & 44.6757 & 39.8920 & 43.7427 & 42.3487 & 41.5032 & 40.9279 & 39.7663 & \textbf{37.1306} \\
\bottomrule
\end{tabular*}
\end{table*}

\begin{table*}[!p]
\centering
\caption{Per-discipline ablation results (3/3): memory, selection, transfer, and execution variants. Each row gives the task-native score of one discipline on one split. The best score in each row is in bold and the second best is underlined, taken over all 18 variants.}
\label{tab:app-ablation-3}
\fontsize{7.4}{8.7}\selectfont
\setlength{\tabcolsep}{0pt}
\renewcommand{\arraystretch}{1.3}
\begin{tabular*}{0.99\textwidth}{@{}>{\raggedright\arraybackslash}p{88pt}@{\hspace{4pt}}l@{\extracolsep{\fill}}cccccccc@{}}
\toprule
Discipline (metric) & Split & \multicolumn{1}{c}{\makebox[3.2em][c]{\fontsize{6.1}{6.9}\selectfont\shortstack[c]{Frozen}}} & \multicolumn{1}{c}{\makebox[3.2em][c]{\fontsize{6.1}{6.9}\selectfont\shortstack[c]{w/o cross-\\task mem.}}} & \multicolumn{1}{c}{\makebox[3.2em][c]{\fontsize{6.1}{6.9}\selectfont\shortstack[c]{w/o IID\\select.}}} & \multicolumn{1}{c}{\makebox[3.2em][c]{\fontsize{6.1}{6.9}\selectfont\shortstack[c]{w/o OOD\\transfer}}} & \multicolumn{1}{c}{\makebox[3.2em][c]{\fontsize{6.1}{6.9}\selectfont\shortstack[c]{Fixed\\workflow}}} & \multicolumn{1}{c}{\makebox[3.2em][c]{\fontsize{6.1}{6.9}\selectfont\shortstack[c]{Single\\operator}}} & \multicolumn{1}{c}{\makebox[3.2em][c]{\fontsize{6.1}{6.9}\selectfont\shortstack[c]{Single\\turn}}} & \multicolumn{1}{c}{\makebox[3.2em][c]{\fontsize{6.1}{6.9}\selectfont\shortstack[c]{ScienceClaw\\(full)}}} \\
\midrule
\multicolumn{10}{@{}l}{\textbf{Higher is better ($\uparrow$)}} \\
Agricultural sci. (PQ+) & IID & 69.4761 & 75.3078 & 69.6171 & \underline{76.3321} & 68.5424 & 64.1067 & 66.3946 & \textbf{76.9612} \\
 & OOD & 64.6829 & 71.8840 & 67.5515 & 68.0541 & 66.2617 & 61.2990 & 63.0579 & \textbf{75.7629} \\
Biological sci. (Spearman) & IID & 0.5057 & 0.5666 & 0.5282 & \underline{0.5754} & 0.5460 & 0.4895 & 0.5136 & \textbf{0.5816} \\
 & OOD & 0.3745 & 0.3748 & 0.3682 & 0.3490 & 0.3891 & 0.3462 & 0.3525 & \textbf{0.4234} \\
Biomedical sci. (DSC) & IID & 0.7816 & 0.8399 & 0.7767 & \underline{0.8478} & 0.7914 & 0.7035 & 0.7529 & \textbf{0.8556} \\
 & OOD & 0.7088 & 0.7709 & 0.7210 & 0.7665 & 0.6973 & 0.6162 & 0.6890 & \textbf{0.8226} \\
Chemical sci. (ROC-AUC) & IID & 0.6518 & 0.7388 & 0.7054 & \underline{0.7478} & 0.7076 & 0.6607 & 0.6652 & \textbf{0.7545} \\
 & OOD & 0.6205 & 0.7076 & 0.6786 & 0.6719 & 0.6853 & 0.6094 & 0.6250 & \textbf{0.7478} \\
Creative arts (SDR (dB)) & IID & 8.9208 & 10.0303 & 9.4131 & \underline{10.1204} & 9.3966 & 8.9996 & 8.4705 & \textbf{10.1867} \\
 & OOD & 8.0993 & 8.9915 & 8.7329 & 8.7296 & 8.1206 & 8.0085 & 7.9892 & \textbf{9.5619} \\
Education (10-mask acc.) & IID & 0.5965 & 0.6703 & 0.6339 & \underline{0.6820} & 0.6356 & 0.6386 & 0.6160 & \textbf{0.6878} \\
 & OOD & 0.6202 & 0.6601 & 0.6170 & 0.5842 & 0.6506 & 0.6508 & 0.6250 & \textbf{0.7087} \\
Engineering (DCASE score) & IID & 0.5709 & 0.6412 & 0.5343 & \underline{0.6457} & 0.5731 & 0.5372 & 0.5151 & \textbf{0.6535} \\
 & OOD & 0.4663 & 0.4930 & 0.4129 & 0.4575 & 0.4055 & 0.4062 & 0.3583 & \textbf{0.5269} \\
Health sci. (Clin. utility) & IID & 0.4971 & 0.5428 & 0.5087 & \underline{0.5465} & 0.5064 & 0.4626 & 0.4743 & \textbf{0.5514} \\
 & OOD & 0.6834 & 0.7428 & 0.6995 & 0.7250 & 0.7008 & 0.6226 & 0.6330 & \textbf{0.7796} \\
Indigenous (chrF++) & IID & 15.4451 & 16.8188 & 16.4282 & \underline{17.2104} & 16.4928 & 15.8464 & 16.1013 & \textbf{17.4351} \\
 & OOD & 13.7205 & 14.7687 & 14.3947 & 14.0731 & 14.8193 & 14.3933 & 13.7716 & \textbf{15.9778} \\
Computing sci. (pass@1) & IID & 0.7656 & \underline{0.8594} & 0.7812 & \textbf{0.8750} & 0.7969 & 0.7188 & 0.7188 & \textbf{0.8750} \\
 & OOD & 0.3750 & 0.4219 & 0.3438 & 0.4062 & 0.3750 & 0.3438 & 0.3125 & \textbf{0.4531} \\
Language \& culture (LAS) & IID & 0.7288 & 0.7950 & 0.7255 & \underline{0.8043} & 0.7447 & 0.6608 & 0.6514 & \textbf{0.8138} \\
 & OOD & 0.5599 & 0.5962 & 0.5203 & 0.5222 & 0.5601 & 0.5140 & 0.4821 & \textbf{0.6497} \\
Law (mAP) & IID & 0.7513 & 0.8300 & 0.8027 & \underline{0.8487} & 0.7593 & 0.7614 & 0.7036 & \textbf{0.8554} \\
 & OOD & 0.6918 & 0.7643 & 0.7460 & 0.7309 & 0.7347 & 0.7289 & 0.6334 & \textbf{0.8289} \\
Mathematics (Oracle acc.) & IID & 0.8438 & \underline{0.9375} & 0.8906 & \textbf{0.9531} & 0.8750 & 0.8750 & 0.8594 & \textbf{0.9531} \\
 & OOD & 0.7344 & 0.8281 & 0.7812 & 0.8125 & 0.7969 & 0.7969 & 0.7656 & \textbf{0.8750} \\
Philosophy (F1) & IID & 0.4178 & 0.4739 & 0.4397 & \underline{0.4799} & 0.4583 & 0.4505 & 0.4438 & \textbf{0.4866} \\
 & OOD & 0.4166 & 0.4592 & 0.4369 & 0.4108 & 0.4606 & 0.4349 & 0.4516 & \textbf{0.5138} \\
Psychology (Micro acc.) & IID & 0.6088 & 0.6406 & 0.6220 & \underline{0.6538} & 0.6231 & 0.6065 & 0.5929 & \textbf{0.6618} \\
 & OOD & 0.5615 & 0.6190 & 0.6025 & 0.5880 & 0.6012 & 0.5822 & 0.5867 & \textbf{0.6586} \\
\midrule
\multicolumn{10}{@{}l}{\textbf{Lower is better ($\downarrow$)}} \\
Built env. (NRMSE \%) & IID & 51.0997 & 45.7927 & 48.0652 & \underline{45.4619} & 47.8306 & 48.6362 & 50.1537 & \textbf{45.0065} \\
 & OOD & 53.5009 & 47.9392 & 49.4947 & 48.9009 & 48.1806 & 50.7238 & 50.0930 & \textbf{45.0970} \\
Commerce (MASE) & IID & 1.6009 & 1.4982 & 1.5660 & \underline{1.4803} & 1.5406 & 1.6092 & 1.6231 & \textbf{1.4653} \\
 & OOD & 1.8740 & 1.7054 & 1.7467 & 1.7989 & 1.7663 & 1.7836 & 1.8382 & \textbf{1.5986} \\
Earth sci. (RMSE (K)) & IID & 1.0562 & 0.9920 & 1.0375 & \underline{0.9774} & 1.0314 & 1.1146 & 1.0772 & \textbf{0.9721} \\
 & OOD & 1.1364 & 1.0036 & 1.0443 & 1.0669 & 1.0511 & 1.0846 & 1.0636 & \textbf{0.9580} \\
Economics (sMAPE (\%)) & IID & 12.7394 & 11.7781 & 12.9520 & \underline{11.4867} & 12.1341 & 12.3341 & 12.9574 & \textbf{11.3320} \\
 & OOD & 18.1343 & 17.2980 & 19.5916 & 19.3474 & 17.7193 & 17.8481 & 18.8785 & \textbf{15.5025} \\
Environmental sci. (CRPS) & IID & 0.8104 & 0.7388 & 0.7874 & \underline{0.7239} & 0.7701 & 0.8055 & 0.8236 & \textbf{0.7164} \\
 & OOD & 0.6504 & 0.6285 & 0.6415 & 0.6500 & 0.6700 & 0.6914 & 0.6869 & \textbf{0.5751} \\
History (cMER-micro) & IID & 0.0185 & 0.0170 & 0.0187 & \underline{0.0168} & 0.0187 & 0.0185 & 0.0200 & \textbf{0.0166} \\
 & OOD & 0.0296 & 0.0270 & 0.0296 & 0.0308 & 0.0305 & 0.0290 & 0.0320 & \textbf{0.0251} \\
Human society (nRMSE) & IID & 0.0113 & 0.0102 & 0.0114 & \underline{0.0101} & 0.0112 & 0.0113 & 0.0115 & \textbf{0.0100} \\
 & OOD & 0.0213 & 0.0196 & 0.0219 & 0.0205 & 0.0216 & 0.0237 & 0.0225 & \textbf{0.0181} \\
Physical sci. (MAE) & IID & 36.6271 & 32.7986 & 35.5705 & \underline{32.4657} & 35.5612 & 36.3803 & 37.5603 & \textbf{32.1581} \\
 & OOD & 44.6757 & 38.9306 & 44.6109 & 41.5977 & 42.5585 & 45.1933 & 47.4832 & \textbf{37.1306} \\
\bottomrule
\end{tabular*}
\end{table*}

\end{document}